\newif
\ifdraftmode
\draftmodefalse

\ifdraftmode
\else
\fi	
\RequirePackage{fix-cm}
\documentclass[reqno]{amsart}
\usepackage[margin=1.5in,bottom=1.25in]{geometry}	

\usepackage{amssymb}	
\usepackage{amsfonts}	
\usepackage{amsthm}	
\usepackage[foot]{amsaddr}	

\usepackage{mathtools}	
\mathtoolsset{%
centercolon=true,
}

\usepackage[
cal=cm,
]
{mathalfa}

\usepackage{dsfont}	

\usepackage[proportional,tabular,lining,sf,mono=false]{libertine}

\usepackage[scaled=.95]{AlegreyaSans}

\usepackage[T1]{fontenc}	

\usepackage{acronym}	
\newcommand{\acdef}[1]{\define{\acl{#1}} \textup{(\acs{#1})}\acused{#1}}	

\usepackage[labelfont={bf,small},labelsep=colon,font=small]{caption}	

\usepackage{subcaption}	
\usepackage[svgnames]{xcolor}	

\definecolor{Bonfire}{HTML}{9E162E}
\definecolor{CardinalRed}{HTML}{C41E3A}
\definecolor{TuckOrange}{HTML}{D94415}

\definecolor{CadmiumGreen}{HTML}{097969}
\definecolor{Dartmouth}{HTML}{00693E}
\definecolor{ForestGreen}{HTML}{12312B}
\definecolor{RichForestGreen}{HTML}{0D1E1C}
\definecolor{SeaGreen}{HTML}{2E8B57}
\definecolor{SpringGreen}{HTML}{EAFAF1}
\definecolor{Jade}{HTML}{009900}

\definecolor{CobaltBlue}{HTML}{0047AB}
\definecolor{NavyBlue}{HTML}{000080}
\definecolor{RiverBlue}{HTML}{267ABA}
\definecolor{RiverNavy}{HTML}{003C73}
\definecolor{KleinBlue}{HTML}{002FA7}
\definecolor{OxfordBlue}{HTML}{002147}
\definecolor{SapphireBlue}{HTML}{0F52BA}
\definecolor{Zaffre}{HTML}{0819A8}

\colorlet{MyRed}{CardinalRed}
\colorlet{MyGreen}{Dartmouth}
\colorlet{MyBlue}{DodgerBlue}
\colorlet{MyViolet}{DarkMagenta}

\colorlet{MyDarkRed}{red!70!black}

\colorlet{MyLightRed}{MyRed!25}
\colorlet{MyLightGreen}{MyGreen!25}
\colorlet{MyLightBlue}{MyBlue!25}

\colorlet{PrimalColor}{MyBlue}
\colorlet{PrimalFill}{PrimalColor!25}
\colorlet{DualColor}{MyRed}

\colorlet{AlertColor}{MyRed}	
\colorlet{BadColor}{MyRed}	
\colorlet{GoodColor}{MyGreen}	
\colorlet{LinkColor}{MediumBlue}	
\colorlet{RevColor}{MediumBlue}	

\ifdraftmode
	\colorlet{DraftColor}{MyRed}	
\else
	\colorlet{DraftColor}{black}	
\fi

\newcommand{\afterhead}{.}	
\newcommand{\para}[1]{\smallskip\paragraph{\textbf{#1\afterhead}}}	

\usepackage{latexsym}	
\usepackage{fontawesome}	
\usepackage{pifont}	

\newcommand{\asterism}{\ding{70}}

\usepackage{tikz}	
\usetikzlibrary{calc,patterns,arrows.meta,positioning}	

\usepackage{array}	
\usepackage{booktabs}	
\usepackage[inline,shortlabels]{enumitem}	
\setlist[1]{topsep=\smallskipamount,itemsep=\smallskipamount,left=\parindent}
\setlist[2]{left=0pt}

\usepackage[kerning=true]{microtype}	

\usepackage{csquotes}
\usepackage{float}

\usepackage{balance}	
\usepackage{tabto}	
\usepackage{xspace}	

\usepackage[sort&compress,numbers]{natbib}	

\bibpunct[, ]{[}{]}{,}{}{,}{,}
\setcitestyle{numbers,square,comma}

\usepackage{hyperref}
\hypersetup{
final,
colorlinks=true,
linktocpage=true,
pdfstartview=FitH,
breaklinks=true,
pdfpagemode=UseNone,
pageanchor=true,
pdfpagemode=UseOutlines,
plainpages=false,
bookmarksnumbered,
bookmarksopen=false,
bookmarksopenlevel=1,
hypertexnames=false,
pdfhighlight=/O,
urlcolor=LinkColor,linkcolor=LinkColor,citecolor=LinkColor,	
pdftitle={Bounded Regret in General-Sum Games via Higher-Order Optimism},
pdfauthor={},
pdfsubject={},
pdfkeywords={},
pdfcreator={pdfLaTeX},
pdfproducer={LaTeX with hyperref}
}

\usepackage[sort&compress,capitalize,nameinlink,noabbrev]{cleveref}	

\crefname{algo}{Algorithm}{Algorithms}
\crefname{assumption}{Assumption}{Assumptions}

\makeatletter
\AtBeginDocument
{
	\def\ltx@label#1{\cref@label{#1}}	
	\def\label@in@display@noarg#1{\cref@old@label@in@display{#1}}	
}%
\makeatother

\usepackage{algorithm}	
\usepackage{algpseudocode}	

\usepackage[most]{tcolorbox}

\theoremstyle{plain}
\newtheorem{theorem}{Theorem}	
\newtheorem{corollary}{Corollary}	
\newtheorem{lemma}{Lemma}	
\newtheorem{proposition}{Proposition}	

\newtheorem*{theorem*}{Theorem}	
\newtheorem*{corollary*}{Corollary}	

\theoremstyle{definition}

\newtheorem*{definition*}{Definition}	
\newtheorem*{assumption*}{Assumptions}	
\newtheorem*{example*}{Example}	

\theoremstyle{remark}
\newtheorem{remark}{Remark}	

\newtheorem*{remark*}{Remark}	
\newtheorem*{notation*}{Notation}	

\def\endenv{\hfill\asterism}	

\newcounter{proofstep}

\numberwithin{example}{section}	

\ifdraftmode
	\usepackage[showdeletions]{color-edits}	
\else
	\usepackage[suppress]{color-edits}	
\fi

\ifdraftmode
	\newcommand{\draft}[1]{{\color{DraftColor}#1}}	
\else
	\newcommand{\draft}[1]{#1}	
\fi

\newcommand{\define}[1]{\emph{\draft{#1}}}	

\newcommand{\newmacro}[2]{\newcommand{#1}{\draft{#2}}}	
\newcommand{\newop}[2]{\DeclareMathOperator{#1}{\draft{#2}}}	
\newcommand{\newoplims}[2]{\DeclareMathOperator*{#1}{\draft{#2}}}	

\newcommand{\eps}{\varepsilon}	

\DeclarePairedDelimiter{\bracks}{[}{]}	

\DeclarePairedDelimiterX{\setdef}[2]{\{}{\}}{#1:#2}	
\DeclarePairedDelimiterXPP{\exclude}[1]{\mathopen{}\setminus}{\{}{\}}{}{#1}	

\DeclarePairedDelimiterX{\braket}[2]{\langle}{\rangle}{#1,#2}	
\DeclarePairedDelimiterX{\inner}[2]{\langle}{\rangle}{#1,#2}	

\DeclarePairedDelimiter{\norm}{\lVert}{\rVert}	
\DeclarePairedDelimiterXPP{\dnorm}[1]{}{\lVert}{\rVert}{_{\draft{\ast}}}{#1}	
\DeclarePairedDelimiterXPP{\onenorm}[1]{}{\lVert}{\rVert}{_{\draft{1}}}{#1}	
\DeclarePairedDelimiterXPP{\twonorm}[1]{}{\lVert}{\rVert}{_{\draft{2}}}{#1}	
\DeclarePairedDelimiterXPP{\pnorm}[1]{}{\lVert}{\rVert}{_{\draft{p}}}{#1}	
\DeclarePairedDelimiterXPP{\qnorm}[1]{}{\lVert}{\rVert}{_{\draft{q}}}{#1}	
\DeclarePairedDelimiterXPP{\supnorm}[1]{}{\lVert}{\rVert}{_{\draft{\infty}}}{#1}	
\DeclarePairedDelimiterXPP{\matnorm}[1]{}{\lVert}{\rVert}{_{\draft{F}}}{#1}	

\newop{\defeq}{\coloneqq}	
\newop{\eqdef}{\eqqcolon}	

\newmacro{\from}{\colon}	
\newop{\too}{\rightrightarrows}	
\newop{\injects}{\hookrightarrow}	
\newop{\surjects}{\twoheadrightarrow}	

\newmacro{\N}{\mathbb{N}}	
\newmacro{\Z}{\mathbb{Z}}	
\newmacro{\Q}{\mathbb{Q}}	
\newmacro{\R}{\mathbb{R}}	
\newmacro{\C}{\mathbb{C}}	

\newoplims{\argmax}{arg\,max}	
\newoplims{\argmin}{arg\,min}	
\newoplims{\intersect}{\bigcap}	
\newoplims{\union}{\bigcup}	

\newop{\aff}{aff}	
\newop{\bd}{bd}	
\newop{\bigoh}{\mathcal{O}}	
\newop{\tildeoh}{\mathcal{\tilde O}}	
\newop{\baroh}{\mathcal{\bar O}}	
\newmacro{\littleoh}{o}	
\newop{\card}{\#}	
\newop{\cl}{cl}	
\newop{\conv}{conv}	
\newop{\crit}{crit}	
\newop{\curl}{curl}	
\newop{\diag}{Diag}	
\newop{\diam}{diam}	
\newop{\dist}{dist}	
\newop{\diver}{div}	
\newop{\dom}{dom}	
\newop{\eig}{eig}	
\newop{\ess}{ess}	
\newop{\grad}{grad}	
\newop{\ind}{ind}	
\newop{\im}{im}	
\newop{\intr}{int}	
\newop{\Jac}{Jac}	
\newmacro{\one}{\mathbf{1}}	
\newop{\proj}{proj}	
\newop{\prox}{prox}	
\newop{\rank}{rank}	
\newop{\relint}{ri}	
\newop{\sign}{sgn}	
\newop{\supp}{supp}	
\newop{\Sym}{Sym}	
\newop{\tr}{tr}	
\newop{\unif}{unif}	
\newop{\vol}{vol}	

\newcommand{\ie}{i.e.,\xspace}	

\newmacro{\commentsymbol}{\triangleright}	
\newcommand{\eqstop}{\,.}	

\newcommand{\alt}[1]{#1'}	

\newmacro{\argdot}{\boldsymbol{\cdot}}	
\newmacro{\dd}{\kern0pt\:d}	
\newmacro{\ddt}{\frac{d}{dt}}	
\newmacro{\del}{\partial}	

\newmacro{\const}{c}	
\newmacro{\Const}{C}	

\newmacro{\param}{\theta}	
\newmacro{\params}{\Theta}	

\newmacro{\coef}{\lambda}	

\newmacro{\fn}{f} 

\newmacro{\pexp}{p}	
\newmacro{\qexp}{q}	
\newmacro{\rexp}{r}	

\newmacro{\iCount}{i}	
\newmacro{\jCount}{j}	
\newmacro{\kCount}{k}	
\newmacro{\nCounts}{n}	
\newmacro{\counts}{\mathcal{I}}	

\newmacro{\point}{x}	
\newmacro{\pointalt}{\alt\point}	
\newmacro{\pointaux}{u}	
\newmacro{\points}{\mathcal{X}}	
\newmacro{\intpoints}{\relint\points}	

\newmacro{\base}{q}	
\newmacro{\basealt}{q}	
\newmacro{\auxpoint}{u}	

\newmacro{\elem}{a}	
\newmacro{\elemalt}{b}	
\newmacro{\iElem}{a}	
\newmacro{\jElem}{b}	
\newmacro{\kElem}{c}	
\newmacro{\set}{S}	

\newmacro{\borel}{\mathcal{B}}	
\newmacro{\closed}{\mathcal{C}}	
\newmacro{\cpt}{\mathcal{K}}	
\newmacro{\nhd}{\mathcal{U}}	
\newmacro{\nhdalt}{\mathcal{V}}	
\newmacro{\open}{\mathcal{U}}	

\newmacro{\domain}{\mathcal{D}}	
\newmacro{\region}{\mathcal{R}}	

\newmacro{\interval}{\mathcal{I}}	
\newmacro{\rectangle}{\mathcal{R}}	
\newmacro{\cone}{\mathcal{K}}	

\newmacro{\tstart}{0}	
\newmacro{\timealt}{s}	
\newmacro{\timealtalt}{\tau}	
\newmacro{\horizon}{T}	

\newmacro{\curve}{\gamma}	
\DeclarePairedDelimiterXPP{\curveof}[1]{\curve}{(}{)}{}{#1}	
\DeclarePairedDelimiterXPP{\curveofX}[2]{\curve_{#1}}{(}{)}{}{#2}	
\DeclarePairedDelimiterXPP{\velof}[1]{\dot\curve}{(}{)}{}{#1}	
\DeclarePairedDelimiterXPP{\velofX}[2]{\dot\curve_{#1}}{(}{)}{}{#2}	

\newmacro{\flowmap}{\Phi}	
\DeclarePairedDelimiterXPP{\flowof}[2]{\flowmap_{#1}}{(}{)}{}{#2}	

\newmacro{\traj}{x}	
\newmacro{\dtraj}{\dot\traj}	

\DeclarePairedDelimiterXPP{\trajof}[1]{\traj}{(}{)}{}{#1}	
\DeclarePairedDelimiterXPP{\trajofX}[2]{\traj_{#1}}{(}{)}{}{#2}	
\DeclarePairedDelimiterXPP{\dtrajof}[1]{\dtraj}{(}{)}{}{#1}	
\DeclarePairedDelimiterXPP{\dtrajofX}[2]{\dtraj_{#1}}{(}{)}{}{#2}	

\newmacro{\vdim}{n}	
\newmacro{\realspace}{\R^{\vdim}}	

\newmacro{\iCoord}{i}	
\newmacro{\jCoord}{j}	
\newmacro{\kCoord}{k}	
\newmacro{\nCoords}{\vdim}	
\newmacro{\mCoords}{m}	

\newmacro{\unitvec}{u}	
\newmacro{\bvec}{e}	
\newmacro{\bvecs}{\mathcal{E}}	

\newmacro{\vecspace}{\mathcal{V}}	
\newmacro{\subspace}{\mathcal{W}}	

\newcommand{\dual}[1][\vecspace]{#1^{\ast}}	
\newmacro{\dspace}{\dual[\vecspace]}	

\newmacro{\hilbert}{\mathcal{H}}	
\newmacro{\banach}{\mathcal{X}}	

\newmacro{\mat}{M}	
\newmacro{\hmat}{H}	

\newmacro{\ones}{\mathbf{1}}	
\newmacro{\eye}{I}	
\newmacro{\zer}{\mathbf{0}}	

\newmacro{\eigval}{\lambda}	
\newmacro{\eigvec}{u}	

\newmacro{\ball}{\mathbb{B}}	
\newmacro{\sphere}{\mathbb{S}}	
\newmacro{\radius}{r}
\newmacro{\Radius}{R}

\newmacro{\mfld}{\mathcal{M}}	
\newmacro{\tanvec}{z}	
\newmacro{\form}{\omega}	

\newmacro{\gmat}{g}	
\newmacro{\gdist}{\dist_{\gmat}}	

\newmacro{\vertex}{v}	
\newmacro{\vertexalt}{w}	
\newmacro{\iVertex}{\vertex_{\iCount}}	
\newmacro{\jVertex}{\vertex_{\jCount}}	
\newmacro{\kVertex}{\vertex_{\kCount}}	
\newmacro{\nVertices}{V}	
\newmacro{\vertices}{\mathcal{V}}	

\newmacro{\edge}{e}	
\newmacro{\edgealt}{\alt\edge}	
\newmacro{\iEdge}{\edge_{\iCount}}	
\newmacro{\jEdge}{\edge_{\jCount}}	
\newmacro{\kEdge}{\edge_{\kCount}}	
\newmacro{\nEdges}{E}	
\newmacro{\edges}{\mathcal{\nEdges}}	

\newmacro{\graph}{\mathcal{G}}	
\newmacro{\graphfull}{\graph(\vertices,\edges)}	

\newop{\minimize}{minimize}	
\newop{\opt}{Opt}	
\newop{\gap}{Gap}	

\newmacro{\cvx}{\mathcal{C}}	
\newmacro{\obj}{f}	
\newmacro{\sobj}{F}	
\newmacro{\oper}{A}	
\newmacro{\vecfield}{v}	

\newmacro{\subd}{\partial}	
\newmacro{\subsel}{\nabla}	
\newmacro{\gvec}{g}	

\newmacro{\gbound}{G}	
\newmacro{\vbound}{V}	
\newmacro{\lips}{L}	
\newmacro{\strong}{\mu}	
\newmacro{\smooth}{\beta}	

\newop{\tcone}{TC}	
\newop{\dcone}{\tcone^{\ast}}	
\newop{\ncone}{NC}	
\newop{\pcone}{PC}	
\newop{\hull}{\Delta}	

\newop{\ex}{\mathbb{E}}	
\newop{\prob}{\mathbb{P}}	
\newop{\Var}{Var}	
\newop{\cov}{cov}	
\newop{\simplex}{\Delta}	

\DeclarePairedDelimiterXPP{\exof}[1]{\ex}{[}{]}{}{
 #1}

\DeclarePairedDelimiterXPP{\exwrt}[2]{\ex_{#1}}{[}{]}{}{
 #2}

\DeclarePairedDelimiterXPP{\probof}[1]{\prob}{(}{)}{}{
 #1}

\DeclarePairedDelimiterXPP{\probwrt}[2]{\prob_{#1}}{(}{)}{}{
 #2}

\DeclarePairedDelimiterXPP{\oneof}[1]{\one}{\{}{\}}{}{#1}	

\DeclarePairedDelimiterXPP{\varof}[1]{\var}{[}{]}{}{
 #1}

\DeclarePairedDelimiterXPP{\covof}[1]{\cov}{(}{)}{}{
 #1}

\newmacro{\event}{\mathcal{E}}       
\newmacro{\eventalt}{H}       

\newmacro{\sample}{\omega}	
\newmacro{\samples}{\Omega}	
\newmacro{\filter}{\mathcal{F}}	
\newmacro{\probspace}{(\samples,\filter,\prob)}	

\newmacro{\pdist}{P}	
\newmacro{\history}{\mathcal{H}}	

\newmacro{\mean}{\mu}	
\newmacro{\sdev}{\sigma}	
\newmacro{\variance}{\sdev^{2}}	
\newmacro{\covmat}{\Sigma}	

\newmacro{\seq}{a}	
\newmacro{\seqalt}{b}	

\newmacro{\beforestart}{0}	
\newmacro{\start}{1}	
\newmacro{\afterstart}{2}	
\newmacro{\running}{\start,\afterstart,\dotsc}	

\newmacro{\run}{t}	
\newmacro{\runalt}{s}	
\newmacro{\runaltalt}{\tau}	
\newmacro{\nRuns}{T}	
\newmacro{\runs}{\mathcal{\nRuns}}	

\newop{\Nash}{Nash}	
\newop{\CE}{CE}	
\newop{\CCE}{CCE}	
\newop{\NI}{NI}	

\newmacro{\stratdiam}{\norm*{\strats}}
\newmacro{\distort}{\chi(\nPures)}

\newop{\brep}{BR}	
\newop{\val}{val}	

\newmacro{\play}{i}	
\newmacro{\playalt}{j}	
\newmacro{\iPlay}{i}	
\newmacro{\jPlay}{j}	
\newmacro{\kPlay}{k}	
\newmacro{\nPlayers}{N}	
\newmacro{\players}{\mathcal{N}}	

\newmacro{\pure}{\alpha}	
\newmacro{\purealt}{\beta}	
\newmacro{\nPures}{K}	
\newmacro{\pures}{\mathcal{A}_{\nPures}}	

\newmacro{\strat}{x}	
\newmacro{\stratalt}{\alt\strat}	
\newmacro{\strataux}{q}	
\newmacro{\strats}{\mathcal{X}}	
\newmacro{\intstrats}{\strats^{\circle}}	

\newmacro{\corr}{z}	
\newmacro{\corralt}{\alt\corr}	
\newmacro{\corrs}{\mathcal{Z}}	

\newmacro{\pay}{u}	
\newmacro{\loss}{\ell}	
\newmacro{\cost}{c}	
\newmacro{\pot}{f}	

\newmacro{\payvec}{w}	
\newmacro{\payv}{v}	
\newmacro{\payfield}{\payv}	
\newmacro{\paybound}{M}	
\newmacro{\payspace}{\mathcal{Y}}	
\newmacro{\payspacei}{\payspace_{\play}}	

\newmacro{\game}{\mathcal{G}}	
\newmacro{\gamefull}{\game(\players,\points,\pay)}	

\newmacro{\fingame}{\Gamma}	
\newmacro{\fingamefull}{\Gamma(\players,\pures,\pay)}	
\newmacro{\mixgame}{\Delta(\fingame)}	

\newmacro{\minmax}{L}	

\newmacro{\minvar}{\point_{1}}	
\newmacro{\minvaralt}{\alt\minvar}	
\newmacro{\minvars}{\points_{1}}	

\newmacro{\maxvar}{\point_{2}}	
\newmacro{\maxvaralt}{\alt\maxvar}	
\newmacro{\maxvars}{\points_{2}}	

\newmacro{\hreg}{h}	
\newmacro{\proxdom}{\points_{\hreg}}	

\newmacro{\breg}{D}	
\newmacro{\mprox}{P}	

\newmacro{\hconj}{h^{\ast}}	
\newmacro{\mirror}{Q}	
\newmacro{\fench}{F}	
\newmacro{\hker}{\theta}	

\newmacro{\hstr}{K}	
\newmacro{\hrange}{H}	

\newmacro{\learn}{\eta}	
\newmacro{\weight}{\lambda}	

\DeclarePairedDelimiterXPP{\bregof}[2]{\breg}{(}{)}{}{#1,#2}	
\DeclarePairedDelimiterXPP{\bregofX}[3]{\breg_{#1}}{(}{)}{}{#2,#3}	
\DeclarePairedDelimiterXPP{\fenchof}[2]{\fench}{(}{)}{}{#1,#2}	
\DeclarePairedDelimiterXPP{\fenchofX}[3]{\fench_{#1}}{(}{)}{}{#2,#3}	
\DeclarePairedDelimiterXPP{\proxof}[2]{\mprox_{#1}}{(}{)}{}{#2}	

\newmacro{\choice}{\mirror}	

\newmacro{\zone}{\mathbb{D}}	

\newop{\Eucl}{\Pi}	
\newop{\logit}{\Lambda}	
\newmacro{\dkl}{D_{\rm KL}}	

\newmacro{\dvec}{w}	
\newmacro{\dpoint}{y}	
\newmacro{\dpointalt}{\alt\dpoint}	
\newmacro{\dpoints}{\mathcal{Y}}	

\newmacro{\score}{y}	
\newmacro{\scorealt}{\alt\score}	
\newmacro{\scoreaux}{z}	
\newmacro{\scores}{\payspace}	

\newmacro{\momexp}{p}	

\newmacro{\state}{X}	
\newmacro{\dstate}{Y}	

\newmacro{\drift}{b}	
\newmacro{\diffmat}{\Sigma}	

\newmacro{\brown}{W}	
\newmacro{\ito}{M}	
\newmacro{\levy}{L}	

\newmacro{\model}{\hat\vecfield}	
\newmacro{\sgrad}{\gvec}	
\newmacro{\step}{\gamma}	
\newmacro{\runtime}{\tau}	

\newmacro{\stepexp}{\ell_{\step}}	
\newmacro{\learnexp}{\ell_{\learn}}	

\newmacro{\apt}{X}	
\DeclarePairedDelimiterXPP{\aptof}[1]{\apt}{(}{)}{}{#1}	
\DeclarePairedDelimiterXPP{\aptofX}[2]{\apt_{#1}}{(}{)}{}{#2}	

\newop{\orcl}{\mathsf{G}}	
\newop{\err}{\mathsf{\noise}}	
\newmacro{\seed}{\omega}	
\newmacro{\seeds}{\Omega}	

\newmacro{\noise}{U}	
\newmacro{\bias}{b}	

\newmacro{\bbound}{B}	
\newmacro{\totbound}{\paybound}	
\newmacro{\mombound}{V}	

\newmacro{\snoise}{\xi}	
\newmacro{\sbias}{\chi}	

\newmacro{\mix}{\delta}	
\newmacro{\perturb}{z}	
\newmacro{\pivot}{\point}	

\newop{\reg}{Reg}	
\newop{\preg}{\overline{Reg}}	

\newmacro{\bench}{p}	
\newmacro{\test}{p}	

\addauthor[Pan]{PM}{MediumBlue}

\newop{\skel}{skel}	

\newmacro{\prof}{\mathcal{A}}                  
\newmacro{\istrats}{\mathcal{X}_{\nPures}}    
\newmacro{\Kset}{\mathcal{K}}                 
\newmacro{\Kint}{\Kset^\circ}                 

\newmacro{\cpay}{g}                            
\newmacro{\pred}{m}                            
\newmacro{\perr}{e}                            
\newmacro{\zlift}{z}                           
\newmacro{\lmir}{Z}                            
\newmacro{\mass}{\lambda}                      
\newmacro{\disc}{\delta}                       
\newmacro{\lreg}{\psi}                         
\newmacro{\Lreg}{L}                            
\newmacro{\rrange}{\Omega}                     
\newmacro{\diff}{\Delta}                       
\newmacro{\dsum}{\sigma}                       
\newmacro{\dop}{\kappa}                        
\newmacro{\jac}{J}                             
\newmacro{\remd}{r}                            
\newmacro{\PV}{P}                              
\newmacro{\NV}{G}                              
\newmacro{\EE}{E}                              

\newmacro{\areg}{\varphi}                       
\newmacro{\Gmat}{\Gamma}                        
\newmacro{\Dfun}{\mathcal{D}}                   
\newmacro{\Jfun}{\mathcal{J}}                   
\newmacro{\Fpot}{\Phi}                          
\newmacro{\gext}{\gamma}                        
\newmacro{\ctr}{\mathcal{T}}                    

\newmacro{\cont}{\mathcal{C}}                   
\newmacro{\Fset}{\mathcal{F}}                    
\newmacro{\adv}{H}                              

\newop{\softmax}{softmax}
\newop{\polylog}{polylog}
\newcommand{\opnorm}[3]{\left\lVert #1\right\rVert_{#2\to #3}}

\usepackage{tabularx}

\theoremstyle{theorem}
\newtheorem*{informaltheorem}{Main Theorem}

\crefname{algorithm}{Algorithm}{Algorithms}
\crefname{definition}{Definition}{Definitions}
\crefname{lemma}{Lemma}{Lemmas}
\crefname{proposition}{Proposition}{Propositions}
\crefname{appendix}{Appendix}{Appendices}

\title
[Constant Regret in General Games]
{Constant Regret in General Games via Higher-Order Optimism}

\author
[O.~Abbadi]
{Omar Abbadi$^{c,\ast, \sharp}$}
\address{$^{c}$\,%
Corresponding author.}
\address{$^{\ast}$\,%
Univ. Mohammed VI Polytechnic, CMSIS, MCGT, 11103 Rabat, Morocco.}
\email{omar.abbadi@um6p.ma}
\author
[R.~Laraki]
{Rida Laraki$^{\ast}$}
\email{rida.laraki@um6p.ma}
\author
[P.~Mertikopoulos]
{Panayotis Mertikopoulos$^{\sharp}$}
\address{$^{\sharp}$\,%
Univ. Grenoble Alpes, CNRS, Inria, Grenoble INP, LIG, 38000 Grenoble, France.}
\email{panayotis.mertikopoulos@imag.fr}

\date{}

\keywords{%
regret;
learning in games;
regularized methods;
higher-order optimism.}

\newacro{HOOD}{higher-order optimism with discounting}
\newacro{FTRL}{follow-the-regularized-leader}
\newacro{RVU}{regret bounded by variation in utilities}
\newacro{CCE}{coarse correlated equilibrium}
\newacro{OFTRL}[OptFTRL]{optimistic follow-the-regularized-leader}

\begin{document}

\begin{abstract}
We introduce an uncoupled learning algorithm which, when employed by all players of an \emph{arbitrary} $\nPlayers$-player normal form game with up to $\nPures$ actions per player, guarantees $\bigoh(\nPlayers^3\log^2 \nPures)$ individual regret, uniformly over the horizon of play.
The proposed algorithm\textemdash which we call \acdef{HOOD}\textemdash is a variant of \acf{OFTRL} that combines a discounted $(\nPlayers+1)$-th order predictor with entropic regularization over a suitable ``lifting'' of the game's strategy space.
This combination of ingredients is purposefully designed to dampen large oscillations of the induced sequence of play in a controlled manner, removing in this way a key stumbling block of previous attempts to achieve constant regret in general games.
Our approach bears several striking similarities to the concurrent\textemdash and completely independent\textemdash work of \citet{LFO26}, who very recently derived an $\bigoh(\nPlayers^{21} \log^{4} \nPures)$ regret bound through the use of higher-order optimism and an exponential moving average estimator.
\end{abstract}

\allowdisplaybreaks
\acresetall
\acused{HOOD}\acused{FTRL}\acused{RVU}\acused{CCE}
\maketitle

\section{Introduction}\label{sec:intro}

The standard model for learning in games typically unfolds as follows:
at each instance of a repeated decision process
\begin{enumerate*}
[(\itshape i\hspace*{1pt}\upshape)]
\item
every player chooses an action;
\item
they receive any rewards and/or feedback generated from their chosen actions;
\item
they update their strategies, and the process repeats.
\end{enumerate*}
In this general setting, the overarching objective of each player is to improve their individual payoff over time, ideally reaching a state where no further improvements are possible or, at the very least\textemdash and, perhaps, more pragmatically\textemdash not regretting their past choices.
This last criterion leads to the seminal notion of (external) \emph{regret}, first introduced in statistical decision-making by \citet{Wal50} and \citet{Sav51}, and subsequently rediscovered in the context of games and approachability by \citet{Han57} and \citet{Bla54}, respectively.

At a high level, the individual regret of a player provides a natural ``worst-case'' performance benchmark, as it compares the player's cumulative payoff over time to that of the best fixed action in hindsight.
Accordingly, these early works also established the existence of \emph{no-regret} learning procedures, \ie update rules whose regret grows sublinearly with the horizon of play in \emph{any} ``game against Nature''\textemdash that is, against \emph{any} possible behavior of the learner's opponents, rational or otherwise.
Nonetheless, since the behavior of no-regret learners in a normal form game is \emph{not} arbitrary but shaped by the actions of all other players, an important question which has remained open since these early days of the field is whether all players following a no-regret learning rule can prevent individual regret from accumulating indefinitely over time.
Somewhat more formally:
\smallskip
\begin{quote}
\centering
\itshape
Are there learning rules which, if followed by all players,\\
guarantee constant individual regret to all players, in all games?
\end{quote}
\smallskip

An important caveat in the above is that any such rule should be \emph{uncoupled} in the sense of \citet{HMC03,HMC06}, \ie the strategy adjustment of any given player should not depend on the payoff functions of the other players (though it may of course depend on the payoff function of the focal player, and through it, the strategies of other players).
An additional consideration is that the feedback available to the players must be sufficiently informative to allow them to leverage the (optimistically) mild evolution of their learned strategies over time.
On that account, we work exclusively in the deterministic, full-information setting, where each players receives as feedback their mixed payoff vector at each stage of the process (but no further information);
without this assumption, standard information-theoretic lower bounds would exclude the possibility of achieving constant regret, except in very special cases \cite{CBL06,BCB12}.

\para{Our contributions in the context of related work}

Our main contribution is that, with these qualifications in mind, \emph{the driving question above can be answered in the affirmative.}
In more detail, we propose an uncoupled, full-information learning rule, which we call \acdef{HOOD}, and which enjoys the following universal guarantee:

\begin{informaltheorem}[Informal version]
If all players of a finite $\nPlayers$-player game with up to $\nPures$ actions per player follow \ac{HOOD}, the individual regret of each player $\play=1,\dotsc,\nPlayers$ is bounded for all $\horizon$ as
\begin{equation}
\reg_{\play}(\horizon)
    \leq 45 (\nPlayers+1)^{3}
        \bracks{16 + 4 \log\nPures + \log^{2} \nPures}
    = \bigoh(\nPlayers^{3} \log^2\nPures)
    \eqstop
\end{equation}
\end{informaltheorem}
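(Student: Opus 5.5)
We plan to prove the Main Theorem by combining an RVU-type regret bound for optimistic FTRL with a high-order smoothness analysis of the joint trajectory of play. We first fix the ingredients of \ac{HOOD}. The strategy space $\Delta(\pures)$ is lifted to a cone $\Kset=\{\mass\,\strat:\mass\in(0,1],\ \strat\in\Delta(\pures)\}$, equipped with an entropic regularizer $\lreg$. The predictor $\pred_{\play,\run}$ is a discounted $(\nPlayers+1)$-th order extrapolation, i.e.\ a weighted combination of past payoff vectors whose weights are $(-1)^{k}\binom{\nPlayers+1}{k}$, damped by a discount $\disc$.

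\emph{Step 1: RVU bound for prediction error.} Standard optimistic FTRL analysis, with the lifted entropic regularizer of range $\rrange=\bigoh(\log\nPures)$, gives
\[
\reg_{\play}(\horizon)\le \frac{\rrange}{\learn}+\learn\sum_{\run}\dnorm{\payv_{\play,\run}-\pred_{\play,\run}}^{2}-\frac{c}{\learn}\sum_{\run}\norm{\strat_{\play,\run}-\strat_{\play,\run-1}}^{2}.
\]
With an order-$(\nPlayers+1)$ predictor, the prediction error is controlled by the finite difference $\diff^{\nPlayers+1}\payv_{\play,\run}$, plus a discount residual of size $\bigoh(\disc)$ times lower differences.

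\emph{Step 2: higher-order differences of payoffs.} The payoff $\payv_{\play}$ is multilinear in the opponents' strategies. A discrete Leibniz rule therefore expresses $\diff^{k}\payv_{\play}$ as a sum over splittings of the $k$ differences among the $\nPlayers-1$ opponents. Multilinearity caps the number of factors at $\nPlayers-1$, and this is why order $\nPlayers+1$ suffices. We then bound $\diff^{k}\strat_{\playalt}$ recursively in terms of $\diff^{k'}\payv_{\playalt}$ with $k'<k$. This uses the smoothness of the entropic mirror map $\lmir$ on the lifting; the logit map has derivatives of all orders bounded by constants depending only on $k$. The aim is a bound of the form
\[
\sum_{\run}\dnorm{\diff^{\nPlayers+1}\payv_{\play,\run}}^{2}\lesssim (\nPlayers\learn)^{2\nPlayers}\sum_{\run}\sum_{\playalt}\norm{\diff\strat_{\playalt,\run}}^{2}.
\]

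\emph{Step 3: summing over players and absorbing.} We sum the RVU inequalities over $\play$. Choosing $\learn\asymp 1/\nPlayers$, the error term becomes dominated by the negative stability term $-\frac{c}{\learn}\sum\norm{\diff\strat}^2$, which leaves $\sum_{\play}\reg_{\play}\le \nPlayers\rrange/\learn$. This sum bound does not control each $\reg_{\play}$ individually, since regrets may be negative.

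The main obstacle is converting this into an individual bound. The plan is to use the lifting itself: the cone variable $\mass$ makes the regret against the best fixed action appear as a nonnegative term, so the lifted regret is nonnegative. Summing then bounds each term, yielding a per-player bound of order $\nPlayers\rrange/\learn$. Tracking constants through Steps 2–3 (combinatorial factors $\binom{\nPlayers+1}{k}$, the choice of $\disc$, and the $\rrange^{2}$ coming from the discount residual) should produce $45(\nPlayers+1)^3(16+4\log\nPures+\log^2\nPures)$.

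A second delicate point is the discount. Undiscounted high-order extrapolation amplifies oscillations by factors up to $2^{\nPlayers+1}$. We would choose $\disc$ so that the characteristic polynomial of the predictor has roots strictly inside the unit disk, keeping the prediction operator's norm polynomial in $\nPlayers$ rather than exponential.
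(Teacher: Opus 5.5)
\textbf{There is a genuine gap, and it sits where you combine Step~1 with the lift in Step~3.}

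On the lifted set $\mathcal{K}$, the negative RVU term is a Bregman divergence between the lifted iterates $z=\lambda x$. It does not give $-\frac{c}{\eta}\sum_t\|x_{i,t}-x_{i,t-1}\|^2$ with $c$ uniform in the mass. What the lifted Bregman movement yields is only $\|\Delta x_{i,t}\|_1^2\le\frac{5\eta}{\lambda_{i,t-1}}(P_{i,t}+G_{i,t-1})$, where $P,G$ denote the two Bregman movements of the optimistic iterate. This degenerates as $\lambda\to0$. For a naive lift $\lambda h(x)+\varphi(\lambda)$ the normalized choice map is plain softmax, so $x$ keeps moving at rate $\eta$ while the Bregman term only sees $\lambda\|\Delta x\|^2$.

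A sanity check shows Step~1 cannot be taken as standard. Suppose a uniform stability term coexisted with nonnegative lifted regret. Then the ordinary first-order predictor would already close the loop, since $\|\Delta g_{i,t}\|_\infty\le2\sum_j\|\Delta x_{j,t}\|_1$. Your higher-order predictor would then be superfluous.

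In the paper, the higher-order structure exists precisely to pay for this $1/\lambda_j$. A revealed variation $\Delta x_j$ is controlled only when it meets a Jacobian of the \emph{same} player, whose norm is at most $2\eta\sqrt{\lambda_j}$. The masses then cancel ($\eta^2\lambda_j\cdot\eta/\lambda_j=\eta^3$) in the repeated-player bound $\|J_{j,t'}\|^2\|J_{j,t}\Delta y_{j,t-1/2}\|_1^2\le90\eta^3(P_{j,t}+G_{j,t-1})$.

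This requires the specific regularizer $(\lambda+\sqrt\lambda)h(x)-3L\sqrt{\lambda(1-\lambda)}$. It gives the choice map $\operatorname{softmax}(\frac{\sqrt\lambda}{1+\sqrt\lambda}y)$, hence the $\sqrt\lambda$ in the Jacobian. It also gives $|D\log\lambda[q]|\le\frac14\|q\|_\infty$, so masses at times at most $N+1$ apart are comparable. ``An entropic regularizer on the cone'' does not suffice. Its range is $\frac{13}{6}L=O(\log^2K)$, not $O(\log K)$, because $L$ must dominate $\mathrm{Var}_x(\log x)\le\frac14\log^2K+\log K+2$.

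\textbf{Step~2 has the same problem, and its justification is not the operative mechanism.} Your target is phrased in the unweighted quantity $\sum_{j,t}\|\Delta x_{j,t}\|^2$, which Step~1 cannot pay for on the lift. Also, ``multilinearity caps the number of factors at $N-1$'' is not why order $N+1$ suffices.

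The paper uses the exact identity $e_i=\sigma^{N+1}\Delta^{N+1}g_i$; there is no residual. The discount $\delta=N/(N+1)$ is close to $1$, not small. It is calibrated so that $\kappa_p=\sigma^{N+1}\Delta^{N+1-p}$ has $\ell_2$ gain at most $4(N+1)^{p+1}$ rather than $2^{N+1}$-type growth.

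The paper then peels off one difference at a time. Each revealed variation is written $\Delta x_j=r_j+J_jg_j-J_j\Delta e_j$. The Taylor-remainder, $\Delta J$, $J\Delta e$, initialization and repeated-label pieces are stopped, bounded by $\eta^3(P+G)$, by $\eta^2E$ with $E=\sum_{i,t}\|e_{i,t}\|_\infty^2$, or by a one-time constant. Only $J_jg_j$ with a \emph{fresh} label $j$ continues.

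Surviving branches are $O(\eta^p)$ pointwise but not summable over $t$. So it is the pigeonhole over $N$ labels, not smallness, that ends the expansion after $N+1$ differences.

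The output is $\sqrt E\le5\sqrt N(N+1)+2000(N+1)^5\eta^{3/2}\sqrt{P+G}+\frac3{10}\sqrt E$. This is closed against $P+G\le N\Omega/\eta+\eta E$ to get $E<62^2(N+1)^6L$, and then $[\mathrm{Reg}_i(T)]_+\le\Omega/\eta+\frac\eta2E<3L/\eta$.

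The branching factor $3p(N-p)$ times the per-level cost $2\eta(N+1)$ is what forces $\eta\le1/(60(N+1)^3)$. At that step size, $3L/\eta$ is exactly $45(N+1)^3(16+4\log K+\log^2K)$. With your $\eta\asymp1/N$ and $\Omega=O(\log K)$, the stated constant cannot come out.
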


An additional appealing feature of the proposed algorithm is that it is horizon-free, \ie it does not require prior knowledge of the horizon of play.
As a result, for any fixed $\nPlayers$ and $\nPures$, each player incurs only a finite amount of regret over the entire (possibly infinite) trajectory of play;
equivalently, regret does not continue to accumulate as play unfolds.
On that account, by standard results in the field, this also means that the empirical distribution of play converges to the game's set of coarse correlated equilibria at a rate of $\bigoh(1/\horizon)$.

To provide the proper context for our contribution, it is instructive to begin with the so-called ``adversarial'' setting, in which a single learner is involved in a ``game against Nature''\textemdash that is, they are facing an arbitrary sequence of payoff vectors, not necessarily coming from some underlying normal form game.
In this case, the full-information version of the exponential weights algorithm\textemdash known in this context as \textsc{Hedge} \citep{ACBFS95,ACBFS02,FS97,FS99,FV99,CBL06}\textemdash guarantees $\bigoh(\sqrt{\horizon\log \nPures})$ regret, compared to $\bigoh(\sqrt{\horizon \nPures\log\nPures})$ regret in the bandit case, where the learner only has access to their realized payoff.
In either case however, the $\bigoh(\sqrt{\horizon})$ dependence is unavoidable without imposing further assumptions on how the sequence of payoff vectors is generated \cite{CBL06,BCB12}.

This is where learning in games enters the stage.
If every player follows an iterative learning algorithm, the players' individual strategy adjustments are likely to be relatively small from one epoch to the next, so the induced sequence of payoff vectors faced by each player is likewise expected to be more predictable.
This predictability can be leveraged by algorithms with an ``extrapolation-based'' structure, that is, where players attempt to anticipate their future payoff landscape in order to take a more informed update step at each iteration\textemdash like the original extra-gradient algorithm of \citet{Kor76}, the single-query variant of \citet{Pop80}, and the like.
In the context of regret minimization, the most relevant algorithmic scheme is the \acdef{OFTRL} method proposed by \citet{RS13a,RS13} to take advantage of predictable payoff sequences, and which was shown to lead to $\bigoh(1)$ regret in two-player zero-sum games under self-play.

This constant regret result was subsequently expanded upon by \citet{SALS15}, and \citet{Tsu26a} recently obtained optimal bounds for the Optimistic Hedge algorithm in the same setting.
Constant regret has also been achieved in several broader structured classes of games, including variationally stable games \citep{HAM21}, strategically zero-sum and potential games \citep{APFS22} and, more recently, harmonic games \citep{LMPP+24}.
This array of results shows that attaining $\bigoh(1)$ regret \textit{is} possible in the context of game-theoretic learning, but they are all contingent on a certain, amenable ``variational'' structure.

The case of \emph{arbitrary} games is much more challenging due to the lack of such structure and lack of control over the iterates \citep{LM13,ALM26,BM23}.
One of the key difficulties here is that the movement of one player can unpredictably amplify the movement of another over time;
hence, for a long time, it was not known whether the presence of an underlying game was sufficient to overcome the adversarial worst-case $\bigoh(\sqrt{\horizon})$ regret bound.
The first general breakthrough here was the stability analysis of \citet{SALS15} who showed that, for arbitrary multiplayer normal form games, it is possible to achieve $\bigoh(\horizon^{1/4})$ individual regret, thus breaking the adversarial $\sqrt{\horizon}$ barrier.
\citet{CP20a} later improved the exponent to $1/6$ for two-player general-sum games.

The next breakthrough in this thread came from the work of \citet{DFG21} who showed that, somewhat surprisingly, the Optimistic Hedge algorithm actually inncurs only \emph{polylogarithmic} regret in \emph{any} games\textemdash the first ``near-constant'' result of its kind\footnote{A somewhat related result is due to \citet{PSS22}, who obtain bounded regret for an uncoupled learning dynamic along an $O(\log \horizon)$-sparse subsequence.}.
The proof of \citet{DFG21} introduced in particular a higher-order smoothness analysis showing that the successive temporal differences of the generated sequence of play can be controlled to sufficiently high order.
Subsequent work then simplified and strengthened this picture:
\citet{FALL+22} reduced the individual regret bound to $\bigoh(\log \horizon)$ in general convex games, \citet{AFKL+22} obtained the same $\bigoh(\log \horizon)$ horizon dependence for the stronger notion of swap regret in finite games, and, more recently \citet{SPF25a,SPF25b} reduced the action dependence to polylogarithmic while keeping a $\bigoh(\log \horizon)$ dependence on the horizon.

Following these developments, the question which naturally remains is whether we can make the ultimate step of descending from unbounded to constant regret in general games, or if there is a fundamental limitation that goes beyond the algorithms and approaches used so far.
Our paper provides a bridge for this gap by showing that, under \ac{HOOD}, the players' regret can be bounded uniformly over time.
In this sense, it completes the sequence of horizon improvements for uncoupled learning in arbitrary games under full information feedback:
\begin{equation}
\bigoh(\sqrt{\horizon})
    \;\longrightarrow\;
    \bigoh(\horizon^{1/4})
    \;\longrightarrow\;
    \;\bigoh(\polylog(\horizon))
    \;\longrightarrow\;
    \bigoh(\log \horizon)
    \;\longrightarrow\;
    \bigoh(1).
\end{equation}
We summarize this progression in \Cref{tab:rates}, where we also describe the specific \emph{ex ante} tuning required for each algorithm.%
\footnote{We note here in passing that concurrent work has obtained related improvements for stronger or alternative notions of regret:
\citet{Tsu26b} obtained $\bigoh(\nPlayers\nPures^2\sqrt{\log \nPures\log \horizon})$ \emph{swap regret} in general games, while \citet{TZ26} achieved an optimal $\bigoh(\log \nPures)$ \emph{alternating regret} bound.
However, because these concern a different measure of regret, we do not include them in \cref{tab:rates}.}
Note also that, in addition to removing the horizon dependence of the bound, our update uses fixed, horizon-independent parameters.

There is one concurrent paper that appears in \cref{tab:rates} but which we did not discuss above, and which achieves constant regret in any game.
Gvien its importance, we discuss it in detail in the end of this section.

\begin{table}[tbp]
\centering
\small
\renewcommand{\arraystretch}{1.18}
\begin{tabularx}{\textwidth}{
    @{}
    >{\raggedright\arraybackslash}p{0.25\textwidth}
    >{\raggedright\arraybackslash}p{0.32\textwidth}
    >{\raggedright\arraybackslash}X
    @{}
}
\toprule
\textbf{Work} & \textbf{Individual regret} & \textbf{Tuning} \\
\midrule
\citet{SALS15}
& $\bigoh(\sqrt{\nPlayers}\,\log \nPures\,\horizon^{1/4})$
& static, depends on $\nPlayers$ and $\horizon$ only \\

\citet{DFG21}
& $\bigoh(\nPlayers\log \nPures\,\log^4 \horizon)$
& static, depends on $\nPlayers$ and $\horizon$ only \\

\citet{FALL+22}
& $\bigoh(\nPlayers\nPures\log \horizon)$
& static, depends on $\nPlayers$ only \\

\citet{SPF25a}
& $\bigoh(\nPlayers\log^2 \nPures\,\log \horizon)$
& adaptive, depends on $\nPlayers$ and $\nPures$ only \\

\citet{LFO26}
& $\bigoh(\nPlayers^{21}\log^4 \nPures)$, uniformly in $\horizon$
& static, depends on $\nPlayers$ and $\nPures$ only \\

This paper
& $\bigoh(\nPlayers^3\log^2 \nPures)$, uniformly in $\horizon$
& static, depends on $\nPlayers$ and $\nPures$ only \\
\bottomrule
\end{tabularx}
\caption{Selected progression of individual regret bounds for arbitrary finite games.}
\label{tab:rates}
\end{table}

\para{What goes on under the \ac{HOOD}}
The idea behind our algorithm is to make the optimistic prediction more stable, and therefore more predictable, through a carefully tuned combination of regularization, discounting, and past payoff information.
In particular, we choose the optimistic predictor in a way that turns the induced prediction error into a geometrically discounted finite difference of order $\nPlayers+1$, compared to the classical optimistic predictor of \cite{Pop80,RS13,RS13a}, which uses only the previous payoff as its prediction, and leads to a first-order difference in time.
This higher-order structure provides enough regularity to control the prediction error via the method's regularization mechanism, which is entropic in nature but defined on a lifted space, following a technique introduced by \cite{FALL+22}.
This lifting additionally allows us to control the positive part of the regret rather than the signed regret, so large positive regret accumulated by certain players cannot be offset in the aggregate by large negative regret contributions by others.

The resulting control over the prediction error is then obtained through an expansion which continues only for terms that have not yet been controlled by either Bregman variation or prediction error terms that are proportional to the step size. The termination of this expansion follows from a somewhat surprising interaction between the geometry of the lifted regularization and the combinatorics of the higher-order extrapolation, whereby every term that remains uncontrolled must successively introduce a new player label, because terms associated with labels that have already appeared can already be controlled. Since there are only $\nPlayers$ players, no term in the expansion can remain uncontrolled beyond the $(\nPlayers+1)$st difference. Moreover, since the \ac{HOOD} algorithm guarantees constant individual regret, it can easily be upgraded to an adversarially robust algorithm by adding a switching rule that further leverages the boundedness of the regret.

\para{The work of \citet{LFO26}}

Three days before the submission of the current work, we were made aware of a concurrent paper by \citet{LFO26}, who propose an algorithm achieving $\bigoh(\nPlayers^{21}\log^4 \nPures)$ individual external regret in any $\nPlayers$-player finite game with up to $\nPures$ actions per player.
Even though our two papers were developed completely independently, they exhibit certain striking similarities, which we explain below.

First, the ECHO-OFTRL algorithm of \citet{LFO26} also employs a lifted regularization on the simplex in order to identify the players' lifted regret with the positive part of their external regret, and also replaces the usual one-step prediction error by a geometrically stabilized high-order difference.
The main differences with our approach here are in the way that these ingredients mesh together and the resulting analysis:
On the one hand, the \acs{HOOD} predictor encodes the required discounted differences directly in an $(\nPlayers+1)$-th order recurrence, and its analysis uses elementary bounds for this recurrence along with an explicit term-by-term expansion of players' payoff functions, with the lifted regularizer supplying bounds that control each term once a player label is repeated in the expansion, so the argument terminates after finitely many steps.
By contrast, the ECHO-OFTRL algorighm builds its predictor as an $\nPlayers$-stage exponential moving average (EMA) cascade, motivated by signal processing and transfer function design, and analyzed via time-domain filter estimates, EMA kernels and weighted convolutions, together with a recursive weighted variation argument.

Thus, even though both proofs ultimately exploit the number of players present in the game, our approach does so through a more direct algebraic expansion rather than through a recursive weighted variation argument, and our predictor and lifted regularizer are both designed around this choice.
This more direct construction ultimately leadso to a sharper bound in the case of \ac{HOOD}\textemdash $\bigoh(\nPlayers^3\log^2\nPures)$ regret for each player compared to $\bigoh(\nPlayers^{21}\log^4\nPures)$ in the case of \citet{LFO26}.

\section{Preliminaries}\label{sec:prelim}

\subsection{General notation}
We write $\one$ for the vector of all ones, $\R_+^d$ and $\R_{++}^d$ for the nonnegative and positive orthants, respectively, $\braket*{u}{v}$ for the usual Euclidean pairing, and, for $\nPlayers \in \N$, $[\nPlayers]= \{1, \dots, \nPlayers \}$. For $p\in[1,\infty]$, $\norm*{z}_p$ denotes the usual $\ell_p$ norm.
If $A:\R^a\to\R^b$ is linear and $p,q\in[1,\infty]$, we write the operator norm
\begin{equation}
 \opnorm{A}{p}{q}=\sup_{\norm*{u}_p\le1}\norm*{Au}_q.
\end{equation}
We also write $[r]_+=\max\{r,0\}$, and $A\lesssim B$ (respectively $A\gtrsim B$) when $A\le CB$ (respectively $A\ge CB$) for a universal constant $C>0$, where dependence on an additional parameter is indicated by a subscript, and we use the symbol $\odot$ to denote coordinatewise multiplication.  For probability vectors $p,q$ on the same finite set, we write
\begin{equation}
  \dkl(p\|q)=\sum_{\pure:p_\pure>0}p_\pure
       \log\frac{p_\pure}{q_\pure},
\end{equation}
for the Kullback--Leibler divergence, and, for a differentiable convex function $f$, we write
\begin{equation}
   \bregofX{f}{z'}{z}=f(z')-f(z)-\braket*{\nabla f(z)}{z'-z}
\end{equation}
for its Bregman divergence. Finally, for a set $S$, we denote its convex hull by $\operatorname{conv}(S)$.

\subsection{Finite games}
We consider a finite normal form game with player set $\players = [\nPlayers]$.  For notational simplicity, we assume throughout that every player has the same pure action set $\pures = [\nPures]$, $\nPures \ge 2$, and we set $\prof=\pures^\nPlayers$ for the pure profile set; this entails no loss for our regret guarantees because unequal action sets may be padded with duplicate actions.  Accordingly, player $i$'s mixed strategy space is the probability simplex
\begin{equation}
   \istrats=\{\strat\in\R_+^\nPures:\braket*{\one}{\strat}=1\},
\end{equation}
and $\strats = \istrats^\nPlayers$ is the \emph{strategy space}. We moreover write $\strat=(\strat_1,\ldots,\strat_\nPlayers)\in\strats$ for a mixed profile, $\strat_{-i}=(\strat_j)_{j\ne i}$ for the opponents' component, and $\strat_{i\pure_i}$ for the probability that player $i$ plays pure action $\pure_i \in \pures$ under $\strat$.  As usual, we identify each pure action $\pure\in\pures$ with the corresponding vertex of $\istrats$. Player $i$'s payoff function is $\pay_i:\prof\to[-1,1]$, and we extend it multilinearly to $\strats$ so that $\pay_i(\strat)$ is the expected payoff under the product distribution induced by $\strat$:
\begin{equation} \pay_i(\strat) = \ex_{\pure \sim \strat}[\pay_i(\pure)] = \sum_{\pure \in \prof} \strat_\pure \pay_i(\pure), \end{equation}
where, for a pure profile $\pure = (\pure_1, \dots, \pure_\nPlayers) \in \prof$, $\strat_\pure = \prod_{i \in \players} \strat_{i\pure_i}$ is the probability that $\pure$ gets played under the mixed profile $\strat$.

\begin{remark}[Payoff normalization]
The restriction $\pay_i(\prof)\subseteq[-1,1]$ is essentially without loss of generality. Since $\prof$ is finite, every $\pay_i:\prof\to\R$ is bounded, and replacing $\pay_i$ by $\widehat \pay_i=a_i \pay_i+b_i$ with $a_i>0$ rescales regret as
\begin{equation}
    \widehat{\reg}_i(\horizon)=a_i\reg_i(\horizon).
\end{equation}
Thus the regret guarantees extend to arbitrary payoffs up to the corresponding rescaling factor.
\endenv
\end{remark}

\subsection{Feedback and regret}
At every round $t=1,2,\ldots$, player $i$ chooses a mixed strategy $\strat_{i,t}\in\istrats$, and we write $\strat_t=(\strat_{1,t},\ldots,\strat_{\nPlayers,t})$ for the joint mixed profile.  We assume full information feedback: after play, player $i$ observes the payoff vector $\payv_{i,t}\in[-1,1]^\nPures$ with coordinates
\begin{equation}
     \payv_{i\pure,t}=\pay_i(\pure,\strat_{-i,t}),
     \qquad \pure\in \pures.
\end{equation}
By multilinearity, the payoff obtained at round $t$ is therefore
\begin{equation}\label{eq:pay}
     \pay_i(\strat_t)=\braket*{\strat_{i,t}}{\payv_{i,t}}.
\end{equation}
Moreover, for the update and the subsequent analysis, it will be handy to center the payoff vector by the payoff actually obtained by the player in expectation.  As such, we set
\begin{equation}\label{eq:g}
 \cpay_{i,t}=\payv_{i,t}-\pay_i(\strat_t)\one.
\end{equation}
Coordinatewise, this gives
\begin{equation}\label{eq:gcoord}
 \cpay_{i\pure,t}=\pay_i(\pure,\strat_{-i,t})-\pay_i(\strat_t),
\end{equation}
and therefore
\begin{equation}\label{eq:gprop}
 \braket*{\strat_{i,t}}{\cpay_{i,t}}=0,
 \qquad \norm*{\cpay_{i,t}}_\infty\le2.
\end{equation}
Finally, player $i$'s external regret is:
\begin{equation}\label{eq:reg}
  \reg_i(\horizon)=\max_{\pure\in \pures}
  \sum_{t=1}^\horizon\bigl[\pay_i(\pure,\strat_{-i,t})-\pay_i(\strat_t)\bigr]= \max_{\pure\in \pures}\sum_{t=1}^\horizon \cpay_{i\pure,t}.
\end{equation}
We recall that, if every player satisfies $\reg_i(\horizon)\le\eps(\horizon)$ after $\horizon$ rounds, the empirical distribution of play is an $\eps(\horizon)/\horizon$-approximate coarse correlated equilibrium (\acs{CCE}) \cite{CBL06,Rou16}.  As a direct consequence, a regret guarantee that is uniform in $\horizon$ implies $\bigoh(1/\horizon)$ convergence to the \acs{CCE} set.

\section{Algorithm and main result}\label{sec:alg}

We shall now describe the learning rule at play.

\subsection{Lifted regularization}
We start with the regularization, which we define in a \enquote{lifted space} that contains the origin, in the spirit of \cite{FALL+22}. Define the $\nPures$-dimensional simplex
\begin{equation}
     \Kset=\{z\in\R_+^\nPures:\braket*{\one}{z}\le1\},
     \quad
     \text{and}\quad \Kint=\{z\in\R_{++}^\nPures:0<\braket*{\one}{z}<1\},
\end{equation}
so that $\Kint$ is the relative interior of $\Kset$.  Every nonzero $z\in\Kset$ has the unique representation $z=\mass \strat$, where $\mass=\braket*{\one}{z} \in(0,1]$ and $\strat\in\istrats$.  For $\strat \in \istrats$, let $\hreg$ be the \emph{negative entropy}
\begin{equation}
 \hreg(\strat)=\sum_{\pure \in \pures} \strat_\pure\log \strat_\pure,
\end{equation}
with $0\log0=0$, and set
\begin{equation}
 \Lreg=4+\log \nPures+\frac14\log^2 \nPures.
\end{equation}
For a nonzero $z=\mass \strat\in \Kset$, define the \emph{lifted entropic regularizer}
\begin{equation}\label{eq:lreg}
 \lreg(z)=(\mass+\sqrt\mass)\hreg(\strat)
 -3\Lreg\sqrt{\mass(1-\mass)},
 \qquad \lreg(0)=0.
\end{equation}
This regularizer induces the \emph{lifted mirror map}
\begin{equation}\label{eq:lmir}
 \lmir(\score)=\argmax_{z\in \Kset}\{\braket*{\score}{z}-\lreg(z)\},
\end{equation}
and we call
\begin{equation}
 \choice(\score)=\frac{\lmir(\score)}{\braket*{\one}{\lmir(\score)}}
\end{equation}
the \emph{choice map}.  This corresponds to a lifted version of the usual regularized choice map from standard \acs{FTRL} \citep{MS16,KM17}.  Note also that, conditional on the optimal mass $\mass$, the maximization over $\strat$ is explicit. Indeed, the entropic nature of the regularizer implies that the choice map can be written as a softmax with a $\mass$-dependent temperature:
\begin{equation}\label{eq:chomap}
 \choice(\score)=\softmax\!\left(\frac{\sqrt{\mass}}{1+\sqrt{\mass}}\score\right).
\end{equation}
As such, the only additional optimization introduced by $\mass$ is an easy one-dimensional problem.

\subsection{The algorithm}
We denote the algorithm's \emph{step size} by $\learn>0$.  We shall now define the \emph{predictor}:
initialize $\cpay_{i,k}=\pred_{i,k}=0$ for $-\nPlayers\le k\le0$, then, at time $t$, form the prediction
\begin{equation}\label{eq:pred}
 \pred_{i,t}=
 \sum_{k=1}^{\nPlayers+1}(-1)^{k+1}\binom{\nPlayers+1}{k}
 \bigl[(1-\disc^k)\cpay_{i,t-k}+\disc^k \pred_{i,t-k}\bigr],
\end{equation}
where its \emph{discount factor} $\disc$ is fixed by the number of players:
\begin{equation}
       \disc=\frac{\nPlayers}{\nPlayers+1}.
\end{equation}
We moreover write
\begin{equation}\label{eq:perr}
 \perr_{i,t}=\cpay_{i,t}-\pred_{i,t}
\end{equation}
for its \emph{prediction error}. We then define the \acdef{HOOD} algorithm as \cref{alg:hood}.

\begin{algorithm}[h]
\caption{\acs{HOOD}}\label{alg:hood}
\begin{algorithmic}[1]
\State $\score_{i,0}\gets0$, $\cpay_{i,k},\pred_{i,k}\gets0$ for $-\nPlayers\le k\le0$
\For{$t=1,2,\ldots$}
 \State compute $\pred_{i,t}$ from \eqref{eq:pred}, set $\score_{i,t-1/2} \gets \score_{i,t-1}+\pred_{i,t}$
 \State update $\strat_{i,t}\gets \choice(\learn \score_{i,t-1/2})$, play $\strat_{i,t}$
 \State observe $\payv_{i,t}$, set
 $\cpay_{i,t}\gets \payv_{i,t}-\pay_i(\strat_t)\one$ and
 $\score_{i,t}\gets \score_{i,t-1}+\cpay_{i,t}$
\EndFor
\end{algorithmic}
\end{algorithm}

The resulting rule is uncoupled, in the sense that player $i$ only uses the game dimensions $\nPlayers,\nPures$, its own mixed action and its observed payoff vectors. Moreover, neither the update nor any of its parameters requires the horizon $\horizon$.

\subsection{Regret guarantee} We may now state our main result:

\begin{theorem}\label{thm:main}
Let \begin{equation}\Lreg=4+\log \nPures+\frac14\log^2 \nPures.\end{equation} Suppose every player uses \Cref{alg:hood} with a step size satisfying
\begin{equation}\label{eq:eta}
 0<\learn\le\frac{1}{60(\nPlayers+1)^3}.
\end{equation}
Then, for every player $i$,
\begin{equation}\label{eq:bd}
 \sup_{\horizon\ge1}\reg_i(\horizon)
 \le
 \frac{3\Lreg}{\learn}.
\end{equation}
In particular, for the largest choice $\learn=1/[60(\nPlayers+1)^3]$,
\begin{equation}\label{eq:bdmax}
 \sup_{\horizon\ge1}\reg_i(\horizon)
 \le 45 (\nPlayers+1)^{3}
        \bracks{16 + 4 \log\nPures + \log^{2} \nPures}.
\end{equation}
Consequently,
\begin{equation}\label{eq:rate}
 \sup_{\horizon\ge1}\reg_i(\horizon)=\bigoh(\nPlayers^3\log^2\nPures).
\end{equation}
\end{theorem}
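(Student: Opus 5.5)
The plan has three parts: an optimistic-FTRL regret bound on the lifted space, control of the prediction error through the recurrence \eqref{eq:pred}, and an algebraic expansion of payoff differences that must terminate after about $\nPlayers$ steps.

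\textbf{Step 1: lifted RVU bound.} Write $\zlift_{i,t}=\lmir(\learn\score_{i,t-1/2})$, so that $\zlift_{i,t}=\mass_{i,t}\strat_{i,t}$. Since $\braket{\strat_{i,t}}{\cpay_{i,t}}=0$, we have $\braket{\zlift_{i,t}}{\cpay_{i,t}}=0$. The lifted regret against the comparator $\pure\in\pures\subset\Kset$ is therefore exactly $\sum_t \cpay_{i\pure,t}$. Comparing against $0\in\Kset$ as well shows that we are really bounding $[\reg_i(\horizon)]_+$, which is what lets us sum over players without cancellation.

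The standard optimistic FTRL analysis for $\lreg/\learn$ on $\Kset$ should give, for every $\horizon$,
\begin{equation}
 \reg_i(\horizon)\le \frac{\lreg(\pure)-\min_{\Kset}\lreg}{\learn}+\sum_{t\le\horizon}\braket{\perr_{i,t}}{\zlift^{+}_{i,t}-\zlift_{i,t}}-\frac1\learn\sum_{t\le\horizon}\bigl[\bregofX{\lreg}{\zlift^{+}_{i,t}}{\zlift_{i,t}}+\bregofX{\lreg}{\zlift_{i,t+1}}{\zlift^+_{i,t}}\bigr],
\end{equation}
where $\zlift^+_{i,t}=\lmir(\learn\score_{i,t})$.

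I need the range bound $\lreg(\pure)-\min\lreg\le 3\Lreg$, which holds because $\lreg(\pure)=0$ and $\lreg\ge -2\log\nPures-\tfrac32\Lreg$. This already produces the claimed $3\Lreg/\learn$. The goal of the remaining steps is therefore to show that the error term is absorbed by the negative Bregman terms, summed appropriately.

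Next I would use local strong convexity of $\lreg$ near $\zlift$. This should give $\bregofX{\lreg}{\zlift'}{\zlift}\gtrsim \mass\,\norm{\strat'-\strat}_1^2+(\mass'-\mass)^2/\mass$ up to the $\sqrt\mass$ factors. By Young's inequality, $\braket{\perr}{\zlift^+-\zlift}\le c\learn\,\mass\norm{\perr}_{\strat,\ast}^2+\tfrac{1}{4\learn}\breg$. The crucial point is that error terms come weighted by the mass $\mass_{i,t}$ or $\sqrt{\mass_{i,t}}$.

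\textbf{Step 2: unroll the predictor.} Define the operator $(S\cpay)_t=\cpay_t-\disc\cpay_{t-1}$. Rewriting \eqref{eq:pred} should give the identity
\begin{equation}
 \perr_{i,t}=\sum_{k=0}^{\nPlayers+1}(-1)^k\tbinom{\nPlayers+1}{k}\bigl[\cpay_{i,t-k}-\disc^k(\cpay_{i,t-k}-\perr_{i,t-k})\bigr]
\end{equation}
up to bookkeeping. In effect, $(1-\disc B)^{\nPlayers+1}\perr=\bigl[(1-B)^{\nPlayers+1}-(1-\disc B)^{\nPlayers+1}+\dots\bigr]\cpay$, which makes the prediction error a geometrically discounted $(\nPlayers+1)$-th difference of $\cpay$. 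Since the inverse of $(1-\disc B)^{\nPlayers+1}$ has $\ell_1$ kernel norm $(1-\disc)^{-(\nPlayers+1)}$, I would instead keep the discounted form and bound $\sum_t\mass_{i,t}\norm{\perr_{i,t}}^2$ by weighted sums of squared higher differences $\diff^{k}\cpay_{i}$. The constraint $\disc=\nPlayers/(\nPlayers+1)$ keeps the losses polynomial, roughly $(\nPlayers+1)^{O(1)}$.

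\textbf{Step 3: expand $\cpay$ in the players' strategies.} By multilinearity, $\diff^{k}\cpay_{i,t}$ expands via a discrete Leibniz rule into products of differences $\diff^{k_j}\strat_{j}$ over opponents $j$, with $\sum k_j=k$. A term in which some player $j$ carries a first difference is controlled by that player's Bregman terms (step Step 1's negative terms for $j$). Higher differences of a repeated player label can be reduced to lower ones of that same player, which are already controlled. Hence an uncontrolled term must introduce a fresh label at each order, and since there are only $\nPlayers$ labels, the expansion stops by order $\nPlayers+1$.

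Summing over all $i$ then gives $\sum_i\sum_t(\text{error})\le \tfrac12\sum_i\sum_t\breg/\learn$ whenever $\learn\le 1/(60(\nPlayers+1)^3)$. Because each player's lifted regret is nonnegative, the aggregate bound passes to each individual $\reg_i$.

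\textbf{Main obstacle.} The hardest step is Step 3. The difficulty is making the Bregman terms of player $j$ control $\diff\strat_j$ with the correct mass weights, since $\lreg$ degenerates as $\mass\to0$. I would first try to establish the per-player bound $\mass_{i,t}\norm{\diff\strat_{i,t}}_1^2\lesssim \breg$, together with the observation that a small $\mass_{i}$ means that player's regret is negative. This should let the combinatorial termination close with constant $(\nPlayers+1)^3$.
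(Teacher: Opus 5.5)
\textbf{The gap is in Step 3.} Step 1 is sound: it is essentially the paper's lifted RVU inequality. Your overall architecture (a lift to get $[\mathrm{Reg}_i]_+$, a discounted $(N+1)$-th order prediction error, termination by fresh player labels) also matches the paper's. Step 3, however, asserts exactly the controls that fail in the lifted geometry. Player $j$'s Bregman movement only yields $\|\Delta x_{j,t}\|_1^2\lesssim(\eta/\lambda_{j,t-1})(P_{j,t}+G_{j,t-1})$. Inside player $i$'s prediction error nothing compensates the factor $1/\lambda_j$, since your local-norm weights carry $\lambda_i$, not $\lambda_j$.

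So ``a term in which $j$ carries a first difference is controlled by $j$'s Bregman terms'' is false. If it held, the ordinary one-step predictor plus the lift would already give bounded regret, and no higher-order optimism would be needed. Likewise, ``higher differences of a repeated label reduce to lower ones already controlled'' has no mechanism behind it. The remark that small $\lambda_j$ means negative regret does not help either, since that player's movement still enters every other player's error.

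\textbf{The missing idea is a property of the choice map.} Write $\Delta x_j=r_j+J_jg_j-J_j\Delta e_j$ by Taylor expansion around $\eta y_{j,t-1}$. Control $r_j$ and the Jacobian variations $\Delta J_j$ by $\eta^3(P+G)$ up to polynomial factors (Jacobian stability and Taylor remainder bounds), and $J_j\Delta e_j$ by $E$. Crucially, prove $\|J_j\|_{\infty\to1}\le2\eta\sqrt{\lambda_j}$, with masses at times within $N+1$ of each other comparable. Then $\|J_{j,t'}\|^2\|J_{j,t}\Delta y_{j,t-1/2}\|_1^2\lesssim(\eta^2\lambda_j)(\eta/\lambda_j)(P+G)=\eta^3(P+G)$. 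The mass cancels only when a Jacobian and a strategy difference of the \emph{same} player meet. That is the actual reason a surviving term ($J_kg_k$ with $k$ new) must carry a fresh label. No higher differences $\Delta^kx_j$ with $k\ge2$ are ever needed, because each revealed $\Delta x_j$ is decomposed before the next difference is taken.

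The specific $\psi$ is built for this. The $\sqrt\lambda\,h(x)$ term makes $\chi(y)$ the softmax of $\frac{\sqrt\lambda}{1+\sqrt\lambda}y$. The $-3L\sqrt{\lambda(1-\lambda)}$ term, with $L-2\ge\mathrm{Var}_x(\log x)$, yields $|D\lambda[q]|\le\lambda^{3/2}\|q\|_\infty$ and $|D\log\lambda[q]|\le\frac14\|q\|_\infty$, so the score dependence of the mass does not destroy the $\sqrt\lambda$ factor. Your plan uses $\psi$ only through its range. For a naive lift $\lambda h(x)+\varphi(\lambda)$, the Jacobian stays of order $\eta$ as $\lambda\to0$ and the argument cannot close.

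\textbf{Step 2 is also incomplete.} The recurrence defining $m_{i,t}$ is exactly $(1-\delta B)^{N+1}e_i=(1-B)^{N+1}g_i$, i.e.\ $e_i=\sigma^{N+1}\Delta^{N+1}g_i$ with $\sigma$ the $\delta$-discounted sum. Your displayed identity and its ``in effect'' version are not correct. You note that $\sigma^{N+1}$ alone costs $(N+1)^{N+1}$ but do not say how to avoid it. The paper pairs each $\sigma$ with a $\Delta$: $\|\sigma\Delta\|_{\ell_2\to\ell_2}\le2/(1+\delta)$, and $(\sigma\Delta)^q$ has $\ell_1$ kernel norm below $4(N+1)$. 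So $\sigma^{p}(\sigma\Delta)^{N+1-p}$, the operator acting once $p$ differences have been extracted, costs $4(N+1)^{p+1}$. Each newly unpaired $\sigma$ (factor $N+1$) is paid for by the new Jacobian-times-payoff factor $2\eta$. With at most $3p(N-p)$ continuing branches per level, this is precisely where $\eta\le1/(60(N+1)^3)$ comes from.

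\textbf{The final step is also incomplete.} The prediction error is not fully absorbed by Bregman terms: there is an initialization cost from the jump out of the zero prehistory (e.g.\ $e_{i,1}=g_{i,1}$). So the bound cannot be ``range$/\eta$'' alone. Even granting absorption, summing over players and invoking nonnegativity only gives $[\mathrm{Reg}_i]_+\le3NL/\eta$. The paper closes in two stages. First, nonnegativity of all $[\mathrm{Reg}_j]_+$ gives $P+G\le N\Omega/\eta+\eta E$. Combined with $\sqrt E\le5\sqrt N(N+1)+2000(N+1)^5\eta^{3/2}\sqrt{P+G}+\frac3{10}\sqrt E$, this yields $E<62^2(N+1)^6L$ uniformly in $T$. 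Only then is $E$ plugged into player $i$'s own inequality $[\mathrm{Reg}_i]_+\le\Omega/\eta+\frac\eta2E$. The sharper bound $\Omega\le\frac{13}6L$ (rather than $3L$) leaves the room needed to finish below $3L/\eta$.
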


Note that the admissible step size range in \eqref{eq:eta} is independent of the horizon $\horizon$. By the earlier regret-to-\acs{CCE} remark, we also have the following immediate consequence:

\begin{corollary}\label{cor:cce}
Under the conditions of \Cref{thm:main}, let $\mu_\horizon=\horizon^{-1}\sum_{t=1}^\horizon\bigotimes_i \strat_{i,t}$. Then $\mu_\horizon$ is a $\frac{3\Lreg}{\learn \horizon}$-approximate coarse correlated equilibrium.
\end{corollary}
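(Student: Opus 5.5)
The plan is to derive the corollary directly from \Cref{thm:main}, using only multilinearity of the payoffs and the definition of an approximate coarse correlated equilibrium. Recall that a distribution $\mu$ on $\prof$ is an $\eps$-approximate \acs{CCE} if, for every player $i$ and every deviation $\pure\in\pures$,
\begin{equation}
\ex_{\purealt\sim\mu}\bigl[\pay_i(\pure,\purealt_{-i})\bigr]-\ex_{\purealt\sim\mu}\bigl[\pay_i(\purealt)\bigr]\le\eps.
\end{equation}
So it suffices to compute both expectations under $\mu_\horizon$ and recognize the resulting difference as $\horizon^{-1}$ times a quantity that \Cref{thm:main} bounds.

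The first step uses linearity of expectation in the mixture. Since $\mu_\horizon$ is the uniform average of the product measures $\bigotimes_j\strat_{j,t}$, we get
\begin{equation}
\ex_{\mu_\horizon}[\pay_i]=\frac1\horizon\sum_{t=1}^\horizon\pay_i(\strat_t).
\end{equation}
The second step handles the deviation term. For a fixed $\pure$, the map $\purealt\mapsto\pay_i(\pure,\purealt_{-i})$ integrated against the product measure $\bigotimes_j\strat_{j,t}$ equals the multilinear extension $\pay_i(\pure,\strat_{-i,t})$. This holds because player $i$'s marginal integrates to one and the opponents' coordinates are independent under each product component. Averaging over $t$ then gives
\begin{equation}
\ex_{\mu_\horizon}[\pay_i(\pure,\cdot_{-i})]=\frac1\horizon\sum_{t=1}^\horizon\pay_i(\pure,\strat_{-i,t}).
\end{equation}

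Subtracting the two expressions, the deviation gain for $(i,\pure)$ is $\horizon^{-1}\sum_{t=1}^\horizon\cpay_{i\pure,t}$, by \eqref{eq:gcoord}. Taking the maximum over $\pure$ gives exactly $\reg_i(\horizon)/\horizon$, by \eqref{eq:reg}. Applying \eqref{eq:bd}, which holds for every $\horizon\ge1$ and every player $i$, bounds this by $3\Lreg/(\learn\horizon)$ uniformly over all players and all deviations, which is the claim.

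No step here is a real obstacle. The only point that needs care is that $\mu_\horizon$ is a mixture of product measures, not itself a product measure. The deviation payoff is therefore evaluated component by component and then averaged; one must not average the strategies first and then apply multilinearity.
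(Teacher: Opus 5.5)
Your proof is correct and is essentially the paper's argument: the paper obtains the corollary by invoking the standard regret-to-CCE reduction (citing the textbook references) together with the uniform bound \eqref{eq:bd}. Your computation of each deviation gain under the mixture $\mu_\horizon$, which yields a maximum of $\reg_i(\horizon)/\horizon$, is that reduction written out explicitly.
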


\begin{remark}[Adversarial robustness]
In the previous theorem, all players are assumed to use \Cref{alg:hood}. We can nonetheless exploit the boundedness of its regret guarantee to obtain robustness against adversarial payoff sequences via a simple switching rule. The idea is for each player to follow \Cref{alg:hood} until the regret bound is exceeded, and then switch permanently to a learning algorithm with adversarial regret guarantees. The details are given in \Cref{app:switch}.
\endenv
\end{remark}

\section{Proof overview}\label{sec:proof}

In this section we overview the proof of \Cref{thm:main}, where the complete proofs are given in the appendices. The proof is based on the following four observations:

\begin{enumerate}
\item First, the usual \acdef{RVU} inequality for \acdef{OFTRL} suggests the two quantities that must be related. Writing $x_{i,t-1/2}$ for the optimistic iterate and $x_{i,t}$ for the corresponding unoptimistic iterate, it gives
\begin{align}
 \reg_i(\horizon)
 &\le
 \frac{\rrange}{\learn}
 +
 \frac{\learn}{2}
 \sum_{t=1}^{\horizon}\norm*{\perr_{i,t}}_\infty^2
 \\
 &\qquad
 -
 \frac1{\learn}\sum_{t=1}^{\horizon}
 \left[
  \bregofX{\hreg}{x_{i,t-1/2}}{x_{i,t-1}}
  +
  \bregofX{\hreg}{x_{i,t}}{x_{i,t-1/2}}
 \right],
 \label{eq:rvu-schematic}
\end{align}
where $\rrange$ is the range of the regularizer. The last two Bregman divergences measure the movement of the optimistic iterate to its two neighboring unoptimistic iterates. Thus, if the regret term on the left were nonnegative, \eqref{eq:rvu-schematic} would bound this Bregman movement in terms of $\rrange$ and the prediction error. Bounded regret would then follow from a complementary bound controlling the prediction error by this movement, with sufficiently small coefficients to close the two inequalities.
\item Second, following the lifting idea of \citet{FALL+22}, the first issue can be resolved by performing the regularized update on the lifted simplex $\Kset$ introduced in \Cref{sec:alg}. Since $\Kset$ contains the origin and the payoff vectors are centered as in \eqref{eq:g}, the regret term in the resulting \acs{RVU} inequality becomes $[\reg_i(\horizon)]_+$ and is therefore nonnegative.
\item Third, the needed prediction error bound can be obtained using higher-order time differences. This part is inspired by the higher-order smoothness argument of \citet{DFG21} where, as in their proof, successive payoff differences are related to strategy differences, which are then related back to payoff terms through the learning map. Here, instead of proving that sufficiently high finite differences of the resulting dynamics are small, we choose the predictor so that its prediction error already contains $\nPlayers+1$ time differences. Conditional on suitable local bounds for the learning map, every term that remains uncontrolled after one more difference must introduce a new player label. Since there are only $\nPlayers$ players, this expansion terminates after finitely many steps. The discounting in \eqref{eq:pred} then prevents these higher-order differences from introducing an exponential dependence on $\nPlayers$.
\item Fourth, this expansion requires local bounds that ordinary entropic \acs{OFTRL} satisfies on the simplex: the Jacobian of the choice map must be small (that is, proportional to the step size), Jacobian time differences and Taylor remainders must be controlled by Bregman movement, and a newly revealed variation of a player whose Jacobian already occurs in the expansion must also be controlled by Bregman movement. A naive lift does not preserve the last bound when the optimizing mass becomes small, so the particular regularizer \eqref{eq:lreg} is designed precisely so that all of these controls remain valid after lifting.
\end{enumerate}

We now make these four observations more explicit.

\subsection{The lifted \acs{RVU} inequality}

For each player $i$, recall from \Cref{alg:hood} the optimistic score $\score_{i,t-1/2}=\score_{i,t-1}+\pred_{i,t}$ and the cumulative score $\score_{i,t}=\score_{i,t-1}+\cpay_{i,t}$. Define the corresponding lifted iterates
\begin{equation}
 \zlift_{i,t-1/2}
 =
 \lmir(\learn\score_{i,t-1/2}),
 \qquad
 \zlift_{i,t}
 =
 \lmir(\learn\score_{i,t}),
\end{equation}
and the Bregman movements
\begin{equation}
 \PV_{i,t}
 =
 \bregofX{\lreg/\learn}{\zlift_{i,t-1/2}}{\zlift_{i,t-1}},
 \qquad
 \NV_{i,t}
 =
 \bregofX{\lreg/\learn}{\zlift_{i,t-1/2}}{\zlift_{i,t}}.
\end{equation}
For a horizon $\horizon$, set
\begin{equation}\label{eq:PNE-overview}
 \PV(\horizon)
 =
 \sum_{i=1}^{\nPlayers}\sum_{t=1}^{\horizon}\PV_{i,t},
 \qquad
 \NV(\horizon)
 =
 \sum_{i=1}^{\nPlayers}\sum_{t=1}^{\horizon}\NV_{i,t},
 \qquad
 \EE(\horizon)
 =
 \sum_{i=1}^{\nPlayers}\sum_{t=1}^{\horizon}
 \norm*{\perr_{i,t}}_\infty^2,
\end{equation}
where $\perr_{i,t}$ is the prediction error from \eqref{eq:perr}, and write
\begin{equation}
 \rrange
 =
 \max_{z\in\Kset}\lreg(z)-\min_{z\in\Kset}\lreg(z).
\end{equation}

The usual \acs{OFTRL} analysis already contains a positive term measuring prediction error and a negative Bregman movement term, but the obstacle to rearranging this inequality is that external regret itself can be negative, so the role of the lift is then to remove this obstacle. Indeed, \eqref{eq:gprop} gives $\braket*{\strat_{i,t}}{\cpay_{i,t}}=0$, while $\Kset=\conv(0,e_1,\ldots,e_{\nPures})$. Hence the comparator term in the lifted problem is
\begin{equation}\label{eq:auxreg-overview}
 \max_{z\in\Kset}\braket*{z}{\score_{i,\horizon}}
 =
 \max\left\{
 0,\max_{\pure\in\pures}
 \sum_{t=1}^{\horizon}\cpay_{i\pure,t}
 \right\}
 =
 [\reg_i(\horizon)]_+.
\end{equation}
The lifted \acs{RVU} inequality proved in \Cref{app:oftrl} therefore gives
\begin{equation}\label{eq:energy-overview}
 \sum_{i=1}^{\nPlayers}[\reg_i(\horizon)]_+
 \le
 \frac{\nPlayers\rrange}{\learn}
 +
 \frac{\learn}{2}\EE(\horizon)
 -
 \PV(\horizon),
 \qquad
 \NV(\horizon)
 \le
 \frac{\learn}{2}\EE(\horizon).
\end{equation}
which is the lifted analogue of the classical \ac{RVU} inequality \eqref{eq:rvu-schematic}. Now, since the left-hand side is nonnegative, \eqref{eq:energy-overview} immediately implies
\begin{equation}\label{eq:PN-overview}
 \PV(\horizon)+\NV(\horizon)
 \le
 \frac{\nPlayers\rrange}{\learn}
 +
 \learn\EE(\horizon).
\end{equation}

It remains to prove the converse type of control, that is, the prediction error must itself be bounded by the Bregman movement. The bound proved in \Cref{prop:err} is
\begin{equation}\label{eq:E-overview}
 \sqrt{\EE(\horizon)}
 \le
 5\sqrt{\nPlayers}(\nPlayers+1)
 +
 2000(\nPlayers+1)^5\learn^{3/2}
 \sqrt{\PV(\horizon)+\NV(\horizon)}
 +
 \frac3{10}\sqrt{\EE(\horizon)}.
\end{equation}
Combining \eqref{eq:PN-overview} and \eqref{eq:E-overview} will give a bound on $\EE(\horizon)$ which is uniform in $\horizon$. Thus the main work is to prove \eqref{eq:E-overview}.

\subsection{The prediction error and Bregman movement bounds}

We first state the bounds used in the higher-order argument, without yet explaining why the lifted regularizer satisfies them, which we postpone to \cref{sec:proof-lifted-geometry}.

For a sequence $w=(w_t)$, write
\begin{equation}
 (\diff w)_t=w_t-w_{t-1}
\end{equation}
for its first time difference. Let
\begin{equation}
 \Jfun(\score)=D\choice(\score)
\end{equation}
be the Jacobian of the choice map, and define
\begin{equation}
 \jac_{i,t}
 =
 \learn\Jfun(\learn\score_{i,t-1}).
\end{equation}
To control $\diff\strat_{i,t}$, note that both $\strat_{i,t}$ and $\strat_{i,t-1}$ are perturbations of the same unoptimistic iterate $\choice(\learn\score_{i,t-1})$. Indeed,
\begin{equation}
 \strat_{i,t}
 =
 \choice\!\left(\learn(\score_{i,t-1}+\pred_{i,t})\right),
 \qquad
 \strat_{i,t-1}
 =
 \choice\!\left(\learn(\score_{i,t-1}-\perr_{i,t-1})\right).
\end{equation}
Writing $\jac_{i,t}=\learn\Jfun(\learn\score_{i,t-1})$, Taylor expansion around $\learn\score_{i,t-1}$ gives
\begin{equation}\label{eq:mir-overview}
 \diff\strat_{i,t}
 =
 \jac_{i,t}\diff\score_{i,t-1/2}+\remd_{i,t},
\end{equation}
where the remainder term is
\begin{equation}
 \remd_{i,t} =
 \bigl(\strat_{i,t}-\choice(\learn\score_{i,t-1})
       -\jac_{i,t}\pred_{i,t}\bigr) -
 \bigl(\strat_{i,t-1}-\choice(\learn\score_{i,t-1})
       +\jac_{i,t}\perr_{i,t-1}\bigr).
\end{equation}
Since
\begin{equation}\label{eq:score-difference-overview}
 \diff\score_{i,t-1/2}
 =
 \cpay_{i,t}-\diff\perr_{i,t},
\end{equation}
we obtain
\begin{equation}\label{eq:mirror-overview}
 \diff\strat_{i,t}
 =
 \remd_{i,t}
 +
 \jac_{i,t}\cpay_{i,t}
 -
 \jac_{i,t}\diff\perr_{i,t}.
\end{equation}

There are then three bounds in \Cref{lem:mir1} that we will use repeatedly.
First,
\begin{equation}\label{eq:Jbd-overview}
 \opnorm{\jac_{i,t}}{\infty}{1}
 \le
 \learn,
\end{equation}
so that the terms containing $\jac_{i,t}\diff\perr_{i,t}$ can be controlled by the prediction error energy $\EE(\horizon)$. Second,
\begin{equation}\label{eq:local-energy-overview}
 \sum_{i=1}^{\nPlayers}\sum_{t=1}^{\horizon}
 \norm*{\remd_{i,t}}_1^2
 \lesssim
 (\nPlayers+1)^2\learn^3
 \bigl(\PV(\horizon)+\NV(\horizon)\bigr),
\end{equation}
and
\begin{equation}\label{eq:Jenergy-overview}
 \sum_{i=1}^{\nPlayers}\sum_{t=1}^{\horizon}
 \opnorm{\jac_{i,t}-\jac_{i,t-1}}{\infty}{1}^2
 \lesssim
 \learn^3
 \bigl(\PV(\horizon)+\NV(\horizon)\bigr),
\end{equation}
so the Taylor remainders and Jacobian differences are controlled by the same Bregman movements appearing in \eqref{eq:PN-overview}.

The third bound is used when the same player appears twice in the expansion. If $|t-t'|\le\nPlayers+1$, \Cref{lem:mir1} gives
\begin{equation}\label{eq:repeat-overview}
 \opnorm{\jac_{i,t'}}{\infty}{1}^2
 \norm*{\jac_{i,t}\diff\score_{i,t-1/2}}_1^2
 \lesssim
 \learn^3
 \bigl(\PV_{i,t}+\NV_{i,t-1}\bigr).
\end{equation}
We refer to \eqref{eq:repeat-overview} as the \emph{repeated player bound}. The higher-order argument below will use only \eqref{eq:Jbd-overview}--\eqref{eq:repeat-overview}; afterward we explain how \eqref{eq:lreg} is designed so that they remain valid despite the lifting.

\subsection{Higher-order optimism with discounting}

We next explain the prediction error expansion, where the starting point is multilinearity of the game, namely, that every payoff difference reveals strategy differences. Indeed, \Cref{lem:paydiff} shows that, up to a finite initialization term, $\diff\cpay_i$ is a sum of linear maps applied to strategy variations $\diff\strat_j$, verifying the bounds in \eqref{eq:Ccontr}. Substituting \eqref{eq:mirror-overview}, each revealed strategy variation has the form
\begin{equation}\label{eq:basic-expansion-overview}
 \diff\strat_j
 =
 \underbrace{\remd_j}_{\text{controlled by \eqref{eq:local-energy-overview}}}
 +
 \underbrace{\jac_j\cpay_j}_{\text{may continue}}
 -
 \underbrace{\jac_j\diff\perr_j}_{\text{controlled using \eqref{eq:Jbd-overview}}},
\end{equation}
so only the middle term may require another time difference.
Indeed, applying the discrete product rule gives, suppressing time offsets,
\begin{equation}\label{eq:product-overview}
 \diff(\jac_j\cpay_j)
 =
 (\diff\jac_j)\cpay_j
 +
 \jac_j\diff\cpay_j.
\end{equation}
The first term is controlled by \eqref{eq:Jenergy-overview} and $\norm*{\cpay_j}_\infty\le2$ from \eqref{eq:gprop}. The second again reveals strategy variations through \Cref{lem:paydiff}, and \eqref{eq:basic-expansion-overview} introduces another Jacobian. If the newly revealed variation belongs to a player whose Jacobian is already present in the term, \eqref{eq:repeat-overview} controls it. Thus a term can remain uncontrolled only if it introduces a player label that has not previously appeared in that term, and we will call such a label \emph{fresh}. This is in particular where higher-order differences enter the argument, in that we build enough time differences directly into the prediction error. Indeed, since every continuing step introduces a fresh label and there are only $\nPlayers$ players, so $\nPlayers+1$ differences are sufficient to force every expansion branch to terminate.

Nonetheless, using the ordinary higher-order difference $\diff^{\nPlayers+1}$ would introduce the bound
\begin{equation}
 \norm*{\diff^{\nPlayers+1}w}_{\ell_2}
 \le
 2^{\nPlayers+1}\norm*{w}_{\ell_2},
\end{equation}
where, for a vector-valued sequence, we recall that $\norm*{w}_{\ell_2}=(\sum_t\norm*{w_t}_r^2)^{1/2}$, with $r=1$ or $\infty$ depending on context. This bound is notably exponential in the number of players, and the discounting in \Cref{alg:hood} is chosen to avoid this loss. Indeed, for a sequence $w=(w_t)_{t\in\Z}$, define the discounted sum
\begin{equation}
 (\dsum w)_t
 =
 \sum_{k=0}^{\infty}\disc^k w_{t-k},
 \qquad
 \disc=\frac{\nPlayers}{\nPlayers+1},
\end{equation}
where the sequences arising below have their negative index terms set to $0$ as specified in \Cref{alg:hood}. The predictor identity proved in \Cref{app:diff} states that
\begin{equation}\label{eq:derr-overview}
 \perr_i
 =
 \dsum^{\nPlayers+1}\diff^{\nPlayers+1}\cpay_i.
\end{equation}
Hence the prediction error contains the required $\nPlayers+1$ ordinary differences, paired with geometric discounting.

To reveal these differences one at a time, define
\begin{equation}\label{eq:Kdef-overview}
 \dop_p
 =
 \dsum^{\nPlayers+1}\diff^{\nPlayers+1-p},
 \qquad
 0\le p\le\nPlayers+1,
\end{equation}
so that
\begin{equation}\label{eq:Krec-overview}
 \dop_p=\dop_{p+1}\diff,
 \qquad
 0\le p\le\nPlayers.
\end{equation}
The sequence bounds of \Cref{lem:seq} give
\begin{equation}\label{eq:seq-overview}
 \norm*{\dop_pw}_{\ell_2}
 \le
 4(\nPlayers+1)^{p+1}\norm*{w}_{\ell_2},
\end{equation}
where the estimate underlying this bound is
\begin{equation}\label{eq:Dpow-overview}
 \norm*{(\dsum\diff)^q w}_{\ell_2}
 \le
 2\norm*{w}_{\ell_2},
 \qquad
 0\le q\le\nPlayers+1,
\end{equation}
for scalar sequences $w$. Indeed, for vector-valued sequences equipped pointwise with the $\ell_1$ or $\ell_\infty$ norm, the scalar bound proved in \eqref{eq:Dql1} gives
\begin{equation}\label{eq:Dpow-vector-overview}
 \norm*{(\dsum\diff)^q w}_{\ell_2}
 \le
 4(\nPlayers+1)\norm*{w}_{\ell_2},
 \qquad
 0\le q\le\nPlayers+1,
\end{equation}
and combining this with the bound for $\dsum^p$ yields \eqref{eq:seq-overview}, so the discounting prevents the successive differences from producing exponential dependance on the number of players.

\subsection{One-step expansion}

We may now formalize the recursive expansion. At order $p$, call a term \emph{continuing} if it contains $p$ Jacobians with distinct player labels and has not yet been controlled by \eqref{eq:Jbd-overview}--\eqref{eq:repeat-overview}. The precise  conditions on its payoff and strategy factors are those in \Cref{lem:step}---for which give an abbreviated version below---and they are importantly preserved whenever a branch continues.

\begin{lemma}[One-step expansion; abbreviated]\label{lem:step-overview}
Let $M$ be a continuing term at order $1\le p\le\nPlayers$. Then
\begin{equation}\label{eq:step-overview}
 \diff M
 =
 R_M+\sum_{M'\in\cont(M)}M',
\end{equation}
where $R_M$ collects terms controlled by  $\PV(\horizon)+\NV(\horizon)$, or $\EE(\horizon)$, with the bound given in \eqref{eq:term}, the $M'\in\cont(M)$ terms are continuing terms at order $p+1$, and
\begin{equation}\label{eq:cont-overview}
 |\cont(M)|
 \le
 3p(\nPlayers-p).
\end{equation}
In particular, $\cont(M)=\varnothing$ when $p=\nPlayers$.
\end{lemma}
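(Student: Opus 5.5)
The plan is to write a continuing term at order $p$ in a canonical form, apply the discrete product rule factor by factor, and route every resulting piece either into $R_M$ or into a continuing term at order $p+1$. Concretely, I take a continuing term to be a multilinear expression
\begin{equation}
 M_t = C_t\bigl(\jac_{j_1,t_1}\cpay_{j_1,t_1},\dots,\jac_{j_p,t_p}\cpay_{j_p,t_p}\bigr),
\end{equation}
with distinct labels $j_1,\dots,j_p$ and time offsets $|t-t_k|\le\nPlayers+1$. Here $C_t$ is a payoff tensor coming from \Cref{lem:paydiff}: a partial derivative of some $\pay_i$ evaluated at a mixture of recent profiles, with $\opnorm{\cdot}{}{}$-type bounds as in \eqref{eq:Ccontr}. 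Applying $\diff$ via the Leibniz rule $\diff(ab)=(\diff a)b_{t}+a_{t-1}\diff b$ produces three kinds of pieces: (a) pieces where $\diff$ hits $C_t$; (b) pieces where $\diff$ hits some $\jac_{j_k}$; and (c) pieces where $\diff$ hits some $\cpay_{j_k}$.

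Type (b) goes to $R_M$. I bound the remaining factors by \eqref{eq:Jbd-overview} and $\norm{\cpay}_\infty\le2$, and control the $\ell_2$-in-time sum of $\opnorm{\diff\jac}{\infty}{1}^2$ by \eqref{eq:Jenergy-overview}.

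In types (a) and (c), the difference reveals a strategy variation $\diff\strat_{\ell}$ of some player $\ell$. For (c) this comes from \Cref{lem:paydiff} applied to $\diff\cpay_{j_k}$, with $\ell\neq j_k$; for (a) it comes from multilinearity of $C$, with $\ell$ outside the players already frozen in $C$. I then substitute \eqref{eq:mirror-overview}, $\diff\strat_\ell=\remd_\ell+\jac_\ell\cpay_\ell-\jac_\ell\diff\perr_\ell$. The $\remd_\ell$ piece goes to $R_M$ by \eqref{eq:local-energy-overview}. The $\jac_\ell\diff\perr_\ell$ piece goes to $R_M$ by \eqref{eq:Jbd-overview}, at the cost of an $\EE(\horizon)$ term. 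For the $\jac_\ell\cpay_\ell$ piece there are two cases. If $\ell\in\{j_1,\dots,j_p\}$, the term already contains $\jac_{\ell,t'}$ with $|t-t'|\le\nPlayers+1$, so the repeated player bound \eqref{eq:repeat-overview} places it in $R_M$. Only when $\ell$ is fresh does it become an element of $\cont(M)$: it then has $p+1$ distinct labels, a new multilinear payoff factor $C'$ obeying the same bounds as \eqref{eq:Ccontr}, and offsets still within $\nPlayers+1$. This shows that the continuing form is preserved.

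The count $|\cont(M)|\le 3p(\nPlayers-p)$ comes from the choices available: one of the $p$ slots, one of the $\nPlayers-p$ fresh labels, and at most three sources for each pair, namely $C$ itself, the payoff difference, and a possible split of $\diff\cpay$ into an intra-player term. When $p=\nPlayers$ no fresh label exists, so $\cont(M)=\varnothing$. For the bound \eqref{eq:term} on $R_M$, I would sum squared $\ell_1$ norms over $t$ and $i$. Cauchy--Schwarz over the at most $O(p\nPlayers)$ pieces contributes polynomial factors in $\nPlayers+1$, and the remaining factors contribute powers of $\learn$ from $\opnorm{\jac}{\infty}{1}\le\learn$.

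I expect the main obstacle to be the bookkeeping needed for \eqref{eq:repeat-overview} to apply cleanly. After the product rule, the repeated Jacobian and the newly revealed one sit at different times and multiply different vectors, so I must arrange the expression so that it exactly matches the form $\opnorm{\jac_{i,t'}}{\infty}{1}\norm{\jac_{i,t}\diff\score_{i,t-1/2}}_1$. My first attempt would be to use the combined form $\jac_\ell\diff\score_{\ell}=\diff\strat_\ell-\remd_\ell$ directly, before splitting off $\diff\perr$, for repeated labels. I would also have to keep the time windows within $\nPlayers+1$ throughout the recursion.
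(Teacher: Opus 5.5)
Your route matches the paper's. You difference one factor at a time, send $\diff\jac$ to \eqref{eq:Jenergy-overview}, turn payoff and strategy differences into revealed $\diff\strat_\ell$, substitute \eqref{eq:cont0}, and let only $\jac_\ell\cpay_\ell$ with a fresh $\ell$ continue. Your fallback for repeated labels is also exactly what the paper does: apply \eqref{eq:repeat-overview} to $\jac_\ell\diff\score_{\ell}$ before splitting off $\diff\perr_\ell$, since $\jac_\ell\cpay_\ell$ is not what that bound controls.

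\textbf{The gap is the invariant you induct on.} The form $C_t(\jac_{j_1}\cpay_{j_1},\dots,\jac_{j_p}\cpay_{j_p})$ has a single payoff tensor and pairs each Jacobian with its own payoff vector. It survives only your case (a).
\begin{itemize}
\item In case (c), \cref{lem:paydiff} replaces $\cpay_{j_k}$ by $C_{j_k\ell}\jac_\ell\cpay_\ell$, so the continuing term contains $\jac_{j_k}C_{j_k\ell}\jac_\ell\cpay_\ell$.
\item Now $\cpay_{j_k}$ is gone and $\jac_{j_k}$ multiplies a nested tensor built from another player's payoff.
\item So the claim that ``the continuing form is preserved'' is false.
\end{itemize}
With it goes your justification of $|\cont(M)|\le 3p(\nPlayers-p)$. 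The ``three sources per (slot, label)'' count needs two facts your form does not track: how many payoff factors the term has, and how often a fixed label sits among the strategy factors accumulated over previous payoff differences. Each such difference inserts up to $\nPlayers-1$ new strategy factors.

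The paper uses weaker bookkeeping that is actually preserved:
\begin{itemize}
\item $p$ distinct Jacobian labels;
\item $1\le s\le p$ payoff factors: differencing a payoff factor keeps $s$, while differencing a strategy factor raises it by one;
\item at most $\nPlayers p$ strategy factors, with each label occurring at most $p$ times;
\item time offsets in $\{0,\dots,p\}$;
\item the product bound \eqref{eq:pcontr}.
\end{itemize}
With this, a fixed fresh label is revealed at most $2s+p\le 3p$ times. Each payoff factor contributes at most twice, once through the $C^{\payv}$ sum and once through the $C^{\pay}$ sum, and the strategy factors contribute at most $p$ times. This yields \eqref{eq:cont}.

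\textbf{Two omissions affect the bound \eqref{eq:term}.}
\begin{itemize}
\item \cref{lem:paydiff} expands $\diff\gext_j$, not $\diff\cpay_j$. By \eqref{eq:gbdry}, differencing a payoff factor also produces the boundary term $\mathbf 1_{\{t=1\}}\gext_{j,0}$. This is not a strategy variation at all; it is the source of the first term of \eqref{eq:term} that involves none of $\PV,\NV,\EE$, and it is handled by \eqref{eq:KptT}. A side remark: $\ell=j_k$ can occur through the $C^{\pay}$ sum, contrary to your ``$\ell\neq j_k$'', though this is harmless as a repeated label.
\item The bound is on $\dop_{p+1}R_M$, which your sketch never mentions. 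The factors $(\nPlayers+1)^{p+2}$ come from \eqref{eq:Kl2T}. The factors $p\sqrt{\nPlayers}$ come from \eqref{eq:mult} with $n_0=3\nPlayers p$ revealed variations and $n_1=3p$ occurrences per label, which again needs the multiplicity bookkeeping above. A generic Cauchy--Schwarz over ``$O(p\nPlayers)$ pieces'' does not deliver this.
\end{itemize}
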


\begin{figure}[t]
\centering
\begin{tikzpicture}[
  >=stealth,
  every node/.style={align=center,font=\small},
  term/.style={
    draw,
    rounded corners,
    inner xsep=5pt,
    inner ysep=4pt
  },
  stop/.style={
    draw,
    dashed,
    rounded corners,
    inner xsep=5pt,
    inner ysep=4pt
  },
  flow/.style={->,thick}
]
\node[term] (root) at (0,0)
{$M_t[(\jac)_{1,\ldots,p};(\cpay)_{1,\ldots,s};(\strat)_{1,\ldots,h}]$};

\node[term] (dJ) at (-4.5,-1.8)
{$\diff\jac$};
\node[term] (dg) at (0,-1.8)
{$\diff\cpay$};
\node[term] (dx) at (4.5,-1.8)
{$\diff\strat$};

\draw[flow] (root) -- node[above,sloped]{\scriptsize Jacobian term} (dJ);
\draw[flow] (root) -- node[right]{\scriptsize payoff term} (dg);
\draw[flow] (root) -- node[above,sloped]{\scriptsize strategy term} (dx);

\node[stop] (Jstop) at (-4.5,-3.5)
{controlled by $\learn^3(\PV+\NV)$};
\draw[flow] (dJ) -- (Jstop);

\node[term] (gdec) at (0,-3.5)
{{\color{gray}initialization term}\\
$+$ strategy variations};
\draw[flow] (dg) -- (gdec);

\node[term] (xdec) at (2.25,-5.3)
{$\diff\strat_j
 =\remd_j+\jac_j\cpay_j-\jac_j\diff\perr_j$};
\draw[flow] (gdec) -- (xdec);
\draw[flow] (dx) -- (xdec);

\node[stop] (rstop) at (-1.8,-7.2)
{$\remd_j$\\
controlled by $\learn^3(\PV+\NV)$};
\node[term] (jg) at (2.25,-7.2)
{$\jac_j\cpay_j$\\
may continue};
\node[stop] (estop) at (6.3,-7.2)
{$-\jac_j\diff\perr_j$\\
controlled by $\learn^2\EE$};

\draw[flow] (xdec) -- (rstop);
\draw[flow] (xdec) -- (jg);
\draw[flow] (xdec) -- (estop);

\node[stop] (rep) at (0,-9.2)
{$j$ repeated\\
controlled by $\learn^3(\PV+\NV)$};
\node[term] (fresh) at (4.5,-9.2)
{$j$ fresh\\
continue with $p+1$ labels};

\draw[flow] (jg) -- (rep);
\draw[flow] (jg) -- (fresh);
\end{tikzpicture}
\caption{One-step expansion of a continuing term.}
\label{fig:diagram}
\end{figure}

We sketch the two facts in \Cref{lem:step-overview} that drive the argument. By \eqref{eq:Krec-overview}, moving from order $p$ to order $p+1$ amounts to applying one ordinary difference. If this difference falls on a Jacobian, \eqref{eq:Jenergy-overview} controls the resulting term; if it falls on a payoff, \Cref{lem:paydiff} produces strategy variations, up to the finite initialization term; if it falls on a strategy factor, such a variation appears directly. In either case, \eqref{eq:basic-expansion-overview} shows that only $\jac_j\cpay_j$ may continue. If $j$ is already one of the existing Jacobian labels, \eqref{eq:repeat-overview} controls the resulting term, so every continuing branch introduces one fresh label. It remains then to count how many such branches are possible. Suppose the term contains $s$ payoff factors, then, by \Cref{lem:paydiff}, each payoff factor can reveal a fixed player label at most twice, while the  conditions in \Cref{lem:step} allow at most $p$ strategy factors carrying any fixed label, so a fixed fresh label can produce at most $2s+p\le3p$ continuing terms, and since $\nPlayers-p$ labels remain fresh, this gives \eqref{eq:cont-overview}. In particular, after all $\nPlayers$ labels have appeared, no branch can continue.

Finally, the branching must be compensated by the small factor gained by each continuation. By \eqref{eq:seq-overview}, moving one level further costs one additional factor $\bigoh(\nPlayers+1)$, while every continuing branch gains a new Jacobian of norm at most $2\learn$ by \Cref{lem:mir1}. Hence the total weighted continuation factor is at most
\begin{equation}\label{eq:contraction-overview}
 3p(\nPlayers-p)\,2\learn(\nPlayers+1)
 \le
 \frac32\nPlayers^2(\nPlayers+1)\learn,
\end{equation}
and this is in particular why the step size must be of order $\nPlayers^{-3}$. Under \eqref{eq:eta}, the continuing contribution contracts geometrically from one level to the next, while \Cref{lem:step-overview} guarantees that it terminates altogether after $\nPlayers$ levels.

Iterating \Cref{lem:step}, together with \Cref{lem:seq,lem:paydiff,lem:mir1}, gives \Cref{prop:err}, i.e., \eqref{eq:E-overview}. The three terms in that bound correspond respectively to the finite initialization cost, the terms stopped using Bregman movement, and the $-\jac_j\diff\perr_j$ terms in \eqref{eq:basic-expansion-overview}.

\subsection{Why the lifted regularizer preserves the expansion bounds}
\label{sec:proof-lifted-geometry}

We now return to \eqref{eq:Jbd-overview}--\eqref{eq:repeat-overview} and explain why they hold for the lifted choice map. Recall from \eqref{eq:lreg} that, for $z=\mass\strat\ne0$,
\begin{equation}\label{eq:lreg-overview}
 \lreg(\mass\strat)
 =
 (\mass+\sqrt{\mass})\hreg(\strat)
 -3\Lreg\sqrt{\mass(1-\mass)},
 \qquad
 \Lreg
 =
 4+\log\nPures+\frac14\log^2\nPures.
\end{equation}
The two parts of \eqref{eq:lreg-overview} are chosen to preserve the local entropic bounds needed by the expansion while controlling the additional mass variable. Indeed, to see what must be preserved, first consider ordinary entropy on the original simplex. In that case the analogues of the strategy movement and Jacobian bounds are
\begin{equation}\label{eq:ordinary-local-overview}
 \norm*{\diff\strat_{i,t}}_1^2
 \lesssim
 \learn(\PV_{i,t}+\NV_{i,t-1}),
 \qquad
 \opnorm{\jac_{i,t}}{\infty}{1}^2
 \lesssim
 \learn^2.
\end{equation}
Thus an existing Jacobian paired with a newly revealed variation of the same player contributes
\begin{equation}
 \learn^2\cdot\learn(\PV_{i,t}+\NV_{i,t-1})
 =
 \learn^3(\PV_{i,t}+\NV_{i,t-1}),
\end{equation}
which is the bound used in \eqref{eq:repeat-overview}. The lifted regularizer is thus chosen so that this cancellation survives when the mass is allowed to vary.

Indeed, consider first a naive lift
\begin{equation}\label{eq:naive-lift-overview}
 \lreg_0(\mass\strat)
 =
 \mass\hreg(\strat)+\varphi(\mass),
\end{equation}
where $\varphi$ depends only on $\mass$. Conditional on the optimizing mass, the factor $\mass$ cancels from the strategy dependent part of the optimization, so its normalized choice map is still $\choice_0(\score)=\softmax(\score)$, and its Jacobian therefore remains of constant order when $\mass$ becomes small. On the other hand, the strategy-dependent part $\mass\hreg(\strat)$ gives only
\begin{equation}\label{eq:naive-movement-overview}
 \norm*{\strat'-\strat}_1^2
 \lesssim
 \frac{1}{\mass}
 \bregofX{\lreg_0}{\mass'\strat'}{\mass\strat}.
\end{equation}
The analogue of the repeated player bound would therefore contain $\learn^3/\mass$ and become uncontrolled as $\mass\to0$.

As for the term $\mass\hreg(\strat)$ in \eqref{eq:lreg-overview}, it retains this Bregman control, where \Cref{lem:coerc} gives
\begin{equation}\label{eq:strategy-bregman-overview}
 \norm*{\strat'-\strat}_1^2
 \lesssim
 \frac{1}{\mass}
 \bregofX{\lreg}{\mass'\strat'}{\mass\strat}
\end{equation}
for nearby scores. The additional term $\sqrt{\mass}\hreg(\strat)$ then contributes by suppling the compensating factor in the Jacobian. Indeed, \Cref{lem:maps} gives the choice map formula already stated in \eqref{eq:chomap},
\begin{equation}\label{eq:choice-map-overview}
 \choice(\score)
 =
 \softmax\!\left(
 \frac{\sqrt{\mass}}{1+\sqrt{\mass}}\score
 \right).
\end{equation}
If the optimizing mass were fixed, \eqref{eq:choice-map-overview} would immediately make the Jacobian smaller by a factor of order $\sqrt{\mass}$. Nonetheless, the optimizing mass itself depends on the score, so we must also control this dependence. This is the role of the single mass-dependent term $-3\Lreg\sqrt{\mass(1-\mass)}$ and of the particular choice of $\Lreg$. Indeed, \Cref{lem:logmom} gives
\begin{equation}\label{eq:logmoment-overview}
 \Var_{\strat}(\log\strat)
 \le
 \frac14\log^2\nPures+\log\nPures+2
 =
 \Lreg-2.
\end{equation}
Together with the mass-dependent term, this controls the interaction between mass and strategy variation in the Hessian, where \Cref{lem:frac} and \eqref{eq:lhess} then show in particular that the lifted regularizer is $1$-strongly convex in $\ell_1$, so that \Cref{lem:maps} yields a unique smooth lifted mirror map and choice map. More importantly for the expansion, the same term also gives the lower bound \eqref{eq:Dlow} for the denominator arising when the optimality condition for the mass is differentiated. Equations \eqref{eq:dmass} and \eqref{eq:logmass} then give
\begin{equation}\label{eq:mass-derivative-overview}
 |D\mass(\score)[q]|
 \lesssim
 \mass^{3/2}\norm q_\infty,
 \qquad
 |D\log\mass(\score)[q]|
 \le
 \frac14\norm q_\infty.
\end{equation}
The first bound ensures that the dependence of the optimizing mass on the score does not destroy the $\sqrt{\mass}$ factor suggested by \eqref{eq:choice-map-overview}, where \Cref{lem:mdiff} gives indeed
\begin{equation}\label{eq:Jmass-overview}
 \opnorm{\Jfun(\score)}{\infty}{1}
 \le
 2\sqrt{\mass}.
\end{equation}
The second bound integrates to the mass comparison \eqref{eq:masscmp}, which allows the masses corresponding to the nearby times in the $\nPlayers+1$-step expansion to be compared. Consequently, along the learning dynamics, \Cref{lem:coerc,lem:mdiff} give, for the nearby times appearing in the expansion,
\begin{equation}\label{eq:mass-cancel-overview}
 \norm*{\diff\strat_{i,t}}_1^2
 \lesssim
 \frac{\learn}{\mass_{i,t-1}}
 \bigl(\PV_{i,t}+\NV_{i,t-1}\bigr),
 \qquad
 \opnorm{\jac_{i,t'}}{\infty}{1}^2
 \lesssim
 \learn^2\mass_{i,t-1}.
\end{equation}
The mass factors therefore cancel and give a bound as in \eqref{eq:ordinary-local-overview}:
\begin{equation}
 \bigl(\learn^2\mass_{i,t-1}\bigr)
 \left(
 \frac{\learn}{\mass_{i,t-1}}
 \right)
 =
 \learn^3,
\end{equation}
which gives the repeated player bound \eqref{eq:repeat-overview} through \Cref{lem:mir1}.

Finally, the lifted Bregman control also controls changes in the choice map. Indeed, \Cref{lem:Jstab} allows us to bound Jacobian variation by lifted Bregman divergence and \Cref{lem:taylor} converts this into a Taylor remainder bound. Applied to the iterates, these bounds give \eqref{eq:local-energy-overview} and \eqref{eq:Jenergy-overview} through \Cref{lem:mir1}. Hence all the local bound assumed in the higher-order expansion are preserved after lifting, which was our goal. Note then that the dependence on the number of actions also enters through the regularizer, where, by \Cref{lem:osc},
\begin{equation}\label{eq:range-overview}
 \rrange
 \le
 \frac{13}{6}\Lreg
 =
 \bigoh(\log^2\nPures).
\end{equation}

\subsection{Closing the loop}

We can now combine the two parts of the proof. Substituting \eqref{eq:PN-overview} into \eqref{eq:E-overview} and using $\sqrt{a+b}\le\sqrt a+\sqrt b$ gives
\begin{align}
 \sqrt{\EE(\horizon)}
 &\le
 5\sqrt{\nPlayers}(\nPlayers+1)
 +
 2000(\nPlayers+1)^5\learn
 \sqrt{\nPlayers\rrange}
 \\
 &\quad+
 \left(
 2000(\nPlayers+1)^5\learn^2+\frac3{10}
 \right)
 \sqrt{\EE(\horizon)}.
 \label{eq:close-overview}
\end{align}
The step-size condition \eqref{eq:eta} makes the coefficient multiplying $\sqrt{\EE(\horizon)}$ on the right-hand side small enough to absorb into the left-hand side. Together with \eqref{eq:range-overview}, the calculation in \Cref{app:proof} gives
\begin{equation}\label{eq:Efinal-overview}
 \EE(\horizon)
 <
 62^2(\nPlayers+1)^6\Lreg,
\end{equation}
uniformly in $\horizon$. Finally, the player-specific version of the lifted \acs{RVU} inequality gives
\begin{equation}
 [\reg_i(\horizon)]_+
 \le
 \frac{\rrange}{\learn}
 +
 \frac{\learn}{2}\EE(\horizon),
\end{equation}
and, using \Cref{lem:osc} and \eqref{eq:Efinal-overview},
\begin{equation}
 [\reg_i(\horizon)]_+
 <
 \frac{3\Lreg}{\learn},
\end{equation}
hence, since $\reg_i(\horizon)\le[\reg_i(\horizon)]_+$, this proves \Cref{thm:main}. In particular, for the largest admissible step size $\learn=1/[60(\nPlayers+1)^3]$, this gives
\begin{equation}
 \sup_{\horizon\ge1}\reg_i(\horizon)
 =
 \bigoh(\nPlayers^3\log^2\nPures).
\end{equation}

\section{Concluding remarks}\label{sec:concl}

We have shown that uncoupled full information learning can achieve regret that is uniformly bounded in time in every finite game. Compared with the previous $\bigoh(\nPlayers\log^2 \nPures\,\log \horizon)$ per-player bound \citep{SPF25a}, our result removes the remaining dependence on the horizon, at the cost of an additional $\nPlayers^2$ factor in the number of players. Natural future research directions are whether this $\nPlayers^2$ price can be reduced, whether the $\log^2 \nPures$ dependence can be improved, and whether bounded regret is possible in broader game classes, and for stronger notions of regret, such as swap regret. Another important question is whether we can obtain bounded regret from simpler dynamics---such as the classical \acs{OFTRL} update, for a suitable choice of regularizer and step size---and whether these dynamics retain good regret bounds in the game's parameters.

\section*{Acknowledgments}
\begingroup
\small
Some design choices in the algorithm and its analysis were assisted by ChatGPT 5.6 Sol.
\endgroup

\appendix \crefalias{section}{appendix}
\crefalias{subsection}{appendix}
\numberwithin{theorem}{section}
\numberwithin{lemma}{section} \numberwithin{corollary}{section} \numberwithin{proposition}{section} \numberwithin{definition}{section} 
\numberwithin{equation}{section}
\renewcommand{\theHtheorem}{\thesection.\arabic{theorem}} \renewcommand{\theHlemma}{\thesection.\arabic{lemma}} \renewcommand{\theHcorollary}{\thesection.\arabic{corollary}} \renewcommand{\theHproposition}{\thesection.\arabic{proposition}} \renewcommand{\theHdefinition}{\thesection.\arabic{definition}}

\section{Lifted regularizer and choice map}\label{app:lift}

This section proves properties of the lifted regularizer  used in \cref{sec:proof}.

\subsection{Hessian bounds}\label{app:geom}

For $\strat\in\istrats$ and values $a_\pure$ defined on the support of $\strat$, write
\begin{equation}
 \Var_{\strat}(a)=\sum_{\pure:\strat_\pure>0}\strat_\pure
 \left(a_\pure-\sum_{\purealt:\strat_\purealt>0}\strat_\purealt a_\purealt\right)^2.
\end{equation}
For probability vectors $p,q\in\R_+^\nPures$, we use the following KL bounds \citep{CT06,SV16}:
\begin{equation}\label{eq:info}
 \dkl(p\|q)\ge\frac12\norm*{p-q}_1^2,
 \qquad
 \dkl(p\|q)\ge\norm*{\sqrt p-\sqrt q}_2^2.
\end{equation}
For $\strat\in\istrats$, Gibbs' inequality gives \citep{CT06}
\begin{equation}\label{eq:hrange}
 -\log \nPures\le \hreg(\strat)\le0,
 \qquad
 \log \nPures+\hreg(\strat)=\dkl(\strat\|\nPures^{-1}\one)\ge0.
\end{equation}
Set $\istrats^\circ=\istrats\cap\R_{++}^\nPures$.
If $z=\mass \strat\in\Kint$, every $\xi\in\R^\nPures$ has the unique decomposition
\begin{equation}\label{eq:decomp}
 \xi=b\strat+\mass w,
 \qquad
 b=\braket*{\one}{\xi},
 \qquad
 \braket*{\one}{w}=0.
\end{equation}
\begin{lemma}[Logarithmic moment]\label{lem:logmom}
For every $\strat\in\istrats$,
\begin{equation}\label{eq:logvar}
 \Var_{\strat}(\log \strat)
 \le \frac14\log^2\nPures+\log \nPures+2=\Lreg-2.
\end{equation}
\end{lemma}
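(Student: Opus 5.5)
The plan is to find the maximizer of $\Var_{\strat}(\log\strat)$ over $\istrats$, show that it is a ``two-level'' distribution, and then bound the variance of two-level distributions by an elementary one-variable estimate.

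\emph{Step 1: existence and reduction to the support.} Only coordinates with $\strat_\pure>0$ contribute. The map $\strat\mapsto\Var_\strat(\log\strat)$ extends continuously to $\istrats$, since $x\log x\to0$ and $x\log^2x\to0$ as $x\to0$. So a maximizer exists. Fix a maximizer and let $S$ be its support, with $|S|\le\nPures$. On the relative interior of the face indexed by $S$ the function is smooth. Write $l_\pure=\log\strat_\pure$ and $\mu=\sum_{\purealt}\strat_\purealt l_\purealt$, so that $\Var=\sum_\pure\strat_\pure l_\pure^2-\mu^2$. Differentiating in $\strat_\pure$ gives
\begin{equation}
 \frac{\partial}{\partial\strat_\pure}\Var_\strat(\log\strat)
 =l_\pure^2+2l_\pure-2\mu(l_\pure+1).
\end{equation}
The Lagrange condition for the constraint $\braket*{\one}{\strat}=1$ says this quantity equals a common constant $\lambda$ for all $\pure\in S$. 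The equation is quadratic in $l_\pure$, so $l_\pure$ takes at most two distinct values on $S$. Hence the maximizer has $k$ atoms of common value $p$ and $n$ atoms of common value $q$, with $k+n\le\nPures$ and $k\ge1$; if $n=0$ the variance is $0$ and there is nothing to prove.

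\emph{Step 2: two-level formula.} Let $s=kp$ be the mass on the first level, so the second level carries $1-s=nq$. Then $\log\strat$ is a two-valued random variable with probabilities $s$ and $1-s$, and
\begin{equation}
 \Var_\strat(\log\strat)=s(1-s)\log^2\frac{p}{q}
 =s(1-s)\Bigl(\log\frac{n}{k}+\log\frac{s}{1-s}\Bigr)^2 .
\end{equation}
Since $k,n\ge1$ and $k+n\le\nPures$, we have $|\log(n/k)|\le\log(\nPures-1)\le\log\nPures$. Writing $u=|\log\frac{s}{1-s}|$ and expanding the square gives
\begin{equation}
 \Var_\strat(\log\strat)\le s(1-s)\log^2\nPures+2s(1-s)u\log\nPures+s(1-s)u^2 .
\end{equation}

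\emph{Step 3: elementary one-variable bounds.} Put $s=1/(1+e^{v})$, so that $s(1-s)=\tfrac14\operatorname{sech}^2(v/2)$ and $u=|v|$. We need three facts:
\begin{enumerate}
\item $s(1-s)\le\tfrac14$;
\item $2s(1-s)u=\tfrac{|v|}{2}\operatorname{sech}^2(v/2)\le1$, since $y\operatorname{sech}^2y$ has maximum about $0.45$;
\item $s(1-s)u^2=y^2\operatorname{sech}^2y\le2$ with $y=v/2$, since its maximum is about $0.44$.
\end{enumerate}
These give $\Var\le\tfrac14\log^2\nPures+\log\nPures+2$, which is the claim. The slack in the last two constants is generous, so crude calculus bounds such as $\cosh y\ge1+y^2/2$ suffice.

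The main obstacle is Step 1. The Lagrange argument only applies at interior points of a face, so one must make sure the maximizer is handled on its own support. Because zero coordinates contribute nothing and every face is again a lower-dimensional simplex, we can apply the argument on the face $S$, where the constraint is the single linear one. If this turns out to be delicate, the fallback is to avoid optimization altogether. One bounds $\Var\le\ex_\strat[(\log\strat-c)^2]$ for a suitable $c$, splits the coordinates into those below $1/e$ (where $x\log^2x$ is concave) and the at most two coordinates above $1/e$, and applies Jensen on the concave part. This reproduces the same two-level extremal structure.
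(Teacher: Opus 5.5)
Your argument is correct, but it takes a genuinely different route from the paper.

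\textbf{The paper's route.} The paper never locates a maximizer. It bounds $\Var_{\strat}(\log\strat)$ by the second moment of $-\log\strat$ about the fixed center $\frac12\log\nPures$. Atoms with $\strat_\pure\ge 1/\nPures$ contribute at most $\frac14\log^2\nPures$. The remaining atoms are handled by a layer-cake integral that uses only the tail bound $\sum_{\pure:\,\strat_\pure\le e^{-t}}\strat_\pure\le\nPures e^{-t}$, and this gives $\int_{\log\nPures}^{\infty}2(t-\frac12\log\nPures)\,\nPures e^{-t}\,dt=\log\nPures+2$.

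\textbf{Your route.} You use an extremal argument instead. Compactness gives a maximizer, and you apply the first-order conditions on the face spanned by its support. This is correctly justified: a global maximizer also maximizes over that face and lies in its relative interior, where the function is smooth. The stationarity condition is quadratic in $\log\strat_\pure$, so the maximizer has at most two levels, and you then bound the explicit two-point variance by elementary one-variable estimates.

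\textbf{What each approach buys.} Your approach gives structural sharpness. It identifies the worst-case distributions and yields the better bound of roughly $\frac14\log^2\nPures+0.45\log\nPures+0.44$. It also shows the leading coefficient $\frac14$ cannot be improved: take $k=1$, $n=\nPures-1$, $s=\frac12$, which gives $\frac14\log^2(\nPures-1)$. The paper's approach buys economy and reuse. It needs no existence or Lagrange argument, and the same tail estimate also delivers the moment bounds \eqref{eq:log2}--\eqref{eq:log4} for sub-probability vectors. Those bounds are proved inside this lemma and used later in \cref{lem:sqrt}. Your two-level reduction does not give them directly, especially the fourth-moment bound and the case $\sum_\pure p_\pure<1$. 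The Jensen-based fallback you sketch is not needed.
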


\begin{proof}
We first establish the auxiliary moment bounds, valid for every $p\in\R_+^\nPures$ with $\sum_\pure p_\pure\le1$:
\begin{align}
 \sum_{\pure:p_\pure>0}p_\pure|\log p_\pure|^2
 &\le4\Lreg,                                                     \label{eq:log2}\\
 \sum_{\pure:p_\pure>0}p_\pure|\log p_\pure|^4
 &\le26\Lreg^2.                                                  \label{eq:log4}
\end{align}
For $t\ge0$,
\begin{equation}\label{eq:logtail}
 \sum_{\pure:-\log p_\pure\ge t}p_\pure\le\min\{1,de^{-t}\},
\end{equation}
where the sum is over $\pure$ with $p_\pure>0$.
Hence, for every integer $q\ge1$, using $a_+^q=q\int_0^\infty u^{q-1}\mathbf{1}\{a\ge u\}\,du$ and interchanging the finite sum with the integral gives
\begin{align}
 \sum_{\pure:p_\pure>0}p_\pure(-\log p_\pure-\log \nPures)_+^q
 &=q\int_0^\infty u^{q-1}
   \sum_{\pure:-\log p_\pure\ge \log \nPures+u}p_\pure\,du\\
 &\le q\int_0^\infty u^{q-1}e^{-u}\,du=q!.
\end{align}
Minkowski's inequality therefore yields
\begin{equation}\label{eq:logmom}
 \left(\sum_{\pure:p_\pure>0}p_\pure|\log p_\pure|^q\right)^{1/q}
 \le \log \nPures+(q!)^{1/q}.
\end{equation}
Since $2\sqrt \Lreg\ge \log \nPures+2$, \eqref{eq:logmom} with $q=2$ gives \eqref{eq:log2}. For $q=4$, use $24^{1/4}<5/2$ and $\sqrt \Lreg\ge2$:
\begin{equation}
 \log \nPures+24^{1/4}<\log \nPures+\frac52
 \le2\sqrt \Lreg+\frac12
 \le\frac94\sqrt \Lreg.
\end{equation}
Hence
\begin{equation}
 \sum_\pure p_\pure|\log p_\pure|^4
 \le\left(\frac94\right)^4\Lreg^2<26\Lreg^2,
\end{equation}
which is \eqref{eq:log4}. Now let $p=\strat\in\istrats$.
Since variance is bounded by the second moment around any deterministic center,
\begin{equation}
 \Var_{\strat}(\log \strat)
 \le\sum_{\pure:\strat_\pure>0}\strat_\pure
 \left(-\log \strat_\pure-\frac12\log \nPures\right)^2.
\end{equation}
If $-\log \strat_\pure\le\log \nPures$, the squared term is at most $(\log \nPures)^2/4$; otherwise,
\begin{equation}
 \left(-\log \strat_\pure-\frac12\log \nPures\right)^2
 =\frac{(\log \nPures)^2}{4}
 +\int_{\log \nPures}^{-\log \strat_\pure}2\left(t-\frac12\log \nPures\right)\,dt.
\end{equation}
Using \eqref{eq:logtail} and writing the square as an integral,
\begin{align}
 \sum_{\pure:\strat_\pure>0}\strat_\pure
 \left(-\log \strat_\pure-\frac12\log \nPures\right)^2
 &\le\frac{(\log \nPures)^2}{4}
   +\int_{\log \nPures}^{\infty}2\left(t-\frac12\log \nPures\right)de^{-t}\,dt\\
 &=\frac14\log^2\nPures+\log \nPures+2=\Lreg-2.
\end{align}
This proves \eqref{eq:logvar}.
\end{proof}

For nonzero $z=\mass \strat\in\Kset$, define
\begin{equation}
 \areg(z)=\sqrt\mass\,\hreg(\strat)-3\Lreg\sqrt{\mass(1-\mass)},
 \qquad \areg(0)=0.
\end{equation}

\begin{lemma}[Hessian of the square-root terms]\label{lem:frac}
Let $z=\mass \strat\in\Kint$ and let $\xi=b\strat+\mass w$ satisfy \eqref{eq:decomp}. Then
\begin{equation}\label{eq:fconv}
 D^2\areg(z)[\xi,\xi]\ge\frac{\Lreg b^2}{2\mass^{3/2}}\ge0.
\end{equation}
Moreover,
\begin{equation}\label{eq:fstrong}
 D^2\areg(z)[\xi,\xi]
 \ge
 \frac12\sqrt\mass\sum_\pure\frac{w_\pure^2}{\strat_\pure}
 +3b^2.
\end{equation}
\end{lemma}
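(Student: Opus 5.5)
The plan is to compute $D^2\areg(z)[\xi,\xi]$ exactly by differentiating $s\mapsto\areg(z+s\xi)$ twice at $s=0$. Then the only indefinite term, a cross term between mass and strategy, is absorbed using Young's inequality together with \Cref{lem:logmom}.

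\emph{Step 1 (the curve in polar coordinates).} Along $z+s\xi$ the mass is $\mass(s)=\mass+sb$, and since $z+s\xi=(\mass+sb)\strat+s\mass w$ the strategy component is
\begin{equation}
 \strat(s)=\strat+\frac{s\mass}{\mass+sb}\,w .
\end{equation}
Hence $\strat'(0)=w$ and $\strat''(0)=-2bw/\mass$. Write
\begin{equation}
 Q=\sum_\pure w_\pure^2/\strat_\pure,
 \qquad
 A=\braket*{\log\strat}{w}.
\end{equation}
Because $\braket*{\one}{w}=0$, we have $\frac{d}{ds}\hreg(\strat(s))|_{0}=A$ and $\frac{d^2}{ds^2}\hreg(\strat(s))|_0=Q-2bA/\mass$. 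The product rule applied to $\sqrt{\mass(s)}\,\hreg(\strat(s))$ then gives
\begin{equation}
 \frac{d^2}{ds^2}\Bigl[\sqrt{\mass(s)}\,\hreg(\strat(s))\Bigr]_{s=0}
 =
 -\frac{b^2\hreg(\strat)}{4\mass^{3/2}}
 -\frac{bA}{\sqrt\mass}
 +\sqrt\mass\,Q .
\end{equation}
The mass term depends on $s$ only through $\mass(s)$. Since $\frac{d^2}{d\mass^2}\sqrt{\mass(1-\mass)}=-\tfrac14(\mass(1-\mass))^{-3/2}$, it contributes $\frac{3\Lreg b^2}{4(\mass(1-\mass))^{3/2}}$. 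Using $\hreg\le0$ from \eqref{eq:hrange}, we may drop the first term and obtain
\begin{equation}
 D^2\areg(z)[\xi,\xi]\ge\sqrt\mass\,Q-\frac{bA}{\sqrt\mass}+\frac{3\Lreg b^2}{4(\mass(1-\mass))^{3/2}} .
\end{equation}

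\emph{Step 2 (cross term).} Since $\braket*{\one}{w}=0$, we can center and write $A=\sum_\pure w_\pure(\log\strat_\pure-c)$, where $c=\sum_\purealt\strat_\purealt\log\strat_\purealt$. Cauchy--Schwarz with weights $\strat_\pure$ gives $A^2\le\Var_\strat(\log\strat)\,Q$, and \Cref{lem:logmom} bounds this by $(\Lreg-2)Q$. Young's inequality with a parameter $\epsilon>0$ then yields
\begin{equation}
 \frac{|bA|}{\sqrt\mass}\le\epsilon\sqrt\mass\,Q+\frac{(\Lreg-2)b^2}{4\epsilon\mass^{3/2}} .
\end{equation}

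\emph{Step 3 (the two inequalities).} For \eqref{eq:fconv}, take $\epsilon=1$ and use $(\mass(1-\mass))^{-3/2}\ge\mass^{-3/2}$. The $Q$ terms cancel and the coefficient of $b^2/\mass^{3/2}$ is at least $\tfrac34\Lreg-\tfrac14(\Lreg-2)=\tfrac12\Lreg+\tfrac12\ge\tfrac12\Lreg$.

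For \eqref{eq:fstrong}, take $\epsilon=1/2$, which leaves $\tfrac12\sqrt\mass\,Q$. The remaining $b^2$ coefficient is
\begin{equation}
 \frac{3\Lreg}{4}(\mass(1-\mass))^{-3/2}-\frac{\Lreg-2}{2}\mass^{-3/2},
\end{equation}
and we need it to be at least $3$. I would check this by cases on $\mass$, using $\Lreg\ge4$.
\begin{itemize}
\item If $\mass\le1/2$, drop the factor $(1-\mass)^{-3/2}\ge1$. The coefficient is then at least $(\tfrac14\Lreg+1)\mass^{-3/2}\ge2\cdot2^{3/2}>3$.
\item If $\mass>1/2$, use $(\mass(1-\mass))^{-3/2}\ge8$ and $\mass^{-3/2}<2^{3/2}$. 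The coefficient is then at least $6\Lreg-\sqrt2(\Lreg-2)>3$.
\end{itemize}

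The main obstacle is Step 1: keeping track of the $\strat''(0)$ contribution correctly, since it produces the $-2bA/\mass$ term that combines with the product-rule cross term into $-bA/\sqrt\mass$. The sign and constant of this term determine whether the $\tfrac34\Lreg$ coming from the mass term beats the $\Var_\strat(\log\strat)$ loss, and this balance is exactly what the choice of $\Lreg$ via \Cref{lem:logmom} is designed for.
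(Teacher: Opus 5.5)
Your proposal is correct and is essentially the paper's proof. You use the same parametrization $\mass(s)=\mass+sb$, $\strat(s)=\strat+\frac{s\mass}{\mass+sb}w$ to get the identical Hessian formula, and you absorb the cross term exactly as the paper does, via $\braket*{w}{\log\strat}^2\le\Var_{\strat}(\log\strat)\sum_\pure w_\pure^2/\strat_\pure$, \Cref{lem:logmom}, and Young's inequality with the same two weights; the only difference is cosmetic: the paper shows the $b^2$ coefficient is at least $3$ by monotonicity of $3(1-s^2)^{-3/2}-1-3s^3$ in $s=\sqrt\mass$, rather than by your case split at $\mass=1/2$.
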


\begin{proof}
For $z(q)=z+q\xi=\mass(q)\strat(q)$,
\begin{equation}
 \mass(q)=\mass+qb,
 \qquad
 \strat(q)=\strat+\frac{q\mass}{\mass+qb}w,
\end{equation}
so $\strat'(0)=w$ and $\strat''(0)=-2bw/\mass$. Since $\braket*{\one}{w}=0$,
\begin{equation}
 D\hreg(\strat)[w]=\braket*{w}{\log \strat},
 \qquad
 D^2\hreg(\strat)[w,w]=\sum_\pure\frac{w_\pure^2}{\strat_\pure}.
\end{equation}
Differentiating $\areg(z(q))$ twice at $q=0$ gives
\begin{equation}\label{eq:fhess}
 D^2\areg(z)[\xi,\xi]
 =\mass^{-3/2}\left[
 \mass^2\sum_\pure\frac{w_\pure^2}{\strat_\pure}
 -\mass b\braket*{w}{\log \strat}
 +\frac{b^2}{4}\left(
 \frac{3\Lreg}{(1-\mass)^{3/2}}-\hreg(\strat)
 \right)\right].
\end{equation}
Moreover,
\begin{equation}
 |\braket*{w}{\log \strat}|
 =|\braket*{w}{\log \strat-\hreg(\strat)\one}|
 \le\sqrt{\Var_{\strat}(\log \strat)}
 \left(\sum_\pure\frac{w_\pure^2}{\strat_\pure}\right)^{1/2}.
\end{equation}
Set $S=\sum_\pure w_\pure^2/\strat_\pure$ and $V=\Var_{\strat}(\log\strat)$. Then
\begin{equation}
 \mass^2S-\mass b\braket*{w}{\log\strat}
 \ge-\frac{Vb^2}{4}.
\end{equation}
Using \eqref{eq:hrange}, \eqref{eq:logvar}, and $(1-\mass)^{-3/2}\ge1$, the bracket in \eqref{eq:fhess} is therefore at least
\begin{equation}
 \frac{b^2}{4}(3\Lreg-V-\hreg(\strat))
 \ge\frac{2\Lreg+2}{4}b^2
 \ge\frac{\Lreg}2b^2,
\end{equation}
which proves \eqref{eq:fconv}. For the stronger bound, use instead
\begin{equation}
 \mass^2S-\mass b\braket*{w}{\log\strat}
 \ge\frac12\mass^2S-\frac12Vb^2.
\end{equation}
Writing $s=\sqrt\mass$, \eqref{eq:fhess}, \eqref{eq:logvar}, and $\Lreg\ge4$ give
\begin{align}
 D^2\areg(z)[\xi,\xi]
 &\ge
 \frac12\sqrt\mass S
 +\mass^{-3/2}\frac{b^2}{4}
 \left(\frac{3\Lreg}{(1-\mass)^{3/2}}-2V-\hreg(\strat)\right)\\
 &\ge
 \frac12\sqrt\mass S
 +s^{-3}b^2\left(\frac{3}{(1-s^2)^{3/2}}-1\right).
\end{align}
The function $3(1-s^2)^{-3/2}-1-3s^3$ is increasing on $[0,1)$, since its derivative is $9s[(1-s^2)^{-5/2}-s]\ge0$, and its value at $s=0$ is $2$.
Thus the last coefficient is at least $3$, proving \eqref{eq:fstrong}.
\end{proof}

Since $\areg$ is continuous on $\Kset$ and has positive semidefinite Hessian on $\Kint$ by \eqref{eq:fconv}, it is convex on $\Kset$. For $z=\mass \strat\in\Kint$, write
\begin{equation}\label{eq:lregdec}
 \lreg(z)=\mass \hreg(\strat)+\areg(z).
\end{equation}
If $\xi=b\strat+\mass w$ is as in \eqref{eq:decomp}, then
\begin{equation}
 D^2[\mass \hreg(\strat)][\xi,\xi]
 =\mass\sum_\pure\frac{w_\pure^2}{\strat_\pure}.
\end{equation}
Together with \eqref{eq:fstrong},
\begin{equation}\label{eq:lhess}
 D^2\lreg(z)[\xi,\xi]
 \ge
 \left(\mass+\frac12\sqrt\mass\right)
 \sum_\pure\frac{w_\pure^2}{\strat_\pure}
 +3b^2
 \ge\norm*{\xi}_1^2.
\end{equation}
Indeed, Cauchy--Schwarz gives $\norm*{w}_1^2\le\sum_\pure w_\pure^2/\strat_\pure$. If $a=|b|$ and $c=\mass\norm*{w}_1$, then $\mass\le1$ implies that the middle expression in \eqref{eq:lhess} is at least $3a^2+\frac32c^2\ge(a+c)^2$, while $\norm*{\xi}_1\le a+c$. Integrating \eqref{eq:lhess} along line segments in $\Kint$ and approximating boundary points shows that $\lreg$ is $1$-strongly convex with respect to the $\ell_1$ norm on $\Kset$. In particular, $D^2\lreg(z)$ is positive definite on $\Kint$.

\subsection{Lifted mirror map and choice map}\label{app:choice}

\begin{lemma}[Lifted mirror and choice map]\label{lem:maps}
For every $\score\in\R^\nPures$, $\lmir(\score)$ is unique and belongs to $\Kint$.
If $\lmir(\score)=\mass \strat$, then
\begin{equation}\label{eq:softmax}
 \strat=\choice(\score)=\softmax\!\left(\frac{\sqrt\mass}{1+\sqrt\mass}\score\right).
\end{equation}
The maps $\lmir$ and $\choice$ are $C^\infty$.
\end{lemma}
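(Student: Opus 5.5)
Existence and uniqueness of $\lmir(\score)$ follow from the $1$-strong convexity of $\lreg$ on the compact convex set $\Kset$, established after \eqref{eq:lhess}. The objective $z\mapsto\braket*{\score}{z}-\lreg(z)$ is continuous and strictly concave on $\Kset$, so it has exactly one maximizer. The real content of the lemma is to show that this maximizer is interior, derive the softmax form, and obtain smoothness.

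\emph{Interiority.} I would rule out three kinds of boundary points, using infinite directional derivatives of $\lreg$ at the boundary.
\begin{enumerate}
\item At $z=0$: moving to $z=\mass\strat$ with uniform $\strat$ and small $\mass$ changes the objective by $\mass\braket*{\score}{\strat}-(\mass+\sqrt\mass)\hreg(\strat)+3\Lreg\sqrt{\mass(1-\mass)}$. This is of order $3\Lreg\sqrt\mass>0$ for small $\mass$, so $0$ is not optimal.
\item At a point with $\mass=1$: the term $-3\Lreg\sqrt{\mass(1-\mass)}$ has derivative $+\infty$ as $\mass\uparrow1$ in $\lreg$. Hence decreasing $\mass$ slightly strictly increases the objective.
\item At a point with some coordinate $\strat_\pure=0$ and $0<\mass<1$: the entropy term has derivative $-\infty$ in the direction of $\strat_\pure$. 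Mixing in a little of $e_\pure$ at fixed $\mass$ therefore strictly improves the objective.
\end{enumerate}
Combining these three cases with uniqueness shows $\lmir(\score)\in\Kint$.

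\emph{Softmax formula.} Fix the optimal mass $\mass$. The maximizer $\strat$ then maximizes $\mass\braket*{\score}{\strat}-(\mass+\sqrt\mass)\hreg(\strat)$ over $\istrats$. After dividing by $\mass+\sqrt\mass$, this is the Gibbs variational problem with score $\frac{\mass}{\mass+\sqrt\mass}\score=\frac{\sqrt\mass}{1+\sqrt\mass}\score$. Its unique solution is the stated softmax, which gives \eqref{eq:softmax}.

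\emph{Smoothness.} On the open set $\Kint$, $\lreg$ is $C^\infty$, since $\mass=\braket*{\one}{z}$ and $\strat=z/\mass$ are smooth and $\mass\in(0,1)$. The interior maximizer is characterized by $\nabla\lreg(z)=\score$, because $\Kint$ is open in $\R^\nPures$ and no constraint is active. By \eqref{eq:lhess}, $D^2\lreg(z)$ is positive definite on $\Kint$. The inverse function theorem then shows that $\score\mapsto\lmir(\score)=(\nabla\lreg)^{-1}(\score)$ is $C^\infty$, since it is a global inverse by uniqueness. Finally, $\choice=\lmir/\braket*{\one}{\lmir}$ is $C^\infty$ because the denominator $\mass>0$.

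\emph{Main obstacle.} The step needing most care is interiority, in particular the $z=0$ and $\mass\to1$ cases. There one must check that the square-root singularities beat the linear terms uniformly, and that the gradient equation is genuinely the optimality condition. Since $\Kint$ is open in $\R^\nPures$, the latter is immediate once interiority is known.
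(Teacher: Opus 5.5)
Your proof is correct and follows essentially the paper's route: existence by compactness, the softmax form from the Gibbs problem at fixed mass, exclusion of the boundary using the square-root singularities at $\mass=0$ and $\mass=1$, and smoothness from the first-order condition $\nabla\lreg(\lmir(\score))=\score$ with a positive definite Hessian. The differences are cosmetic. The paper rules out $\mass\in\{0,1\}$ through the profiled value $V_{\score}(\mass)$ and an envelope formula for $V_{\score}'$, gets full support directly from the softmax, and obtains uniqueness only after interiority, via strict concavity on $\Kint$. You instead perturb boundary points directly and take uniqueness upfront from strong convexity on $\Kset$.
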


\begin{proof}
Continuity follows from \eqref{eq:lreg}, \eqref{eq:hrange}, and $r\log r\to0$ as $r\downarrow0$. By \eqref{eq:lhess}, $\lreg$ is strictly convex on $\Kint$. Approximating any two points of $\Kset$ by points of $\Kint$ and passing to the limit shows that $\lreg$ is convex on $\Kset$. Now, fix $\score$. The objective in \eqref{eq:lmir} is continuous on the compact set $\Kset$, hence attains its maximum. For fixed $0<\mass\le1$, its $\strat$-dependent part is
\begin{equation}
 \mass\braket*{\score}{\strat}-(\mass+\sqrt\mass)\hreg(\strat).
\end{equation}
The unique maximizer is \eqref{eq:softmax}. The maximized value at mass $\mass\in(0,1)$ is
\begin{equation}
 V_\score(\mass)
 =(\mass+\sqrt\mass)
 \log\!\left(\sum_{\pure=1}^\nPures
 \exp\!\left(\frac{\sqrt\mass}{1+\sqrt\mass}\score_\pure\right)\right)
 +3\Lreg\sqrt{\mass(1-\mass)}.
\end{equation}
This extends continuously to $[0,1]$ with $V_\score(0)=0$, and
\begin{equation}
 \lim_{\mass\downarrow0}\frac{V_\score(\mass)}{\sqrt\mass}
 =\log \nPures+3\Lreg>0.
\end{equation}
Thus $\mass=0$ is not maximizing. For $0<\mass<1$, the maximizing $\strat=\strat(\mass)$ is unique and smooth by \eqref{eq:softmax}. Differentiating the unmaximized objective at $(\mass,\strat(\mass))$, the derivative through $\strat(\mass)$ vanishes: its $\strat$-gradient is a scalar multiple of $\one$ at the maximizer, while $\braket*{\one}{\strat'(\mass)}=0$. Hence
\begin{equation}\label{eq:prof1}
 V_\score'(\mass)
 =\braket*{\score}{\strat}
 -\left(1+\frac{1}{2\sqrt\mass}\right)\hreg(\strat)
 +\frac{3\Lreg}2\frac{1-2\mass}{\sqrt{\mass(1-\mass)}}.
\end{equation}
The last term tends to $-\infty$ as $\mass\uparrow1$, while all other terms remain bounded. Hence $V_\score'<0$ on some interval $(1-\eps,1)$; integrating $V_\score'$ on this interval shows $V_\score(1)<V_\score(1-\eps)$, so $\mass=1$ is not maximizing. Every maximizer therefore lies in $\Kint$, where strict concavity of $z\mapsto\braket*{\score}{z}-\lreg(z)$ gives uniqueness. Finally, at the maximizer,
\begin{equation}\label{eq:foc}
 \nabla\lreg(\lmir(\score))=\score,
\end{equation}
and the Hessian of $\lreg$ is positive definite on $\Kint$ by \eqref{eq:lhess}, so the implicit function theorem, applied to \eqref{eq:foc}, gives $\lmir\in C^\infty(\R^\nPures)$, and normalization gives $\choice\in C^\infty(\R^\nPures)$.
\end{proof}

\subsection{Local stability of the choice map}\label{app:mirror}

For $\score\in\R^\nPures$, write
\begin{equation}
 \lmir(\score)=\mass \strat,
 \qquad
 \mass=\mass(\score)=\braket*{\one}{\lmir(\score)},
 \qquad
 \strat=\choice(\score).
\end{equation}
For $\strat\in\istrats^\circ$, set
\begin{equation}
 \mu_{\strat}=\strat\odot(\log \strat-\hreg(\strat)\one),
 \qquad
 \Gmat_{\strat}=\diag(\strat)-\strat\strat^\top.
\end{equation}
Then
\begin{equation}\label{eq:cov}
 \Gmat_{\strat}v=\strat\odot(v-\braket*{\strat}{v}\one),
 \qquad
 \Gmat_{\strat}\log \strat=\mu_{\strat},
 \qquad
 \norm*{\mu_{\strat}}_1\le\sqrt{\Var_{\strat}(\log \strat)}\le\sqrt{\Lreg-2}.
\end{equation}
For $\mass\in(0,1)$ and $\strat\in\istrats^\circ$, define
\begin{equation}
 \Dfun(\mass,\strat)
 =\frac{3\Lreg}{4(1-\mass)^{3/2}}
 -\frac14\left[
 \hreg(\strat)+\frac{\Var_{\strat}(\log \strat)}{1+\sqrt\mass}\right].
\end{equation}
By \eqref{eq:hrange} and \eqref{eq:logvar},
\begin{equation}\label{eq:Dlow}
 \Dfun(\mass,\strat)
 \ge \frac{\Lreg}{2(1-\mass)^{3/2}}+\frac12
 \ge\frac{\Lreg+1}{2}.
\end{equation}
Indeed, $\Var_{\strat}(\log\strat)\le\Lreg-2$ and $(1-\mass)^{-3/2}\ge1$.
For $\mass\in(0,1)$ and $\strat\in\istrats^\circ$, set
\begin{equation}\label{eq:nvec}
 \nu_{\mass,\strat}=\strat+\frac{\mu_{\strat}}{2(1+\sqrt\mass)}.
\end{equation}

\begin{lemma}[Differential of the choice map]\label{lem:mdiff}
Let $\score\in\R^\nPures$ and write $\lmir(\score)=\mass \strat$. Then, for every $q\in\R^\nPures$,
\begin{align}
 \norm*{D\choice(\score)[q]}_1
 &\le\norm q_\infty,                                      \label{eq:Qunif}\\
 \norm*{D\choice(\score)[q]}_1
 &\le2\sqrt\mass\norm q_\infty.                     \label{eq:Qmass}
\end{align}
\end{lemma}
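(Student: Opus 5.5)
The plan is to differentiate the explicit softmax formula \eqref{eq:softmax} and treat the mass $\mass=\mass(\score)$ as a smooth function of $\score$, which \Cref{lem:maps} allows. Write $s=\sqrt\mass$ and $c(\mass)=s/(1+s)$, so that $\strat=\softmax(c\,\score)$. The chain rule gives
\begin{equation}
 D\choice(\score)[q]
 =\Gmat_{\strat}\bigl(c\,q+c'(\mass)\,D\mass(\score)[q]\,\score\bigr).
\end{equation}
Since $\strat=\softmax(c\score)$, we have $c\,\score=\log\strat+(\text{const})\one$, and $\Gmat_\strat\one=0$. Hence, by \eqref{eq:cov}, $\Gmat_\strat\score=\mu_\strat/c$. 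A direct computation gives $c'(\mass)/c=1/(2\mass(1+s))$, so
\begin{equation}
 D\choice(\score)[q]
 =c\,\Gmat_\strat q+\frac{D\mass(\score)[q]}{2\mass(1+\sqrt\mass)}\,\mu_\strat .
\end{equation}

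For the first term I would use shift invariance of $\Gmat_\strat$. It gives
\begin{equation}
 \norm{\Gmat_\strat q}_1=\sum_\pure\strat_\pure|q_\pure-\braket{\strat}{q}|\le\sqrt{\Var_\strat(q)}\le\norm q_\infty .
\end{equation}
The last step holds because the variance is at most the square of half the range. So the first term is at most $\frac{\sqrt\mass}{1+\sqrt\mass}\norm q_\infty$. For the second term, \eqref{eq:cov} gives $\norm{\mu_\strat}_1\le\sqrt{\Lreg-2}$. What remains is a bound on $D\mass(\score)[q]$.

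The main obstacle is bounding $D\mass$, and I would do it by implicit differentiation of the optimality condition $V_\score'(\mass)=0$ from \eqref{eq:prof1}. View the left side of \eqref{eq:prof1} as $F(\mass,\score)$, with $\strat=\strat(\mass,\score)$ given by the softmax. The derivative $\partial_\score F[q]$ collects $\braket{q}{\strat}$ together with terms coming from $D\strat$ at fixed mass. Using $\braket{\score}{\Gmat_\strat q}=\braket{\mu_\strat}{q}/c$ and $D\hreg(\strat)[\cdot]=\braket{\log\strat}{\cdot}$, this should collapse to $\braket{\nu_{\mass,\strat}}{q}$, up to a sign. The derivative $\partial_\mass F$ should equal $-\mass^{-3/2}\Dfun(\mass,\strat)$: the entropy term gives $\hreg/(4\mass^{3/2})$, the softmax dependence on $c$ gives the $\Var_\strat(\log\strat)/(1+\sqrt\mass)$ term, and the concave penalty gives the $(1-\mass)^{-3/2}$ term. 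I expect this is exactly why $\Dfun$ and $\nu$ were defined as they were. The implicit function theorem then yields
\begin{equation}
 D\mass(\score)[q]=\frac{\mass^{3/2}\braket{\nu_{\mass,\strat}}{q}}{\Dfun(\mass,\strat)} .
\end{equation}
We have $\norm{\nu}_1\le1+\tfrac12\sqrt{\Lreg-2}$, and \eqref{eq:Dlow} gives $\Dfun\ge(\Lreg+1)/2$. Together these give $|D\mass[q]|\le\mass^{3/2}\norm q_\infty\cdot 2(1+\tfrac12\sqrt{\Lreg-2})/(\Lreg+1)$.

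Plugging this in, the second term is at most
\begin{equation}
 \frac{\sqrt\mass}{1+\sqrt\mass}\cdot\frac{\sqrt{\Lreg-2}\,(1+\tfrac12\sqrt{\Lreg-2})}{\Lreg+1}\,\norm q_\infty .
\end{equation}
Write $u=\sqrt{\Lreg-2}$. The middle fraction is $(u+u^2/2)/(u^2+3)$, which is less than $1$ for all $u\ge0$. Summing both terms gives $\norm{D\choice(\score)[q]}_1\le\frac{2\sqrt\mass}{1+\sqrt\mass}\norm q_\infty$. This yields \eqref{eq:Qmass} directly, and it yields \eqref{eq:Qunif} because $\mass\le1$ makes $2\sqrt\mass/(1+\sqrt\mass)\le1$. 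The only step needing real care is the sign and constant bookkeeping in $\partial_\mass F$, specifically checking that it matches $\mass^{-3/2}\Dfun$.
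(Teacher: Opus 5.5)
Your proposal is correct and follows the paper's proof. You differentiate the softmax formula \eqref{eq:softmax} with the mass treated as a smooth function of the score, obtain $D\mass(\score)[q]=\mass^{3/2}\braket{\nu_{\mass,\strat}}{q}/\Dfun(\mass,\strat)$ by differentiating the first-order condition \eqref{eq:prof1}, and bound the rank-one $\mu_\strat$ correction using \eqref{eq:Dlow}; your identities $\partial_\mass F=-\mass^{-3/2}\Dfun(\mass,\strat)$ and $\partial_\score F[q]=\braket{\nu_{\mass,\strat}}{q}$ do check out. The only difference is cosmetic: you keep the sharper bounds $\norm{\mu_\strat}_1\le\sqrt{\Lreg-2}$ and $\norm{\nu_{\mass,\strat}}_1\le1+\tfrac12\sqrt{\Lreg-2}$ and check $(u+u^2/2)/(u^2+3)<1$ directly, whereas the paper loosens both norms to $\sqrt\Lreg$ and uses $\Lreg/(\Lreg+1)<1$.
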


\begin{proof}
The first-order condition \eqref{eq:prof1} reads
\begin{equation}
 \braket*{\score}{\strat}
 =\left(1+\frac{1}{2\sqrt\mass}\right)\hreg(\strat)
 -\frac{3\Lreg}2\frac{1-2\mass}{\sqrt{\mass(1-\mass)}}.
\end{equation}
From \eqref{eq:softmax},
\begin{equation}
 \partial_\mass \strat
 =\frac{1}{2\sqrt\mass(1+\sqrt\mass)^2}\Gmat_{\strat}\score
 =\frac{\mu_{\strat}}{2\mass(1+\sqrt\mass)},
\end{equation}
where the second identity uses $\Gmat_{\strat}\score=(1+\sqrt\mass)\mu_{\strat}/\sqrt\mass$. Hence
\begin{equation}
 \partial_\mass \hreg(\strat)=\frac{\Var_{\strat}(\log \strat)}{2\mass(1+\sqrt\mass)},
 \qquad
 \partial_\mass\braket*{\score}{\strat}=\frac{\Var_{\strat}(\log \strat)}{2\mass^{3/2}}.
\end{equation}
Differentiating \eqref{eq:prof1} and collecting terms gives
\begin{align}
 V_\score''(\mass)
 &=\mass^{-3/2}\left[
 \frac{\Var_{\strat}(\log \strat)}{4(1+\sqrt\mass)}
 +\frac{\hreg(\strat)}4
 -\frac{3\Lreg}{4(1-\mass)^{3/2}}\right]\\
 &=-\mass^{-3/2}\Dfun(\mass,\strat).                    \label{eq:pcurv}
\end{align}
At fixed $\mass$, \eqref{eq:softmax} gives $D_{\score}\strat[q]=\sqrt\mass \Gmat_{\strat}q/(1+\sqrt\mass)$. Since $\Gmat_{\strat}\score=(1+\sqrt\mass)\mu_{\strat}/\sqrt\mass$ and $\braket*{\one}{D_{\score}\strat[q]}=0$,
\begin{align}
 D_{\score}V_{\score}'(\mass)[q]
 &=\braket*{\strat}{q}+\braket*{\score}{D_{\score}\strat[q]}
 -\left(1+\frac1{2\sqrt\mass}\right)\braket*{\log \strat}{D_{\score}\strat[q]}\\
 &=\braket*{\strat}{q}+\frac{1}{2(1+\sqrt\mass)}\braket*{\mu_{\strat}}{q}
 =\braket*{\nu_{\mass,\strat}}{q}.
\end{align}
Differentiating $V_\score'(\mass(\score))=0$ and using \eqref{eq:pcurv} gives
\begin{equation}\label{eq:dmass}
 D\mass(\score)[q]
 =\frac{\mass^{3/2}}{\Dfun(\mass,\strat)}
   \braket*{\nu_{\mass,\strat}}{q}.
\end{equation}
Finally, differentiating \eqref{eq:softmax} in $\score$ and $\mass$ gives
\begin{equation}
 D\choice(\score)[q]
 =\frac{\sqrt\mass}{1+\sqrt\mass}\Gmat_{\strat}q
 +\frac{D\mass(\score)[q]}{2\mass(1+\sqrt\mass)}\mu_{\strat}.
\end{equation}
Substituting \eqref{eq:dmass} yields
\begin{equation}\label{eq:Jfac}
 \Jfun(\score)=\frac{\sqrt\mass}{1+\sqrt\mass}
 \left[\Gmat_{\strat}+\frac{1}{2\Dfun(\mass,\strat)}
 \mu_{\strat}\nu_{\mass,\strat}^\top\right].
\end{equation}
Assume $\norm q_\infty\le1$. By \eqref{eq:cov},
\begin{equation}
 \norm*{\Gmat_{\strat}q}_1\le\sqrt{\Var_{\strat}(q)}\le1,
 \qquad
 \norm*{\mu_{\strat}}_1\le\sqrt \Lreg,
 \qquad
 \norm*{\nu_{\mass,\strat}}_1\le1+\frac{\sqrt \Lreg}{2}\le\sqrt \Lreg.
\end{equation}
Together with \eqref{eq:Dlow}, this gives
\begin{equation}
 \frac{\norm*{\mu_{\strat}}_1\norm*{\nu_{\mass,\strat}}_1}
      {2\Dfun(\mass,\strat)}
 \le\frac{\Lreg}{\Lreg+1}<1.
\end{equation}
Equation \eqref{eq:Jfac} now gives
\begin{equation}
 \norm*{D\choice(\score)[q]}_1
 \le 2\frac{\sqrt\mass}{1+\sqrt\mass}\norm q_\infty.
\end{equation}
Since $\sqrt\mass/(1+\sqrt\mass)\le1/2$ and $\sqrt\mass/(1+\sqrt\mass)\le\sqrt\mass$, \eqref{eq:Qunif} and \eqref{eq:Qmass} follow. Next, \eqref{eq:Dlow} and $\norm*{\nu_{\mass,\strat}}_1\le\sqrt\Lreg$ give
\begin{equation}
 |D\mass(\score)[q]|
 \le\frac{2\sqrt\Lreg}{\Lreg+1}\mass^{3/2}\norm q_\infty
 \le\mass^{3/2}\norm q_\infty.
\end{equation}
For the logarithmic derivative, put $s=\sqrt\mass$. Since $\sqrt{\Lreg-2}\le\sqrt\Lreg-1/\sqrt\Lreg$ and, for $\sqrt\Lreg\ge2$, $3\sqrt\Lreg/8+1/(2\sqrt\Lreg)\ge1$,
\begin{equation}
 \norm*{\nu_{\mass,\strat}}_1
 \le1+\frac12\sqrt{\Lreg-2}
 \le\frac78\sqrt\Lreg.
\end{equation}
Moreover, \eqref{eq:hrange} and \eqref{eq:logvar} give
\begin{equation}
 4\Dfun(\mass,\strat)
 \ge
 \Lreg\left[
 \frac{3}{(1-s^2)^{3/2}}-\frac{1}{1+s}
 \right].
\end{equation}
For $0\le s<1$,
\begin{equation}
 \frac{3}{(1-s^2)^{3/2}}-\frac{1}{1+s}\ge7s.
\end{equation}
Indeed, $(1-s^2)^{-1/2}\ge1+s^2/2$, so it is enough that $2-6s+\frac32s^2+7s^3\ge0$. This is clear for $s\le1/6$, while for $s\ge1/6$ the left-hand side equals
\begin{equation}
 (s-\tfrac12)^2(7s+\tfrac{17}{2})+\frac{3s}{4}-\frac18\ge0.
\end{equation}
Using \eqref{eq:dmass} and $\sqrt\Lreg\ge2$ therefore gives
\begin{equation}\label{eq:logmass}
 |D\log\mass(\score)[q]|
 \le\frac14\norm q_\infty.
\end{equation}
\end{proof}

For $\score,\score'\in\R^\nPures$, with $\lmir(\score)=\mass \strat$ and $\lmir(\score')=\mass'\strat'$, integration of \eqref{eq:logmass} gives
\begin{equation}\label{eq:masscmp}
 \exp\!\left(-\frac14\norm*{\score'-\score}_\infty\right)
 \le\frac{\mass'}{\mass}
 \le
 \exp\!\left(\frac14\norm*{\score'-\score}_\infty\right).
\end{equation}
By \eqref{eq:lregdec},
\begin{equation}\label{eq:bregdec}
 \bregofX{\lreg}{\mass'\strat'}{\mass \strat}
 =\mass'\dkl(\strat'\|\strat)
 +\bregofX{\areg}{\mass'\strat'}{\mass \strat}.
\end{equation}

\begin{lemma}[Local Bregman coercivity]\label{lem:coerc}
Let $\score,\score'\in\R^\nPures$. Suppose $\lmir(\score)=\mass \strat$, $\lmir(\score')=\mass'\strat'$, and $\norm*{\score'-\score}_\infty\le M\le1$. Then
\begin{equation}\label{eq:coerc}
 \bregofX{\lreg}{\lmir(\score')}{\lmir(\score)}
 \ge e^{-M/4}\left[
 \mass\dkl(\strat'\|\strat)
 +\frac{(\sqrt{\mass'}-\sqrt\mass)^2}{\sqrt\mass}\right],
\end{equation}
and
\begin{equation}\label{eq:l1}
 \norm*{\strat'-\strat}_1^2
 \le\frac{2e^{M/4}}{\mass}\bregofX{\lreg}{\lmir(\score')}{\lmir(\score)}.
\end{equation}
\end{lemma}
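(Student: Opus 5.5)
The plan is to start from the Bregman decomposition \eqref{eq:bregdec},
\begin{equation*}
 \bregofX{\lreg}{\mass'\strat'}{\mass\strat}
 =\mass'\dkl(\strat'\|\strat)+\bregofX{\areg}{\mass'\strat'}{\mass\strat},
\end{equation*}
and bound its two pieces separately.

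\emph{Entropic piece.} By \Cref{lem:maps} both lifted points lie in $\Kint$. The mass comparison \eqref{eq:masscmp}, together with $\norm*{\score'-\score}_\infty\le M$, gives $\mass'\ge e^{-M/4}\mass$. Hence
\begin{equation*}
 \mass'\dkl(\strat'\|\strat)\ge e^{-M/4}\mass\,\dkl(\strat'\|\strat).
\end{equation*}

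\emph{Square-root piece.} I will write the Bregman divergence of $\areg$ in integral form along the segment $z(s)=z+s(z'-z)$, $s\in[0,1]$, with $z=\mass\strat$ and $z'=\mass'\strat'$. This segment stays in the convex set $\Kint$, where $\areg$ is smooth. With $\xi=z'-z$ we get
\begin{equation*}
 \bregofX{\areg}{z'}{z}=\int_0^1(1-s)\,D^2\areg(z(s))[\xi,\xi]\,ds .
\end{equation*}
The mass along the segment is affine, $\mass(s)=\mass+s b$ with $b=\braket*{\one}{\xi}=\mass'-\mass$, and $b$ is exactly the coefficient in the decomposition \eqref{eq:decomp} at every point $z(s)$. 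So \eqref{eq:fconv} applies pointwise and gives
\begin{equation*}
 \bregofX{\areg}{z'}{z}\ge\frac{\Lreg}{2}\,b^2\int_0^1(1-s)\,\mass(s)^{-3/2}\,ds .
\end{equation*}
The right-hand integral is itself a one-dimensional Bregman divergence. For $g(\mass)=-4\sqrt\mass$ we have $g''(\mass)=\mass^{-3/2}$, so
\begin{equation*}
 b^2\int_0^1(1-s)\,\mass(s)^{-3/2}\,ds=\bregofX{g}{\mass'}{\mass}
 =\frac{2(\sqrt{\mass'}-\sqrt\mass)^2}{\sqrt\mass}.
\end{equation*}
This yields
\begin{equation*}
 \bregofX{\areg}{z'}{z}\ge\frac{\Lreg(\sqrt{\mass'}-\sqrt\mass)^2}{\sqrt\mass}.
\end{equation*}
Since $\Lreg\ge4\ge e^{-M/4}$, this is more than enough. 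Adding the two pieces proves \eqref{eq:coerc}.

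\emph{Proof of \eqref{eq:l1}.} Drop the nonnegative mass term in \eqref{eq:coerc} and apply Pinsker's inequality \eqref{eq:info}:
\begin{equation*}
 \frac12\norm*{\strat'-\strat}_1^2\le\dkl(\strat'\|\strat)
 \le\frac{e^{M/4}}{\mass}\bregofX{\lreg}{\lmir(\score')}{\lmir(\score)} .
\end{equation*}
Multiplying by $2$ gives \eqref{eq:l1}.

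The main point to check is the integral representation for $\areg$. The segment stays in $\Kint$, and the line-restricted mass is affine, so the decomposition coefficient $b$ is constant along it; $\areg$ is smooth on the whole segment. I therefore expect no boundary issues. The hypothesis $M\le1$ is not needed beyond invoking \eqref{eq:masscmp}.
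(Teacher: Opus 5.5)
Your proof is correct and follows the paper's outline: the decomposition \eqref{eq:bregdec}, the mass comparison \eqref{eq:masscmp} for the entropic piece, \eqref{eq:fconv} applied along the segment, and Pinsker for \eqref{eq:l1}. The only difference is in the $\areg$-piece, where you evaluate $b^2\int_0^1(1-s)\mass(s)^{-3/2}\,ds$ exactly as the Bregman divergence of $-4\sqrt{\mass}$. This gives $\bregofX{\areg}{z'}{z}\ge\Lreg(\sqrt{\mass'}-\sqrt\mass)^2/\sqrt\mass$ for any two points of $\Kint$, whereas the paper bounds $\mass(\param)\le e^{M/4}\mass$ on the segment and then needs $\sqrt{\mass'/\mass}\ge e^{-M/8}$ together with $M\le1$ to get a constant above $1$. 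Your route is therefore slightly sharper. Since \eqref{eq:masscmp} holds for arbitrary scores, it also shows that the hypothesis $M\le1$ is not actually needed.
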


\begin{proof}
By \eqref{eq:masscmp},
\begin{equation}\label{eq:massrat}
 e^{-M/4}\mass\le\mass'\le e^{M/4}\mass.
\end{equation}
Hence
\begin{equation}\label{eq:stratkl}
 \mass'\dkl(\strat'\|\strat)\ge e^{-M/4}\mass\dkl(\strat'\|\strat).
\end{equation}
Let $z(\param)=(1-\param)\lmir(\score)+\param \lmir(\score')$ and $\mass(\param)=\braket*{\one}{z(\param)}$. By \eqref{eq:massrat}, $\mass(\param)\le e^{M/4}\mass$. Therefore \eqref{eq:fconv} gives
\begin{align}
 \bregofX{\areg}{\lmir(\score')}{\lmir(\score)}
 &=\int_0^1(1-\param)D^2\areg(z(\param))[\lmir(\score')-\lmir(\score),\lmir(\score')-\lmir(\score)]\,\dd\param\\
 &\ge\frac \Lreg4e^{-3M/8}
 \frac{(\mass'-\mass)^2}{\mass^{3/2}}\\
 &\ge e^{-M/4}
 \frac{(\sqrt{\mass'}-\sqrt\mass)^2}{\sqrt\mass}.
\end{align}
For the last inequality, \eqref{eq:massrat} gives $\sqrt{\mass'/\mass}\ge e^{-M/8}$, while $\Lreg\ge4$ and $e^{-M/8}\ge1-M/8\ge7/8$ imply
\begin{equation}
 \frac \Lreg4e^{-M/8}(1+e^{-M/8})^2
 \ge\frac78\left(\frac{15}{8}\right)^2>1.
\end{equation}
Now \eqref{eq:bregdec} and \eqref{eq:stratkl} prove \eqref{eq:coerc}. Finally, \eqref{eq:info} and \eqref{eq:stratkl} give
\begin{equation}
 \bregofX{\lreg}{\lmir(\score')}{\lmir(\score)}
 \ge\frac12e^{-M/4}\mass\norm*{\strat'-\strat}_1^2,
\end{equation}
which is \eqref{eq:l1}.
\end{proof}

The next lemma proves the continuity bounds used in the Jacobian calculation.

\begin{lemma}[Square-root stability]\label{lem:sqrt}
Let $\strat,\strat'\in\istrats^\circ$ and $\mass,\mass'\in(0,1)$. Set $\eps=\norm*{\sqrt{\strat'}-\sqrt \strat}_2$. Then
\begin{align}
 |\hreg(\strat')-\hreg(\strat)|&\le5\sqrt \Lreg\,\eps,                          \label{eq:sqrth}\\
 \norm*{\mu_{\strat'}-\mu_{\strat}}_1&\le14\sqrt \Lreg\,\eps,                   \label{eq:sqrtu}\\
 |\Var_{\strat'}(\log \strat')-\Var_{\strat}(\log \strat)|&\le31\Lreg\,\eps,                                  \label{eq:sqrtV}\\
 \opnorm{\Gmat_{\strat'}-\Gmat_{\strat}}{\infty}{1}&\le3\eps.                    \label{eq:sqrtC}
\end{align}
Moreover,
\begin{align}
 \norm*{\nu_{\mass,\strat}}_1
 &\le\sqrt \Lreg,                                                        \label{eq:vbd}\\
 \norm*{\nu_{\mass',\strat'}-\nu_{\mass,\strat}}_1
 &\le8\sqrt \Lreg\,\eps
 +\frac12\sqrt \Lreg\,|\sqrt{\mass'}-\sqrt\mass|.                   \label{eq:vdiff}
\end{align}
\end{lemma}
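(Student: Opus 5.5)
The plan is to work entirely in square-root coordinates. Write $a=\sqrt{\strat}$ and $a'=\sqrt{\strat'}$, so that $\norm{a}_2=\norm{a'}_2=1$ and $\eps=\norm{a'-a}_2$. Every quantity in the statement is an $\strat$-weighted sum of a function of $\log\strat$, and $\strat'_\pure-\strat_\pure=(a'_\pure-a_\pure)(a'_\pure+a_\pure)$. Hence each difference splits into two parts: a part where the weight changes, controlled by Cauchy--Schwarz as $\norm{a'-a}_2\,\norm{(a'+a)\odot f}_2$; and a part where the logarithm changes, where we use the elementary inequality
\[
\sqrt{u}\,|\log u-\log v|\cdot\sqrt{v}\le 2|\sqrt u-\sqrt v|\,\bigl(\text{something bounded}\bigr).
\]
More precisely, $|\sqrt{u}\log u-\sqrt{v}\log v|\le|\sqrt u-\sqrt v|\,(2+|\log u|+|\log v|)$ for $u,v\in(0,1]$. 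This follows from the mean value theorem applied to $s\mapsto 2s\log s$ (with $s=\sqrt{u}$), whose derivative is $2\log s+2$. The recurring weighted moments $\sum_\pure \strat_\pure|\log\strat_\pure|^q$ are then bounded by \eqref{eq:log2} and \eqref{eq:log4} from the proof of \Cref{lem:logmom}, which supply the $\sqrt\Lreg$ and $\Lreg$ factors.

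The bounds are carried out in order.

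\textbf{Entropy, \eqref{eq:sqrth}.} Write $\hreg(\strat)=\sum_\pure a_\pure\cdot(a_\pure\log a_\pure^2)$ and telescope:
\[
\hreg(\strat')-\hreg(\strat)=\sum_\pure (a'_\pure-a_\pure)\,a'_\pure\log\strat'_\pure+\sum_\pure a_\pure\bigl(a'_\pure\log\strat'_\pure-a_\pure\log\strat_\pure\bigr).
\]
The first sum is at most $\eps\sqrt{\sum\strat'|\log\strat'|^2}\le2\sqrt\Lreg\,\eps$. For the second, apply the mean value bound together with Cauchy--Schwarz; its weights $(2+|\log\strat|+|\log\strat'|)$ are square-summable against $a$ only after the mixed term $\sum_\pure\strat_\pure|\log\strat'_\pure|^2$ is handled. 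Taking the bound $\le$ the average of the two orderings (symmetrize in $\strat,\strat'$), every moment that appears is of the type $\sum\strat|\log\strat|^2$, and the constants sum to at most $5\sqrt\Lreg$.

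\textbf{Tilted vector, \eqref{eq:sqrtu}.} Write $\mu_\strat=\strat\odot\log\strat-\hreg(\strat)\strat$. The first part is handled exactly like the entropy. The second is bounded by
\[
|\hreg(\strat')-\hreg(\strat)|+|\hreg(\strat)|\,\norm{\strat'-\strat}_1,
\]
using $\norm{\strat'-\strat}_1\le\norm{a'-a}_2\norm{a'+a}_2\le2\eps$ and $|\hreg|\le\log\nPures\le2\sqrt\Lreg$.

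\textbf{Variance, \eqref{eq:sqrtV}.} Write $\Var_\strat(\log\strat)=\sum\strat\log^2\strat-\hreg(\strat)^2$. The second moment is treated with the same telescoping, now using the fourth-moment bound \eqref{eq:log4} to control $\norm{a\odot\log^2\strat}_2\le\sqrt{26}\Lreg$ together with the mean value bound for $s\mapsto s\log^2 s^2$. The squared-entropy term is bounded by $|\hreg'+\hreg|\cdot5\sqrt\Lreg\,\eps\le 2\log\nPures\cdot5\sqrt\Lreg\,\eps$.

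\textbf{Covariance operator, \eqref{eq:sqrtC}.} For $\norm{q}_\infty\le1$,
\[
(\Gmat_{\strat'}-\Gmat_\strat)q=(\strat'-\strat)\odot q-\strat'\braket{\strat'-\strat}{q}-(\strat'-\strat)\braket{\strat}{q},
\]
whose $\ell_1$ norm is at most $3\norm{\strat'-\strat}_1$. That only gives $6\eps$, so I will instead center $q$. Since $\Gmat_\strat$ annihilates $\one$, use the representation $\Gmat_\strat q=\strat\odot(q-\braket{\strat}{q}\one)$ and bound more carefully; alternatively I can accept using $\norm{\strat'-\strat}_1\le\norm{a'-a}_2\norm{a'+a}_2$ with $\norm{a'+a}_2\le2$ and cancel one term.

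\textbf{Tilted mass vector, \eqref{eq:vbd}--\eqref{eq:vdiff}.} The bound \eqref{eq:vbd} was already shown in \Cref{lem:mdiff}. For \eqref{eq:vdiff}, split
\[
\nu_{\mass',\strat'}-\nu_{\mass,\strat}=(\strat'-\strat)+\frac{\mu_{\strat'}-\mu_\strat}{2(1+\sqrt{\mass'})}+\mu_\strat\Bigl(\tfrac{1}{2(1+\sqrt{\mass'})}-\tfrac{1}{2(1+\sqrt\mass)}\Bigr).
\]
These terms are bounded respectively by $2\eps$, $7\sqrt\Lreg\,\eps$, and $\tfrac12\sqrt\Lreg\,|\sqrt{\mass'}-\sqrt\mass|$, giving $8\sqrt\Lreg\,\eps$ since $\sqrt\Lreg\ge2$.

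\textbf{Main obstacle.} The main obstacle is the log-change terms with mixed weights, such as $\sum_\pure a_\pure a'_\pure|\log\strat'_\pure|^2$. These are not directly covered by the moment bounds, because $\strat'$ can be tiny where $\strat$ is not. I would first try symmetrizing, writing each difference as $\tfrac12[(\cdot)'-(\cdot)]$ with weights $\tfrac12(a+a')$, so that Cauchy--Schwarz splits into pure moments $\sum\strat|\log\strat|^q$ and $\sum\strat'|\log\strat'|^q$, each bounded by \eqref{eq:log2} or \eqref{eq:log4}. The explicit constants ($5,14,31,3$) then come from bookkeeping and may need slight adjustment.
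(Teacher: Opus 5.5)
There is a genuine gap. Your reductions of $\mu_x$, $\operatorname{Var}_x(\log x)$ and $\nu_{\lambda,x}$ to entropy-type sums match the paper, as does the final bookkeeping for \eqref{eq:sqrtu} and \eqref{eq:vdiff}. The problem is the core estimate on which all of them rest.

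The pointwise mean value bound $|\sqrt u\log u-\sqrt v\log v|\le|\sqrt u-\sqrt v|\,(2+|\log u|+|\log v|)$ pays the size of the derivative at the \emph{smaller} endpoint. The weight you multiply it by ($a_\alpha$, or $\tfrac12(a_\alpha+a'_\alpha)$ after symmetrizing) is of the size of the \emph{larger} one. The resulting cross terms are not merely outside the scope of \eqref{eq:log2}; they are unbounded relative to $\varepsilon$. Take $K=2$, $x=(\tfrac12,\tfrac12)$, $x'=(1-\delta,\delta)$. Then $\varepsilon\le\sqrt2$ for every $\delta$, yet $a_2|a'_2-a_2|\,|\log x'_2|\approx\tfrac12\log(1/\delta)\to\infty$, and the symmetrized weight only halves this. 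So averaging the two orderings does not reduce everything to moments $\sum_\alpha x_\alpha|\log x_\alpha|^2$. The same failure hits the second-moment part of \eqref{eq:sqrtV}, where the cross term is $\sum_\alpha x_\alpha|\log x'_\alpha|^4$.

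The missing idea is to keep the weight and the logarithm at the same point by integrating the derivative rather than bounding it by its supremum. The paper sets $a(\theta)=(1-\theta)\sqrt x+\theta\sqrt{x'}$ and $p(\theta)=a(\theta)^2$. By the triangle inequality $\lVert a(\theta)\rVert_2\le1$, so every $p(\theta)$ is a sub-probability vector; this is precisely why \eqref{eq:log2}--\eqref{eq:log4} were proved for $\sum_\alpha p_\alpha\le1$. With $f_1(s)=s^2\log s^2$, Cauchy--Schwarz inside the $\theta$-integral gives $\sum_\alpha|f_1(a'_\alpha)-f_1(a_\alpha)|\le\varepsilon\int_0^1\bigl(\sum_\alpha|f_1'(a_\alpha(\theta))|^2\bigr)^{1/2}d\theta$. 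Moreover $\sum_\alpha|f_1'(a_\alpha(\theta))|^2=4\sum_\alpha p_\alpha(\theta)(1+\log p_\alpha(\theta))^2\le16L+4<25L$, which yields $5\sqrt L$. The same computation for $f_2(s)=s^2(\log s^2)^2$ with \eqref{eq:log4} gives the $11L$ used in the variance bound. Your telescoping could alternatively be rescued by choosing the order coordinatewise, so that $g(a'_\alpha)-g(a_\alpha)$ is multiplied by $\min\{a_\alpha,a'_\alpha\}$; plain averaging does not achieve this.

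Separately, for \eqref{eq:sqrtC} your direct expansion stops at $3\lVert x'-x\rVert_1\le6\varepsilon$, and ``centering $q$'' or ``cancelling one term'' is not an argument. The paper reaches $3\varepsilon$ by factoring $\Gamma_x=\operatorname{diag}(\sqrt x)\bigl(I-\sqrt x\sqrt x^{\top}\bigr)\operatorname{diag}(\sqrt x)$ and telescoping the three factors. It then uses $\lVert\operatorname{diag}(c)A\operatorname{diag}(d)\rVert_{\infty\to1}\le\lVert c\rVert_2\lVert A\rVert_{2\to2}\lVert d\rVert_2$ together with $\lVert\sqrt x\sqrt x^{\top}-\sqrt{x'}\sqrt{x'}^{\top}\rVert_{2\to2}^2=1-\langle\sqrt x,\sqrt{x'}\rangle^2\le\varepsilon^2$, so each of the three pieces costs at most $\varepsilon$.
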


\begin{proof}
For $0\le\param\le1$, put
\begin{equation}
 a(\param)=(1-\param)\sqrt \strat+\param\sqrt{\strat'},
 \qquad p_\pure(\param)=a_\pure(\param)^2.
\end{equation}
Since $\norm*{a(\param)}_2\le(1-\param)\norm*{\sqrt \strat}_2 +\param\norm*{\sqrt{\strat'}}_2=1$, we have $\sum_\pure p_\pure(\param)\le1$. For $f_1(s)=s^2\log s^2$ and $f_2(s)=s^2(\log s^2)^2$, with the values and derivatives at $s=0$ defined by continuity,
\begin{equation}
 |f_1'(s)|^2=4s^2(1+\log s^2)^2,
 \qquad
 |f_2'(s)|^2=4s^2(\log s^2)^2(2+\log s^2)^2.
\end{equation}
Since $p_\pure(\param)\le1$, $\log p_\pure(\param)\le0$. Thus \eqref{eq:log2}--\eqref{eq:log4} give
\begin{align}
 \sum_\pure|f_1'(a_\pure(\param))|^2
 &\le4+4\sum_\pure p_\pure(\param)|\log p_\pure(\param)|^2
 \le16\Lreg+4,\\
 \sum_\pure|f_2'(a_\pure(\param))|^2
 &\le4\sum_\pure p_\pure(\param)|\log p_\pure(\param)|^4
 +16\sum_\pure p_\pure(\param)|\log p_\pure(\param)|^2\\
 &\le104\Lreg^2+64\Lreg.
\end{align}
Since $\Lreg\ge4$, the right-hand sides are smaller than $25\Lreg$ and $121\Lreg^2$, respectively. Integration and Cauchy--Schwarz therefore give
\begin{align}
 \sum_\pure|\strat_\pure'\log \strat_\pure'-\strat_\pure\log \strat_\pure|
 &\le5\sqrt \Lreg\,\eps,                              \label{eq:logcont}\\
 \sum_\pure|\strat_\pure'(\log \strat_\pure')^2-\strat_\pure(\log \strat_\pure)^2|
 &\le11\Lreg\,\eps.
\end{align}
Also,
\begin{equation}\label{eq:sqrtx}
 \norm*{\strat'-\strat}_1
 \le\norm*{\sqrt{\strat'}-\sqrt \strat}_2\norm*{\sqrt{\strat'}+\sqrt \strat}_2
 \le2\eps.
\end{equation}
Together with \eqref{eq:logcont}, this proves \eqref{eq:sqrth}. Next, using $\mu_{\strat}=\strat\odot\log \strat-\hreg(\strat)\strat$, \eqref{eq:sqrtx}, and $|\hreg(\strat)|\le\log \nPures\le2\sqrt \Lreg$,
\begin{align}
 \norm*{\mu_{\strat'}-\mu_{\strat}}_1
 &\le 5\sqrt \Lreg\,\eps
      +|\hreg(\strat')-\hreg(\strat)|+|\hreg(\strat)|\norm*{\strat'-\strat}_1\\
 &\le14\sqrt \Lreg\,\eps.
\end{align}
This proves \eqref{eq:sqrtu}. Similarly,
\begin{align}
 |\Var_{\strat'}(\log \strat')-\Var_{\strat}(\log \strat)|
 &\le11\Lreg\,\eps
 +|\hreg(\strat')-\hreg(\strat)|\bigl(|\hreg(\strat')|+|\hreg(\strat)|\bigr)\\
 &\le31\Lreg\,\eps,
\end{align}
which proves \eqref{eq:sqrtV}. For \eqref{eq:sqrtC}, note that
\begin{equation}
 \Gmat_{\strat}
 =\diag(\sqrt \strat)
 \bigl(I-\sqrt \strat(\sqrt \strat)^\top\bigr)
 \diag(\sqrt \strat).
\end{equation}
Hence
\begin{align}
 \Gmat_{\strat'}-\Gmat_{\strat}
 &=\diag(\sqrt{\strat'}-\sqrt \strat)
   \bigl(I-\sqrt{\strat'}(\sqrt{\strat'})^\top\bigr)
   \diag(\sqrt{\strat'})\\
 &\quad+\diag(\sqrt \strat)
   \bigl(\sqrt \strat(\sqrt \strat)^\top-\sqrt{\strat'}(\sqrt{\strat'})^\top\bigr)
   \diag(\sqrt{\strat'})\\
 &\quad+\diag(\sqrt \strat)
   \bigl(I-\sqrt \strat(\sqrt \strat)^\top\bigr)
   \diag(\sqrt{\strat'}-\sqrt \strat).
\end{align}
For vectors $c,d$ and a matrix $A$,
\begin{equation}
 \opnorm{\diag(c)A\diag(d)}{\infty}{1}
 \le\norm c_2\opnorm{A}{2}{2}\norm d_2.
\end{equation}
Since $\norm*{\sqrt \strat}_2=\norm*{\sqrt{\strat'}}_2=1$, the first and third terms are at most $\eps$. Moreover,
\begin{align}
 \opnorm{\sqrt \strat(\sqrt \strat)^\top-\sqrt{\strat'}(\sqrt{\strat'})^\top}{2}{2}^2
 &=1-\braket*{\sqrt \strat}{\sqrt{\strat'}}^2 \\
 &\le 2\bigl(1-\braket*{\sqrt \strat}{\sqrt{\strat'}}\bigr)\\
 &=\norm*{\sqrt{\strat'}-\sqrt \strat}_2^2 =\eps^2.
\end{align}
Thus the middle term is also at most $\eps$, proving \eqref{eq:sqrtC}. Next, by \eqref{eq:cov} and $\Lreg\ge4$,
\begin{equation}
 \norm*{\nu_{\mass,\strat}}_1
 \le1+\frac12\norm*{\mu_{\strat}}_1
 \le1+\frac12\sqrt{\Lreg-2}
 \le\sqrt \Lreg,
\end{equation}
which proves \eqref{eq:vbd}. Finally, subtracting the two expressions in \eqref{eq:nvec} and using \eqref{eq:sqrtx}, \eqref{eq:sqrtu}, and $|(1+\sqrt{\mass'})^{-1}-(1+\sqrt\mass)^{-1}|\le|\sqrt{\mass'}-\sqrt\mass|$ gives
\begin{align}
 \norm*{\nu_{\mass',\strat'}-\nu_{\mass,\strat}}_1
 &\le2\eps+7\sqrt \Lreg\,\eps
 +\frac12\sqrt \Lreg\,|\sqrt{\mass'}-\sqrt\mass|\\
 &\le8\sqrt \Lreg\,\eps
 +\frac12\sqrt \Lreg\,|\sqrt{\mass'}-\sqrt\mass|,
\end{align}
where $\sqrt \Lreg\ge2$. This proves \eqref{eq:vdiff}.
\end{proof}

\begin{lemma}[Jacobian stability]\label{lem:Jstab}
Let $\score,\score'\in\R^\nPures$. If $\norm*{\score'-\score}_\infty\le M\le1$, then
\begin{equation}\label{eq:Jstab}
 \opnorm{\Jfun(\score')-\Jfun(\score)}{\infty}{1}^2
 \le2000e^{M/4}\bregofX{\lreg}{\lmir(\score')}{\lmir(\score)}.
\end{equation}
\end{lemma}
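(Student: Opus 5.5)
We will work directly from the factorization \eqref{eq:Jfac},
\begin{equation}
 \Jfun(\score)=c(\mass)\Bigl[\Gmat_{\strat}+\frac{1}{2\Dfun(\mass,\strat)}\mu_{\strat}\nu_{\mass,\strat}^\top\Bigr],
 \qquad
 c(\mass)=\frac{\sqrt\mass}{1+\sqrt\mass}.
\end{equation}
We write the same expression with primes for $\score'$. Our goal is to bound $\opnorm{\Jfun(\score')-\Jfun(\score)}{\infty}{1}$ linearly in two quantities, $\sqrt{\mass}\,\eps$ and $|\sqrt{\mass'}-\sqrt\mass|$, where $\eps=\norm*{\sqrt{\strat'}-\sqrt\strat}_2$. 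Both are controlled by the Bregman divergence through \Cref{lem:coerc}. Indeed, \eqref{eq:info} gives $\dkl(\strat'\|\strat)\ge\eps^2$, so \eqref{eq:coerc} yields
\begin{equation}
 \mass\eps^2+\frac{(\sqrt{\mass'}-\sqrt\mass)^2}{\sqrt\mass}\le e^{M/4}\bregofX{\lreg}{\lmir(\score')}{\lmir(\score)}.
\end{equation}
Since $\sqrt\mass\le1$, we also get $(\sqrt{\mass'}-\sqrt\mass)^2\le e^{M/4}\breg$. Squaring a bound of the form $A\sqrt\mass\,\eps+B|\sqrt{\mass'}-\sqrt\mass|$ and using $(a+b)^2\le2a^2+2b^2$ then gives \eqref{eq:Jstab}, provided $2\max(A^2,B^2)\le2000$.

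We telescope the difference into a prefactor piece and a bracket piece. The prefactor piece is $|c(\mass')-c(\mass)|\cdot\|[\cdots]'\|$. Here $|c(\mass')-c(\mass)|\le|\sqrt{\mass'}-\sqrt\mass|$, and the bracket has $\infty\to1$ norm at most $2$ by the proof of \Cref{lem:mdiff}, where $\|\Gmat\|\le1$ and the rank-one part is below $1$. This contributes a term of the form $2|\sqrt{\mass'}-\sqrt\mass|$.

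For the bracket piece we multiply by $c(\mass)\le\sqrt\mass$ and expand term by term.
\begin{enumerate}
 \item $\Gmat_{\strat'}-\Gmat_\strat$ is bounded by $3\eps$, by \eqref{eq:sqrtC}.
 \item $\mu_{\strat'}-\mu_\strat$ is bounded by $14\sqrt\Lreg\,\eps$, by \eqref{eq:sqrtu}, and is multiplied by $\|\nu\|_1/(2\Dfun)\le\sqrt\Lreg/(\Lreg+1)$.
 \item $\nu'-\nu$ is controlled by \eqref{eq:vdiff} and multiplied by $\|\mu\|_1/(2\Dfun)\lesssim 1/\sqrt\Lreg$.
 \item The difference $1/\Dfun'-1/\Dfun$ is bounded by $|\Dfun'-\Dfun|/\Dfun\Dfun'\lesssim|\Dfun'-\Dfun|/\Lreg^2$, then multiplied by $\|\mu\|\|\nu\|\le\Lreg$.
\end{enumerate}

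The main obstacle is $|\Dfun'-\Dfun|$, because of the term $3\Lreg/(4(1-\mass)^{3/2})$. Its change is not small when $\mass$ is near $1$, and it is not obviously controlled by $\sqrt\mass\,\eps$ or by $|\sqrt{\mass'}-\sqrt\mass|$ with a uniform constant. To handle it, we will not bound $\Dfun'-\Dfun$ in isolation but the quotient difference directly. With $f(\mass)=(1-\mass)^{-3/2}$, the ratio $|f(\mass')-f(\mass)|/(\Dfun\Dfun')$ is at most about $\Lreg|f'|/(\Lreg^2 f^2)$, and $f'/f^2=\tfrac32(1-\mass)^{1/2}\le\tfrac32$. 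This gives a bound of order $|\mass'-\mass|/\Lreg\le2|\sqrt{\mass'}-\sqrt\mass|/\Lreg$ along the segment between the two masses. The remaining parts of $\Dfun$ change by at most $5\sqrt\Lreg\,\eps+31\Lreg\,\eps+\Lreg|\sqrt{\mass'}-\sqrt\mass|$, using \eqref{eq:sqrth} and \eqref{eq:sqrtV} and the derivative of $(1+\sqrt\mass)^{-1}$. These are divided by $\Dfun\Dfun'\ge(\Lreg+1)^2/4$ and multiplied by $\Lreg$, so they contribute $O(\eps+|\sqrt{\mass'}-\sqrt\mass|)$.

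Collecting the terms gives $A,B$ of roughly $10$–$20$, so $2\max(A^2,B^2)\le2000$ and the constant $2000$ has room to spare. Finally, the masses are comparable along the whole path by \eqref{eq:masscmp}, which is what allows the interpolation argument for $f$ to be used.
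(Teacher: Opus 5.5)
\textbf{Gap: the closing constant.} Your route is the paper's. You factor $\Jfun$ through \eqref{eq:Jfac} and split the difference into a prefactor piece $(c(\mass')-c(\mass))B'$ and a bracket piece $c(\mass)(B'-B)$. You then bound $B'-B$ with \Cref{lem:sqrt}, control $1/\Dfun$ so that the $(1-\mass)^{-3/2}$ blow-up cancels, and close with \eqref{eq:info} and \eqref{eq:coerc}. The gap is the explicit constant $2000$, which is part of the statement.

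The coefficient $A$ of $\sqrt\mass\,\eps$ is not of order $10$--$20$. Your items (1)--(3) already give up to $3+14+8=25$. Item (4) gives another $\frac{\Lreg(5\sqrt\Lreg+31\Lreg)}{2(\Lreg+1)^2}\,\eps$, which tends to about $15.5\,\eps$ as $\Lreg$ grows. This requires keeping the factor $\tfrac14$ in front of $\hreg(\strat)$ and $\Var_{\strat}(\log\strat)$ in $\Dfun$, and the factor $\tfrac12$ in $1/(2\Dfun)$. Your last paragraph drops both, which would inflate this term to roughly $124\,\eps$.

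Since the bound must hold uniformly in $\Lreg$ (i.e.\ for all $\nPures$), the cited constants give $A\approx40$; the paper, rounding to $\tfrac{35}{2}$, gets $43$. Your closing step $(a+b)^2\le2a^2+2b^2$ then gives $2A^2>3000$, not $\le2000$.

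The repair is the one the paper uses: square with Cauchy--Schwarz, $(Aa+Bb)^2\le(A^2+B^2)(a^2+b^2)$, where $a=\sqrt\mass\,\eps$ and $b=|\sqrt{\mass'}-\sqrt\mass|$. Take $A=43$ and your $B\le2+\tfrac12+\tfrac92+\tfrac12=7.5$. This gives $A^2+B^2<1910$. Then $a^2+b^2\le\mass\dkl(\strat'\|\strat)+b^2/\sqrt\mass\le e^{M/4}\bregofX{\lreg}{\lmir(\score')}{\lmir(\score)}$, exactly as you wrote.

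\textbf{The appeal to \eqref{eq:masscmp}.} Your final sentence points to the wrong mechanism. That bound controls $\mass'/\mass$, not $(1-\mass')/(1-\mass)$. So it does not justify a mean-value argument applied to $f(\mass)=(1-\mass)^{-3/2}$ itself near $\mass=1$.

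No comparability is needed. What works is the exact identity $\frac{f(\mass')-f(\mass)}{f(\mass)f(\mass')}=\frac1{f(\mass)}-\frac1{f(\mass')}$. Combine it with $\Dfun\ge\frac\Lreg2f(\mass)$ from \eqref{eq:Dlow} and $\bigl|\tfrac{d}{d\mass}(1-\mass)^{3/2}\bigr|\le\tfrac32$; this is what your quantity $f'/f^2$ is really computing. The paper reaches the same bound slightly differently. It first changes $\strat$ at fixed $\mass$, giving \eqref{eq:Dix}. Then, at fixed $\strat$, it bounds $\bigl|\frac{d}{ds}\Dfun(s^2,\strat)^{-1}\bigr|\le6/\Lreg$ and applies the mean value theorem in $s=\sqrt\mass$, giving \eqref{eq:Dim}.
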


\begin{proof}
Write $\lmir(\score)=\mass \strat$, $\lmir(\score')=\mass'\strat'$, and $\eps=\norm*{\sqrt{\strat'}-\sqrt \strat}_2$. Holding $\mass$ fixed, \eqref{eq:sqrth} and \eqref{eq:sqrtV} give
\begin{equation}
 |\Dfun(\mass,\strat')-\Dfun(\mass,\strat)|
 \le\frac14(5\sqrt \Lreg+31\Lreg)\eps.
\end{equation}
Using \eqref{eq:Dlow} at both $\strat$ and $\strat'$ therefore gives
\begin{equation}\label{eq:Dix}
 |\Dfun(\mass,\strat')^{-1}-\Dfun(\mass,\strat)^{-1}|
 \le\frac{35}{\Lreg}\eps.
\end{equation}
Indeed, $\sqrt\Lreg\le\Lreg/2$ and \eqref{eq:Dlow} give
\begin{equation}
 \frac{5\sqrt\Lreg+31\Lreg}{(\Lreg+1)^2}
 <\frac{35}{\Lreg}.
\end{equation}
For fixed $\strat$, the function $s\mapsto\Dfun(s^2,\strat)$ satisfies
\begin{equation}
 \frac{\dd}{\dd s}\Dfun(s^2,\strat)
 =\frac{9\Lreg s}{4(1-s^2)^{5/2}}
 +\frac{\Var_{\strat}(\log \strat)}{4(1+s)^2}.
\end{equation}
Together with \eqref{eq:Dlow} and $\Var_{\strat}(\log \strat)\le\Lreg$, for $0<s<1$,
\begin{align}
 \left|\frac{\dd}{\dd s}\Dfun(s^2,\strat)^{-1}\right|
 &\le\frac9{\Lreg}s\sqrt{1-s^2}
 +\frac1{\Lreg}\frac{(1-s^2)^3}{(1+s)^2}\\
 &\le\frac{11}{2\Lreg}<\frac6\Lreg.
\end{align}
Here $s\sqrt{1-s^2}\le1/2$. The mean value theorem therefore gives
\begin{equation}\label{eq:Dim}
 |\Dfun(\mass',\strat)^{-1}-\Dfun(\mass,\strat)^{-1}|
 \le\frac6\Lreg|\sqrt{\mass'}-\sqrt\mass|.
\end{equation}
Set now
\begin{equation}
 B(\mass,\strat)=\Gmat_{\strat}+\frac{\mu_{\strat}\nu_{\mass,\strat}^\top}{2\Dfun(\mass,\strat)}.
\end{equation}
By \eqref{eq:Dix}--\eqref{eq:Dim},
\begin{equation}
 |\Dfun(\mass',\strat')^{-1}-\Dfun(\mass,\strat)^{-1}|
 \le\frac{35}{\Lreg}\eps+\frac6\Lreg|\sqrt{\mass'}-\sqrt\mass|.
\end{equation}
Using this bound together with \eqref{eq:sqrtC}, \eqref{eq:sqrtu}, \eqref{eq:vbd}, \eqref{eq:vdiff}, and $2\Dfun\ge \Lreg+1$, we obtain
\begin{align}
 \opnorm{B(\mass',\strat')-B(\mass,\strat)}{\infty}{1}
 &\le3\eps+14\eps
 +(8\eps+\tfrac12|\sqrt{\mass'}-\sqrt\mass|)
 +(\tfrac{35}{2}\eps+3|\sqrt{\mass'}-\sqrt\mass|)\\
 &\le43\eps+4|\sqrt{\mass'}-\sqrt\mass|.
\end{align}
Moreover, by \eqref{eq:cov}, \eqref{eq:vbd}, and \eqref{eq:Dlow},
\begin{equation}
 \opnorm{\Gmat_{\strat}}{\infty}{1}\le1,
 \qquad
 \frac{\norm*{\mu_{\strat}}_1\norm*{\nu_{\mass,\strat}}_1}
      {2\Dfun(\mass,\strat)}
 \le\frac{\Lreg}{\Lreg+1}<1.
\end{equation}
Thus $\opnorm{B(\mass,\strat)}{\infty}{1}\le2$. Since
\begin{equation}
 \left|\frac{\sqrt{\mass'}}{1+\sqrt{\mass'}}-\frac{\sqrt\mass}{1+\sqrt\mass}\right|
 \le|\sqrt{\mass'}-\sqrt\mass|,
\end{equation}
\eqref{eq:Jfac} yields
\begin{equation}
 \opnorm{\Jfun(\score')-\Jfun(\score)}{\infty}{1}
 \le43\sqrt\mass\eps+6|\sqrt{\mass'}-\sqrt\mass|.
\end{equation}
Hence, by Cauchy--Schwarz, \eqref{eq:info}, and $0<\sqrt\mass\le1$,
\begin{align}
 \opnorm{\Jfun(\score')-\Jfun(\score)}{\infty}{1}^2
 &\le(43^2+6^2)(\mass\eps^2+|\sqrt{\mass'}-\sqrt\mass|^2)\\
 &<1900\left[
 \mass\dkl(\strat'\|\strat)+\frac{|\sqrt{\mass'}-\sqrt\mass|^2}{\sqrt\mass}\right].
\end{align}
Now apply \eqref{eq:coerc}; enlarging $1900$ to $2000$ proves \eqref{eq:Jstab}.
\end{proof}

\begin{lemma}[Taylor remainder bound]\label{lem:taylor}
Let $\score,v\in\R^\nPures$. If $\norm v_\infty\le M\le1$, then
\begin{equation}\label{eq:taylor}
 \norm*{\choice(\score+v)-\choice(\score)-\Jfun(\score)v}_1^2
 \le2000e^{M/4}\norm v_\infty^2\bregofX{\lreg}{\lmir(\score+v)}{\lmir(\score)}.
\end{equation}
\end{lemma}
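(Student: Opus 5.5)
The plan is to write the remainder as an integral of Jacobian differences and then apply \Cref{lem:Jstab} along the segment. Since $\choice$ is $C^\infty$ by \Cref{lem:maps}, the fundamental theorem of calculus gives
\begin{equation}
 \choice(\score+v)-\choice(\score)-\Jfun(\score)v
 =\int_0^1\bigl(\Jfun(\score+\param v)-\Jfun(\score)\bigr)v\,\dd\param ,
\end{equation}
so that
\begin{equation}
 \norm*{\choice(\score+v)-\choice(\score)-\Jfun(\score)v}_1
 \le\norm v_\infty\int_0^1\opnorm{\Jfun(\score+\param v)-\Jfun(\score)}{\infty}{1}\,\dd\param .
\end{equation}
Squaring and using Jensen/Cauchy--Schwarz, the left-hand side of \eqref{eq:taylor} is at most $\norm v_\infty^2\int_0^1\opnorm{\Jfun(\score+\param v)-\Jfun(\score)}{\infty}{1}^2\dd\param$. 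Since $\norm{\param v}_\infty\le\param M\le M\le1$, \Cref{lem:Jstab} bounds the integrand by $2000e^{M/4}\bregofX{\lreg}{\lmir(\score+\param v)}{\lmir(\score)}$.

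It then suffices to show that
\begin{equation}
 \param\mapsto \bregofX{\lreg}{\lmir(\score+\param v)}{\lmir(\score)}
\end{equation}
is nondecreasing on $[0,1]$, so that each value is at most the value at $\param=1$. This is the standard monotonicity of Bregman divergences along dual line segments. Write $z(\param)=\lmir(\score+\param v)$. By \eqref{eq:foc}, $\nabla\lreg(z(\param))=\score+\param v$, and $z$ is smooth with values in $\Kint$. Differentiating, and using $\nabla\lreg(z(0))=\score$, gives
\begin{equation}
 \frac{\dd}{\dd\param}\bregofX{\lreg}{z(\param)}{z(0)}
 =\braket*{\nabla\lreg(z(\param))-\score}{\dot z(\param)}
 =\param\braket*{v}{\dot z(\param)} .
\end{equation}
Differentiating the identity $\nabla\lreg(z(\param))=\score+\param v$ yields $\dot z=(D^2\lreg(z))^{-1}v$. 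Since $D^2\lreg$ is positive definite on $\Kint$ by \eqref{eq:lhess}, we get $\braket*{v}{\dot z}\ge0$, and the derivative above is nonnegative.

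Combining the two steps gives the integral bound $\le 2000e^{M/4}\bregofX{\lreg}{\lmir(\score+v)}{\lmir(\score)}$, which is \eqref{eq:taylor}.

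The only delicate point is the monotonicity step, specifically justifying differentiation within $\Kint$. This is covered by \Cref{lem:maps}: $\lmir$ takes values in $\Kint$ and is smooth there, so $\nabla\lreg(z(\param))$ is defined and the calculation is valid. No constant is lost, so the factor $2000e^{M/4}$ from \Cref{lem:Jstab} passes through unchanged.
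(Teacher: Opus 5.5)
Your proposal is correct and follows the paper's proof step for step. You write the remainder as $\int_0^1[\Jfun(\score+\param v)-\Jfun(\score)]v\,\dd\param$ and show that $\param\mapsto\bregofX{\lreg}{\lmir(\score+\param v)}{\lmir(\score)}$ is nondecreasing, since its derivative is $\param\braket*{v}{D\lmir(\score+\param v)[v]}\ge0$ with $D\lmir=(D^2\lreg)^{-1}$ positive definite; you then apply \Cref{lem:Jstab} inside the integral together with Cauchy--Schwarz.
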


\begin{proof}
By integration,
\begin{equation}
 \choice(\score+v)-\choice(\score)-\Jfun(\score)v
 =\int_0^1[\Jfun(\score+\param v)-\Jfun(\score)]v\,\dd\param.
\end{equation}
Also
\begin{equation}
 \frac{\dd}{\dd\param}\bregofX{\lreg}{\lmir(\score+\param v)}{\lmir(\score)}
 =\braket*{\param v}{D\lmir(\score+\param v)[v]}\ge0,
\end{equation}
because $DZ=(D^2\lreg)^{-1}$ is positive definite by \eqref{eq:lhess}. Hence the Bregman divergence along the segment is at most its value at $\param=1$. Apply \eqref{eq:Jstab} inside the integral and then Cauchy--Schwarz to obtain \eqref{eq:taylor}.
\end{proof}

\section{Higher-order predictor and one-step expansions}\label{app:pred}

This section proves the predictor bounds used in \cref{app:energy}: first the discounted-difference bounds, then one payoff difference and one mirror-map difference.

\subsection{Discounted finite differences}\label{app:diff}

For the analysis, extend $\cpay_{i,t}=\pred_{i,t}=\perr_{i,t}=0$ to every $t\le0$ and recall $\disc=\nPlayers/(\nPlayers+1)$.  For a sequence $w=(w_t)_{t\in\Z}$, define
\begin{equation}
 (\diff w)_t=w_t-w_{t-1},
 \qquad
 (\dsum w)_t=\sum_{k=0}^{\infty}\disc^k w_{t-k}.
\end{equation}
The sum defining $\dsum w$ converges absolutely whenever $w$ is bounded. For the zero-prehistory sequences (that is, sequences whose nonpositive index terms vanish) used below, it is actually finite at every fixed time, so it is well defined without any boundedness assumption. Then, directly from the geometric sum,
\begin{equation}\label{eq:ginv}
 (\dsum w)_t-\disc(\dsum w)_{t-1}=w_t.
\end{equation}
Conversely, if $w_t=0$ for $t\le0$, the sum below is finite at every fixed $t$ and telescopes:
\begin{equation}
 \sum_{k=0}^{\infty}\disc^k
 (w_{t-k}-\disc w_{t-k-1})=w_t.
\end{equation}
Thus, on the space of sequences with zero prehistory, $\dsum$ and the map $w\mapsto(w_t-\disc w_{t-1})_t$ are inverse. The same conclusion holds for bounded sequences by absolute convergence. Moreover,
\begin{equation}
 \dsum\diff=\diff\dsum,
\end{equation}
because both sides at time $t$ are equal to $\sum_{k\ge0}\disc^k(w_{t-k}-w_{t-k-1})$. We now begin with the central identity that justifies the predictor's definition.

\begin{lemma}[Predictor identity]
For every player $i$,
\begin{equation}\label{eq:derr}
 \perr_i=\dsum^{\nPlayers+1}\diff^{\nPlayers+1}\cpay_i.
\end{equation}
\end{lemma}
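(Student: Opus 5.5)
The plan is to rewrite the predictor recursion \eqref{eq:pred} as an identity between polynomials in the backward shift, and then invert the discounting factor using \eqref{eq:ginv}.

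\emph{Setting up the shift calculus.} Let $S$ be the backward shift, $(Sw)_t=w_{t-1}$, acting on sequences with zero prehistory. Then $\diff=I-S$, and by \eqref{eq:ginv} together with the inversion statement just proved, $\dsum=(I-\disc S)^{-1}$ on this space. All these operators are polynomials or power series in $S$, so they commute. We use the extension $\cpay_{i,t}=\pred_{i,t}=0$ for all $t\le0$. I will first check that \eqref{eq:pred} then holds for every $t\in\Z$, not only for $t\ge1$. For $t\le0$ the left side is $0$, and the right side involves only indices $t-k<0$, where everything vanishes.

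\emph{Rewriting the predictor.} By the binomial theorem, for any scalar $a$,
\begin{equation}
 \sum_{k=1}^{\nPlayers+1}(-1)^{k+1}\binom{\nPlayers+1}{k}a^kS^k=I-(I-aS)^{\nPlayers+1}.
\end{equation}
Split the coefficient $(1-\disc^k)$ into $1$ and $-\disc^k$. Then \eqref{eq:pred} reads, as an identity of sequences,
\begin{equation}
 \pred_i=\bigl[I-(I-S)^{\nPlayers+1}\bigr]\cpay_i-\bigl[I-(I-\disc S)^{\nPlayers+1}\bigr]\cpay_i+\bigl[I-(I-\disc S)^{\nPlayers+1}\bigr]\pred_i.
\end{equation}
Cancelling the identity terms gives
\begin{equation}
 (I-\disc S)^{\nPlayers+1}\pred_i=(I-\disc S)^{\nPlayers+1}\cpay_i-(I-S)^{\nPlayers+1}\cpay_i.
\end{equation}
Using $\perr_i=\cpay_i-\pred_i$ from \eqref{eq:perr}, this becomes
\begin{equation}
 (I-\disc S)^{\nPlayers+1}\perr_i=\diff^{\nPlayers+1}\cpay_i.
\end{equation}

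\emph{Inverting the discount factor.} Both sides of the last identity have zero prehistory. I would apply $\dsum$ a total of $\nPlayers+1$ times. Each application undoes one factor $(I-\disc S)$, by the inversion statement on zero-prehistory sequences, and this yields $\perr_i=\dsum^{\nPlayers+1}\diff^{\nPlayers+1}\cpay_i$, which is \eqref{eq:derr}.

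The only delicate point is bookkeeping rather than a genuine obstacle. One must verify that every intermediate sequence $(I-\disc S)^j\perr_i$ has zero prehistory, so that the inversion applies at each stage; this holds because $S$ preserves zero prehistory. One must also confirm that the recursion is valid at every $t$, including the initial window $-\nPlayers\le t\le0$, which holds since all values there are set to zero.
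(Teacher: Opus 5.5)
Your proof is correct and follows essentially the same route as the paper. The paper also rewrites the predictor recursion, which holds for all $t\in\Z$ under the zero-prehistory convention, as $(I-\disc S)^{\nPlayers+1}\perr_i=\diff^{\nPlayers+1}\cpay_i$, writing it out coefficient by coefficient where you use the binomial theorem in the backward shift, and then inverts the factor $(I-\disc S)$ by applying $\dsum$ a total of $\nPlayers+1$ times on zero-prehistory sequences.
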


\begin{proof}
For $t\ge1$, \cref{alg:hood} and \eqref{eq:perr} give
\begin{align}
 \perr_{i,t}
 &=\sum_{k=0}^{\nPlayers+1}(-1)^k\binom{\nPlayers+1}{k}\cpay_{i,t-k}
 -\sum_{k=1}^{\nPlayers+1}(-1)^k\binom{\nPlayers+1}{k}\disc^k \perr_{i,t-k}.
\end{align}
For $t\le0$ both sides are zero.  Equivalently,
\begin{equation}\label{eq:prec}
 \sum_{k=0}^{\nPlayers+1}(-1)^k\binom{\nPlayers+1}{k}\disc^k \perr_{i,t-k}
 =\diff^{\nPlayers+1}\cpay_{i,t},
 \qquad \forall t\in\Z.
\end{equation}
The left-hand side of \eqref{eq:prec} is obtained by applying $w\mapsto(w_t-\disc w_{t-1})_t$ successively $\nPlayers+1$ times to $\perr_i$.
Applying its inverse $\dsum$ successively $\nPlayers+1$ times to \eqref{eq:prec} gives \eqref{eq:derr}.
\end{proof}

For $0\le p\le \nPlayers+1$, define
\begin{equation}
 \dop_p
 =\dsum^{\nPlayers+1}\diff^{\nPlayers+1-p}
 =\dsum^p(\dsum\diff)^{\nPlayers+1-p},
\end{equation}
then, for $0\le p\le \nPlayers$
\begin{equation}\label{eq:Krec}
 \dop_p=\dop_{p+1}\diff.
\end{equation}
The next lemma collects important sequence bounds used in what will follow. Fix $r\in\{1,\infty\}$ and, for a sequence $w_t\in\R^\nPures$, write
\begin{equation}
 \norm*{w}_{\ell_2}
 =\left(\sum_{t\in\Z}\norm*{w_t}_r^2\right)^{1/2}.
\end{equation}
In particular, for scalar sequences, this is the usual $\ell_2$ norm.

\begin{lemma}[Higher-order differences sequence bounds]\label{lem:seq}
For either $r=1$ or $r=\infty$, every $1\le p\le \nPlayers+1$, and every sequence $w_t\in\R^\nPures$ with $\norm w_{\ell_2}<\infty$,
\begin{equation}\label{eq:Kl2}
 \norm*{\dop_pw}_{\ell_2}
 \le 4(\nPlayers+1)^{p+1}\norm w_{\ell_2}.
\end{equation}
\end{lemma}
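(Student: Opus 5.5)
The plan is to factor $\dop_p=\dsum^p(\dsum\diff)^{\nPlayers+1-p}$ and bound the two factors separately. For $\dsum^p$ I will use a crude bound; for $(\dsum\diff)^q$ with $q=\nPlayers+1-p$ I will use a uniform bound, as announced in \eqref{eq:Dpow-overview} and \eqref{eq:Dpow-vector-overview}. All operators act as convolutions with causal kernels, and they act coordinatewise on vector sequences.

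\textbf{Step 1 (discounted sum).} The kernel of $\dsum$ is $\disc^k\ge0$ with total mass $1/(1-\disc)=\nPlayers+1$. By Young's inequality ($\ell_1$ kernel times $\ell_2$ sequence), and using the triangle inequality for $\norm{\cdot}_r$ pointwise in $t$, I get $\norm{\dsum w}_{\ell_2}\le(\nPlayers+1)\norm w_{\ell_2}$ for either $r$. Iterating gives $\norm{\dsum^p w}_{\ell_2}\le(\nPlayers+1)^p\norm w_{\ell_2}$.

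\textbf{Step 2 (scalar bound on $(\dsum\diff)^q$).} I will work in the Fourier domain. The symbol of $\dsum\diff$ is $m(\theta)=(1-e^{-i\theta})/(1-\disc e^{-i\theta})$. For $q\le\nPlayers+1$ the goal is $\sup_\theta|m|^q\le 2$. Write $|m|^2=(2-2\cos\theta)/(1+\disc^2-2\disc\cos\theta)$. This is increasing in $1-\cos\theta$ and is maximized at $\theta=\pi$, where $|m|=2/(1+\disc)=2(\nPlayers+1)/(2\nPlayers+1)=1+1/(2\nPlayers+1)$. Hence $|m|^{\nPlayers+1}\le(1+\tfrac{1}{2\nPlayers+1})^{\nPlayers+1}\le e^{(\nPlayers+1)/(2\nPlayers+1)}\le e^{2/3}<2$. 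Plancherel then gives the scalar bound $\norm{(\dsum\diff)^q w}_{\ell_2}\le2\norm w_{\ell_2}$.

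\textbf{Step 3 (vector sequences, the main obstacle).} For $r=\infty$ or $r=1$ the Fourier argument does not pass through the pointwise norm directly. For $r=1$ I plan to apply the scalar bound coordinatewise. By Minkowski in $\ell_2(\Z)$, $\norm{\sum_\pure|u_\pure|}_{\ell_2}\le\sum_\pure\norm{u_\pure}_{\ell_2}$, but this loses a factor depending on $\nPures$, which is unacceptable.

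Instead I will pass to kernels. Write $(\dsum\diff)^q$ as convolution with a kernel $c^{(q)}$ and bound its $\ell_1$ norm. The kernel is $\delta_0$ plus a geometric tail: $\dsum\diff=\eye-(1-\disc)\dsum\,\text{shift}$, so the $\ell_1$ norm of the kernel of $\dsum\diff$ is at most $1+(1-\disc)(\nPlayers+1)=2$. That gives only $2^q$, which is again exponential.

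A better decomposition is $\dsum\diff=\disc^{-1}\bigl(\eye-(1-\disc)\dsum\bigr)$ up to shift conventions. The key observation is that $(1-\disc)\dsum$ is an averaging operator, a probability kernel. So the kernel of $(\dsum\diff)^q$ is a signed combination whose $\ell_1$ norm can be controlled by a Poisson-type/binomial estimate. I will aim to show $\norm{c^{(q)}}_{\ell_1}\le 4(\nPlayers+1)$; for example, bound $\sum_t|c^{(q)}_t|$ by Cauchy--Schwarz with weights $(1+t/(\nPlayers+1))$, using the scalar $\ell_2$ bounds of $c^{(q)}$ and of $t\,c^{(q)}_t$ obtained from Plancherel on $m^q$ and its derivative. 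This weighted Cauchy--Schwarz step is where I expect the real work, and it should yield the factor $4(\nPlayers+1)$ of \eqref{eq:Dpow-vector-overview}.

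Given $\norm{c^{(q)}}_{\ell_1}\le4(\nPlayers+1)$, Young's inequality applied with the pointwise norm $\norm\cdot_r$ (via the triangle inequality) gives the vector bound for both $r=1$ and $r=\infty$. Combining with Step 1 yields $\norm{\dop_pw}_{\ell_2}\le(\nPlayers+1)^p\cdot4(\nPlayers+1)\norm w_{\ell_2}$, which is \eqref{eq:Kl2}.
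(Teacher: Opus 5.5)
Your plan is essentially the paper's proof: the same factorization $\dop_p=\dsum^p(\dsum\diff)^{\nPlayers+1-p}$, the same scalar constant $\beta=2/(1+\disc)$ with $\beta^{q}<2$, and the same passage to vector sequences through an $\ell_1$ bound on the kernel of $(\dsum\diff)^q$. That bound comes from weighted Cauchy--Schwarz applied to $\ell_2$ bounds on $c^{(q)}$ and on $(t\,c^{(q)}_t)_t$; you do these two $\ell_2$ estimates on the Fourier side, where the paper uses the energy identity \eqref{eq:dquad} and an induction on $s_q$. The step you left open does close as you intend: take $b_k=-(1-\disc)\disc^{k-1}$ ($k\ge1$), the kernel of $\dsum\diff$; then $(m^q)'=q\,m^{q-1}m'$, $|m|^{q-1}<2$ and $\norm{m'}_{L^2}^2=\sum_k k^2b_k^2=\frac{1+\disc^2}{(1-\disc)(1+\disc)^3}\le\frac{\nPlayers+1}{2}$ give $\norm{(t\,c^{(q)}_t)_t}_{\ell_2}\le\sqrt2(\nPlayers+1)^{3/2}$, and your weights $1+t/(\nPlayers+1)$ then yield $\sum_t|c^{(q)}_t|\le\bigl[(\nPlayers+2)\bigl(8+4(\nPlayers+1)\bigr)\bigr]^{1/2}=2\sqrt{(\nPlayers+2)(\nPlayers+3)}\le4(\nPlayers+1)$.
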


\begin{proof}
We first establish the required bounds for scalar sequences. For every finitely supported scalar sequence $a$,
\begin{align}
 &4\sum_t(a_t-\disc a_{t-1})^2
 -(1+\disc)^2\norm*{\diff a}_{\ell_2}^2\\
 &\qquad
 =2(1-\disc)^2\sum_t(a_t^2+a_ta_{t-1})
 \ge0,                                                     \label{eq:dquad}
\end{align}
where the last inequality follows by summing $2a_ta_{t-1}\ge-a_t^2-a_{t-1}^2$ over $t$. Moreover, Minkowski's inequality gives
\begin{equation}
 \norm*{\dsum w}_{\ell_2}
 \le\sum_{k\ge0}\disc^k\norm*{(w_{t-k})_t}_{\ell_2}
 =(\nPlayers+1)\norm w_{\ell_2},
\end{equation}
so \eqref{eq:dquad} extends by density to every scalar $\ell_2$ sequence.
Since
\begin{equation}
 (\dsum w)_t-\disc(\dsum w)_{t-1}=w_t,
 \qquad
 \diff\dsum w=\dsum\diff w,
\end{equation}
applying \eqref{eq:dquad} to $\dsum w$ yields
\begin{equation}\label{eq:Gdl2}
 \norm*{(\dsum\diff)w}_{\ell_2}
 \le\frac{2}{1+\disc}\norm w_{\ell_2}.
\end{equation}
Set now
\begin{equation}
 \beta=\frac{2}{1+\disc}
 =\left(1-\frac{1}{2(\nPlayers+1)}\right)^{-1}.
\end{equation}
then, iterating \eqref{eq:Gdl2}, for $0\le q\le\nPlayers+1$ we obtain
\begin{equation}
 \norm*{(\dsum\diff)^qw}_{\ell_2}
 \le\beta^q\norm w_{\ell_2}.
\end{equation}
Furthermore,
\begin{equation}
 \beta^q
 \le
 \left(1-\frac{1}{2(\nPlayers+1)}\right)^{-(\nPlayers+1)}
 \le
 \exp\left(\frac{\nPlayers+1}{2\nPlayers+1}\right)
 \le e^{2/3}<2,
\end{equation}
where we used $-\log(1-u)\le u/(1-u)$. Consequently, for $ 0\le q\le\nPlayers+1$,
\begin{equation}\label{eq:Dpow}
 \norm*{(\dsum\diff)^qw}_{\ell_2}
 \le2\norm w_{\ell_2}.
\end{equation}
We next bound the scalar coefficients of $(\dsum\diff)^q$. Directly from the definition of $\dsum$,
\begin{equation}
 ((\dsum\diff)v)_t
 =\sum_{k\ge0}b_kv_{t-k},
\end{equation}
where
\begin{equation}
 b_0=1,
 \qquad
 b_k=-(1-\disc)\disc^{k-1},
 \quad k\ge1.
\end{equation}
Extend $b_k$ by $0$ for $k<0$. Define $a^{(0)}_0=1$ and $a^{(0)}_k=0$ for $k\ne0$, and inductively set
\begin{equation}\label{eq:kernel-recursion}
 a_k^{(q+1)}
 =\sum_{\ell=0}^k b_\ell a_{k-\ell}^{(q)},
 \qquad k\ge0,
\end{equation}
where we also extend $a_k^{(q)}$ by $0$ for $k<0$. Since $\sum_{k\ge0}|b_k|=2$, induction shows that $a^{(q)}$ is absolutely summable and that
\begin{equation}
 ((\dsum\diff)^qv)_t
 =\sum_{k\ge0}a_k^{(q)}v_{t-k}.
\end{equation}
Applying the preceding scalar $\ell_2$ estimate to a sequence supported at time $0$ gives
\begin{equation}\label{eq:kernel-l2}
 \left(\sum_{k\ge0}|a_k^{(q)}|^2\right)^{1/2}
 \le\beta^q.
\end{equation}
Set now
\begin{equation}
 s_q
 =\left(\sum_{k\ge0}k^2|a_k^{(q)}|^2\right)^{1/2}.
\end{equation}
We shall bound $s_q$ by induction, where the same argument also proves that it is finite. Indeed, multiplying \eqref{eq:kernel-recursion} by $k$ and writing $k=(k-\ell)+\ell$ gives
\begin{equation}
 k a_k^{(q+1)}
 =
 \sum_{\ell=0}^k b_\ell (k-\ell)a_{k-\ell}^{(q)}
 +
 \sum_{\ell=0}^k \ell b_\ell a_{k-\ell}^{(q)},
\end{equation}
and, by \eqref{eq:Gdl2} for the first sum, and by Minkowski's inequality for the second,
\begin{align}
 s_{q+1}
 &\le
 \beta s_q
 +\left(\sum_{\ell\ge0}\ell|b_\ell|\right)
  \left(\sum_{k\ge0}|a_k^{(q)}|^2\right)^{1/2}\\
 &\le
 \beta s_q+(\nPlayers+1)\beta^q,
\end{align}
because
\begin{equation}
 \sum_{\ell\ge0}\ell|b_\ell|
 =(1-\disc)\sum_{\ell\ge1}\ell\disc^{\ell-1}
 =\frac{1}{1-\disc}
 =\nPlayers+1,
\end{equation}
hence, since $s_0=0$, induction yields, for $1\le q\le\nPlayers+1$,
\begin{equation}\label{eq:ql2}
 s_q
 \le q(\nPlayers+1)\beta^{q-1}
 <2q(\nPlayers+1).
\end{equation}
Combining \eqref{eq:kernel-l2}, \eqref{eq:ql2}, and $\beta^q<2$, we obtain
\begin{equation}
 \sum_{k\ge0}|a_k^{(q)}|^2<4,
 \qquad
 \sum_{k\ge0}k^2|a_k^{(q)}|^2
 <4q^2(\nPlayers+1)^2.
\end{equation}
For $1\le q\le\nPlayers+1$, Cauchy--Schwarz gives
\begin{align}
 \left(\sum_{k\ge0}|a_k^{(q)}|\right)^2
 &\le
 \left(
 \sum_{k\ge0}
 \frac{1}{1+\left(k/[q(\nPlayers+1)]\right)^2}
 \right)\\
 &\qquad\times
 \left(
 \sum_{k\ge0}|a_k^{(q)}|^2
 \left(1+\frac{k^2}{q^2(\nPlayers+1)^2}\right)
 \right).
\end{align}
The first factor satisfies
\begin{align}
 \sum_{k\ge0}
 \frac{1}{1+\left(k/[q(\nPlayers+1)]\right)^2}
 &\le
 1+\int_0^\infty
 \frac{dx}{1+\left(x/[q(\nPlayers+1)]\right)^2}\\
 &=1+\frac{\pi q(\nPlayers+1)}2
 \le2(\nPlayers+1)^2,
\end{align}
where we used $\pi<7/2$, $q\le\nPlayers+1$, and $\nPlayers+1\ge2$, and the second factor is smaller than $8$, hence for $1\le q\le\nPlayers+1$,
\begin{equation}\label{eq:Dql1}
 \sum_{k\ge0}|a_k^{(q)}|
 <4(\nPlayers+1).
\end{equation}
For $q=0$, the same bound follows immediately from the definition of $a^{(0)}$. Let now $w_t\in\R^\nPures$. Minkowski's inequality and \eqref{eq:Dql1} give
\begin{align}
 \norm*{(\dsum\diff)^qw}_{\ell_2}
 &\le
 \sum_{k\ge0}|a_k^{(q)}|
 \norm*{(w_{t-k})_t}_{\ell_2}\\
 &\le
 4(\nPlayers+1)\norm w_{\ell_2}.
\end{align}
Repeated application of the corresponding bound for $\dsum$ also gives
\begin{equation}
 \norm*{\dsum^pw}_{\ell_2}
 \le(\nPlayers+1)^p\norm w_{\ell_2}.
\end{equation}
Since
\begin{equation}
 \dop_p
 =\dsum^p(\dsum\diff)^{\nPlayers+1-p},
\end{equation}
we conclude that
\begin{equation}
 \norm*{\dop_pw}_{\ell_2}
 \le4(\nPlayers+1)^{p+1}\norm w_{\ell_2},
\end{equation}
which proves \eqref{eq:Kl2}. If $w$ is supported at time $t_0$ with value $v$, then
\begin{equation}
 ((\dsum\diff)^{\nPlayers+1-p}w)_t
 =a_{t-t_0}^{(\nPlayers+1-p)}v.
\end{equation}
Therefore, \eqref{eq:kernel-l2} and $\beta^{\nPlayers+1-p}<2$ imply
\begin{equation}
 \norm*{(\dsum\diff)^{\nPlayers+1-p}w}_{\ell_2}
 \le2\norm v_r.
\end{equation}
Combining this with the bound for $\dsum^p$ gives
\begin{equation}\label{eq:Kpt}
 \norm*{\dop_pw}_{\ell_2}
 \le2(\nPlayers+1)^p\norm v_r.
\end{equation}
Finally, \eqref{eq:Dql1} with $q=\nPlayers+1$ gives
\begin{equation}\label{eq:dinf}
 \sup_t\norm*{((\dsum\diff)^{\nPlayers+1}w)_t}_r
 \le4(\nPlayers+1)\sup_t\norm*{w_t}_r.
\end{equation}
\end{proof}

Applying \eqref{eq:derr} and \eqref{eq:dinf} to $\cpay_i$, and using \eqref{eq:gprop}, gives
\begin{equation}\label{eq:eunif}
 \norm*{\perr_{i,t}}_\infty\le8(\nPlayers+1).
\end{equation}
Since $\pred_{i,t}=\cpay_{i,t}-\perr_{i,t}$ and $\nPlayers+1\ge2$,
\begin{equation}\label{eq:punif}
 \norm*{\pred_{i,t}}_\infty\le9(\nPlayers+1).
\end{equation}

\subsection{Payoff differences}\label{app:paydiff}

We call any strategy vector $\strat_{k,t}$ or $\strat_{k,t-1}$ appearing as an argument of a multilinear payoff map a \emph{strategy term}. Now, for $t\le0$, set $\strat_{i,t}=\choice(0)$ and $\strat_t=(\strat_{1,t},\ldots,\strat_{\nPlayers,t})$ and for every $t\in\Z$, define
\begin{equation}\label{eq:gext}
 \gext_{i,t}=\pay_i(\,\cdot\,,\strat_{-i,t})-\pay_i(\strat_t)\one.
\end{equation}
Since payoffs lie in $[-1,1]$,
\begin{equation}\label{eq:gextbd}
 \norm*{\gext_{i,t}}_\infty\le2,
\end{equation}
and, for $t\ge1$, $\gext_{i,t}=\cpay_{i,t}$, whereas $\cpay_{i,t}=0$ for $t\le0$.
Consequently, for $t \in \Z$,
\begin{equation}\label{eq:gbdry}
 \diff \cpay_{i,t}=\mathbf 1_{\{t=1\}}\gext_{i,0}+\diff\gext_{i,t}.
\end{equation}
We shall also repeatedly use the following elementary product rule.  If $F$ is $q$-linear, then
\begin{align}
 &F(z_{1,t},\ldots,z_{q,t})-F(z_{1,t-1},\ldots,z_{q,t-1})\\
 &\quad=\sum_{k=1}^q
 F(z_{1,t-1},\ldots,z_{k-1,t-1},
   \diff z_{k,t},z_{k+1,t},\ldots,z_{q,t}).
 \label{eq:multidiff}
\end{align}

\begin{lemma}[One payoff difference]\label{lem:paydiff}
For every player $i$ and every $t\in\Z$, there are linear maps $C_{ij,t}^v$ ($j\ne i$) and $C_{ij,t}^u$ ($j\in\players$) such that
\begin{equation}\label{eq:paydiff}
 \diff\gext_{i,t}
 =\sum_{j\ne i}C_{ij,t}^{\payv}\diff \strat_{j,t}
 +\sum_{j=1}^{\nPlayers}C_{ij,t}^{\pay}\diff \strat_{j,t},
\end{equation}
with
\begin{equation}\label{eq:Ccontr}
 \norm*{C_{ij,t}^{\payv}w}_\infty\le\norm w_1,
 \qquad
 \norm*{C_{ij,t}^{\pay}w}_\infty\le\norm w_1.
\end{equation}
Each coefficient contains at most $\nPlayers-1$ strategy terms, with each player label appearing at most once; consequently, each fixed label occurs at most twice in \eqref{eq:paydiff}.
\end{lemma}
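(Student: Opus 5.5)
The plan is to split $\gext_{i,t}$ into its two parts, $\payv$-part and $\pay$-part, and apply the multilinear product rule \eqref{eq:multidiff} to each. By definition \eqref{eq:gext}, $\gext_{i,t}=\payv_i(\strat_{-i,t})-\pay_i(\strat_t)\one$. Here $\payv_i(\strat_{-i})\in\R^\nPures$ is the vector with coordinates $\pay_i(\pure,\strat_{-i})$. The map $\strat_{-i}\mapsto\payv_i(\strat_{-i})$ is $(\nPlayers-1)$-linear in $(\strat_j)_{j\ne i}$, and $\strat\mapsto\pay_i(\strat)\one$ is $\nPlayers$-linear in $(\strat_1,\dots,\strat_\nPlayers)$. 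This holds for all $t\in\Z$, because for $t\le0$ we simply have $\strat_{j,t}=\choice(0)\in\istrats$.

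First, for the $\payv$-part, order the opponents $j\ne i$ in increasing label and apply \eqref{eq:multidiff}. The $k$-th summand is the multilinear map evaluated with earlier opponents at time $t-1$, later opponents at time $t$, and $\diff\strat_{j,t}$ in slot $j$. We define $C^{\payv}_{ij,t}w=\payv_i(\ldots,w,\ldots)$, with $w$ placed in slot $j$ and the other $\nPlayers-2$ opponent slots frozen at $\strat_{k,t-1}$ or $\strat_{k,t}$. Likewise, for the $\pay$-part we set $C^{\pay}_{ij,t}w=-\pay_i(\ldots,w,\ldots)\one$, with $w$ in slot $j$ and the remaining $\nPlayers-1$ slots frozen. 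Each coefficient then contains $\nPlayers-2$ or $\nPlayers-1$ strategy terms, one per remaining label, and never the label $j$ itself. So each label appears at most once per coefficient.

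For the norm bound \eqref{eq:Ccontr}, note that the frozen arguments are probability vectors. Hence for any $w$, each coordinate of $C^{\payv}_{ij,t}w$ equals $\sum_{\pure_j}w_{\pure_j}\,\ex[\pay_i(\pure,\pure_j,\cdot)]$, with the expectation taken over a product distribution of the remaining players. Its absolute value is at most $\sum_{\pure_j}|w_{\pure_j}|\cdot1=\norm w_1$, using $|\pay_i|\le1$. The same computation bounds $|\pay_i(\ldots,w,\ldots)|\le\norm w_1$, so $\norm*{C^{\pay}_{ij,t}w}_\infty\le\norm w_1$.

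Finally, for the counting claim, fix a label $k$. Its variation $\diff\strat_{k,t}$ appears at most once in the $\payv$-sum (only if $k\ne i$) and at most once in the $\pay$-sum. Its strategy terms $\strat_{k,t}$ or $\strat_{k,t-1}$ appear at most once within each coefficient. So in any single summand of \eqref{eq:paydiff}, label $k$ appears at most twice.

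The only step needing care is the interpretation of "each fixed label occurs at most twice": it should mean occurrence as the revealed variation $\diff\strat_{k,t}$ across \eqref{eq:paydiff}, which is what the counting in \Cref{lem:step-overview} uses. I would state it in that form, together with the once-per-coefficient property. Everything else is routine multilinearity.
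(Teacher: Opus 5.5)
Your proof is correct and follows the paper's argument essentially step for step. Like the paper, you apply the product rule \eqref{eq:multidiff} separately to $\strat_{-i}\mapsto\pay_i(\cdot,\strat_{-i})$ and to $\strat\mapsto-\pay_i(\strat)\one$, define each coefficient by freezing the non-$j$ slots at probability vectors, get \eqref{eq:Ccontr} from $|\pay_i|\le1$, and count that a fixed label is revealed as a variation at most once in each of the two sums. One small wording point: your sentence ``in any single summand, label $k$ appears at most twice'' is loose, though harmless, since $C_{ij,t}$ never carries label $j$ and so a label actually appears at most once per summand.
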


\begin{proof}
Apply \eqref{eq:multidiff} to $\strat_{-i}\mapsto \pay_i(\,\cdot\,,\strat_{-i})$ and $\strat\mapsto \pay_i(\strat)$, and absorb the minus sign from \eqref{eq:gext} into $C_{ij,t}^u$. More explicitly, $C_{ij,t}^v$ is obtained from $\strat_{-i}\mapsto \pay_i(\,\cdot\,,\strat_{-i})$ by leaving player $j$'s argument free and fixing every other argument at a strategy term; $C_{ij,t}^u$ is obtained analogously from $\strat\mapsto-\pay_i(\strat)\one$, so that $C_{ij,t}^v w$ and $C_{ij,t}^u w$ are obtained by inserting $w$ in that free argument. This gives \eqref{eq:paydiff} and shows that each coefficient contains at most $\nPlayers-1$ strategy terms, with each player label appearing at most once. A fixed $j$ can occur once in each of the two sums in \eqref{eq:paydiff}, hence at most twice overall. Finally, if $|a_\purealt|\le1$, then
\begin{equation}
 \left|\sum_\purealt a_\purealt\prod_{k=1}^q z_{k,\purealt_k}\right|
 \le\prod_{k=1}^q\norm*{z_k}_1.
\end{equation}
Taking one $z_k=w$ and the others in $\istrats$ proves \eqref{eq:Ccontr}.
\end{proof}

\subsection{Differences control}\label{app:mdiff}

By \eqref{eq:paydiff}, it remains to control $\diff \strat_{i,t}$. For $t\ge1$, retain the notation $\score_{i,t-1/2},\zlift_{i,t-1/2},\zlift_{i,t},\PV_{i,t},\NV_{i,t}$ introduced in \cref{sec:proof}.
For $t\le0$, set
\begin{equation}\label{eq:pre}
 \score_{i,t}=\pred_{i,t}=\perr_{i,t}=\score_{i,t-1/2}=0,
 \quad
 \zlift_{i,t}=\zlift_{i,t-1/2}=\lmir(0),
 \quad
 \strat_{i,t}=\choice(0),
 \quad
 \PV_{i,t}=\NV_{i,t}=0.
\end{equation}
Set
\begin{equation}\label{eq:yid}
 \score_{i,t-1/2}=\score_{i,t-1}+\pred_{i,t}=\score_{i,t}-\perr_{i,t}.
\end{equation}
Then
\begin{equation}\label{eq:ydiff}
 \diff \score_{i,t-1/2}=\cpay_{i,t}-\diff \perr_{i,t}.
\end{equation}
For $t\ge1$, set
\begin{equation}
 \mass_{i,t-1}=\braket*{\one}{\zlift_{i,t-1}},
 \qquad
 \jac_{i,t}=\learn\Jfun(\learn \score_{i,t-1}),
\end{equation}
and set $\jac_{i,t}=\learn\Jfun(0)$ for $t\le0$. The required bounds on $\diff \strat_{i,t}$, $\remd_{i,t}$, and $\diff \jac_{i,t}$ are collected next.

\begin{lemma}[Expansion terms bounds]\label{lem:mir1}
Assume \eqref{eq:eta}. Then, for every player $i$ and every $t\ge1$,
\begin{align}
 \opnorm{\jac_{i,t}}{\infty}{1}
 &\le\learn,                                           \label{eq:Jbd}\\
 \opnorm{\jac_{i,t}}{\infty}{1}
 &\le2\learn\sqrt{\mass_{i,t-1}}.                    \label{eq:Jmass}
\end{align}
Moreover, if
\begin{equation}\label{eq:mir1}
 \diff \strat_{i,t}=\jac_{i,t}\diff \score_{i,t-1/2}+\remd_{i,t},
\end{equation}
then, for every $\horizon\ge1$,
\begin{align}
 \sum_{t=1}^\horizon\norm*{\remd_{i,t}}_1^2
 &\le300000(\nPlayers+1)^2\learn^3
 \sum_{t=1}^\horizon(\PV_{i,t}+\NV_{i,t}),                            \label{eq:reng}\\
 \sum_{t=1}^\horizon\opnorm{\jac_{i,t}-\jac_{i,t-1}}{\infty}{1}^2
 &\le4100\learn^3
 \sum_{t=1}^\horizon(\PV_{i,t}+\NV_{i,t}),                            \label{eq:Jeng}\\
 \norm*{\diff \strat_{i,t}}_1^2
 &\le\frac{5\learn}{\mass_{i,t-1}}
 (\PV_{i,t}+\NV_{i,t-1}).                                     \label{eq:strat1}
\end{align}
Finally, if $t'\in\Z$ and $|t-t'|\le \nPlayers+1$, then
\begin{equation}\label{eq:mir2}
 \opnorm{\jac_{i,t'}}{\infty}{1}^2
 \norm*{\jac_{i,t}\diff \score_{i,t-1/2}}_1^2
 \le90\learn^3(\PV_{i,t}+\NV_{i,t-1}).
\end{equation}
\end{lemma}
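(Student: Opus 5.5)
The two Jacobian bounds \eqref{eq:Jbd}--\eqref{eq:Jmass} come straight from \Cref{lem:mdiff}. Since $\jac_{i,t}=\learn\Jfun(\learn\score_{i,t-1})$, \eqref{eq:Qunif} and \eqref{eq:Qmass} give $\opnorm{\jac_{i,t}}{\infty}{1}\le\learn$ and $\opnorm{\jac_{i,t}}{\infty}{1}\le 2\learn\sqrt{\mass_{i,t-1}}$, where $\mass_{i,t-1}$ is the mass of $\lmir(\learn\score_{i,t-1})=\zlift_{i,t-1}$.

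For the remaining bounds I would work with the increments of the score. The optimistic score increment is $\learn\pred_{i,t}$ and the backward increment is $-\learn\perr_{i,t-1}$. By \eqref{eq:eunif}--\eqref{eq:punif} and \eqref{eq:eta}, both have $\ell_\infty$ norm at most $9(\nPlayers+1)\learn\le 1/10$, so every local lemma applies with $M\le1$. Write $\strat_{i,t}-\choice(\learn\score_{i,t-1})=\jac_{i,t}\pred_{i,t}+\rho^+$ and $\strat_{i,t-1}-\choice(\learn\score_{i,t-1})=-\jac_{i,t}\perr_{i,t-1}+\rho^-$, so that $\remd_{i,t}=\rho^+-\rho^-$.

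\Cref{lem:taylor} then gives
\begin{equation}
\norm{\rho^+}_1^2\le2000e^{1/40}(9(\nPlayers+1)\learn)^2\,\learn\PV_{i,t}.
\end{equation}
Here the factor $\learn$ appears because $\PV$ is defined with $\lreg/\learn$. For $\rho^-$, the Taylor bound is taken around $\learn\score_{i,t-1}$ in the direction $-\learn\perr_{i,t-1}$. The relevant Bregman divergence is $\bregofX{\lreg}{\lmir(\learn\score_{i,t-3/2})}{\lmir(\learn\score_{i,t-1})}$, which equals $\learn\NV_{i,t-1}$. This index is $t-1$, not $t$, but the sum over $t\le\horizon$ is still dominated by $\sum_t(\PV_{i,t}+\NV_{i,t})$, since $\NV_{i,0}=0$. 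Summing over $t$ then yields \eqref{eq:reng} with constant about $2\cdot2000\cdot81\cdot1.03<3\cdot10^5$.

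For \eqref{eq:Jeng}, I would write $\jac_{i,t}-\jac_{i,t-1}=\learn[\Jfun(\learn\score_{i,t-1})-\Jfun(\learn\score_{i,t-2})]$ and pass through the intermediate point $\learn\score_{i,t-3/2}$. The first leg is controlled by \Cref{lem:Jstab} with the divergence $\learn\NV_{i,t-1}$. For the second leg, from $\score_{i,t-2}$ to $\score_{i,t-3/2}$, the lemma gives a divergence with the arguments in reversed order. I would bound this reversed divergence by $\learn\PV_{i,t-1}$ up to a constant by local symmetry of the Bregman divergence, using the Hessian comparability implied by \eqref{eq:masscmp}. This is a mild technical point, and the constant $4100\approx2\cdot2000\cdot e^{1/40}$ suggests the authors orient it so that no symmetrization is needed. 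Then $(a+b)^2\le2a^2+2b^2$ gives the claim.

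For \eqref{eq:strat1}, the triangle inequality through $\strat^{\circ}=\choice(\learn\score_{i,t-1})$ together with \eqref{eq:l1} gives
\begin{equation}
\norm{\diff\strat_{i,t}}_1^2\le 2\cdot\frac{2e^{1/40}}{\mass_{i,t-1}}\learn(\PV_{i,t}+\NV_{i,t-1})\le\frac{5\learn}{\mass_{i,t-1}}(\PV_{i,t}+\NV_{i,t-1}).
\end{equation}
Both divergences there have base point $\zlift_{i,t-1}$. For the $\NV$ term the order of arguments is reversed relative to \eqref{eq:l1}, so I would apply \eqref{eq:l1} with base $\score_{i,t-3/2}$ and convert its mass to $\mass_{i,t-1}$ via \eqref{eq:masscmp}.

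The main obstacle is \eqref{eq:mir2}. By \eqref{eq:mir1}, $\jac_{i,t}\diff\score_{i,t-1/2}=\diff\strat_{i,t}-\remd_{i,t}$. Bounding $\norm{\remd_{i,t}}_1^2\lesssim(\nPlayers+1)^2\learn^3(\PV_{i,t}+\NV_{i,t-1})$ pointwise would introduce an unwanted $(\nPlayers+1)^2$ factor. Instead I would use the two-sided Taylor expansion, which gives $\norm{\remd_{i,t}}_1^2\lesssim\learn(\PV_{i,t}+\NV_{i,t-1})$ because $(9(\nPlayers+1)\learn)^2\le1$. This pointwise bound also follows from the strong convexity in \eqref{eq:lhess}, through the $\ell_1$-Lipschitz control of $\choice$. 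Combining it with \eqref{eq:strat1} then gives
\begin{equation}
\norm{\jac_{i,t}\diff\score_{i,t-1/2}}_1^2\lesssim\frac{\learn}{\mass_{i,t-1}}(\PV_{i,t}+\NV_{i,t-1}),
\end{equation}
where the $\remd$ part loses nothing since $\mass\le1$. Next, \eqref{eq:Jmass} at time $t'$ gives $\opnorm{\jac_{i,t'}}{\infty}{1}^2\le4\learn^2\mass_{i,t'-1}$. To compare masses, note that $\norm{\learn(\score_{i,t'-1}-\score_{i,t-1})}_\infty\le2(\nPlayers+1)\learn\le1/30$, so \eqref{eq:masscmp} gives $\mass_{i,t'-1}\le e^{1/120}\mass_{i,t-1}$; for $t'\le0$ the same holds using the convention \eqref{eq:pre}. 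The mass factors cancel, and tracking constants should give $90\learn^3$. The delicate part is keeping every Bregman term anchored at the right base point and orientation so that $\mass_{i,t-1}$ is the mass that cancels.
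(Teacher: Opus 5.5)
Your route is the paper's: the same split of $\remd_{i,t}$ around $\choice(\learn\score_{i,t-1})$, \Cref{lem:taylor} for \eqref{eq:reng}, \Cref{lem:Jstab} through $\learn\score_{i,t-3/2}$ for \eqref{eq:Jeng}, \eqref{eq:l1} for \eqref{eq:strat1}, and mass cancellation via \eqref{eq:masscmp} and \eqref{eq:Qmass} for \eqref{eq:mir2}. The genuine error is in the Bregman orientations.

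In \eqref{eq:strat1} the $\NV$ term is \emph{not} reversed relative to \eqref{eq:l1}. That inequality bounds $\norm{\strat'-\strat}_1^2$ by $\bregofX{\lreg}{\lmir(\score')}{\lmir(\score)}$, with the base $\lmir(\score)$ in the second slot. Taking $\score=\learn\score_{i,t-1}$ and $\score'=\learn\score_{i,t-3/2}$ therefore gives exactly $\bregofX{\lreg}{\zlift_{i,t-3/2}}{\zlift_{i,t-1}}=\learn\NV_{i,t-1}$, with $\mass_{i,t-1}$ already in the denominator. This is what the paper does: it uses $\learn\score_{i,t-1}$ as base in both applications. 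Your remedy of re-basing at $\learn\score_{i,t-3/2}$ instead produces $\bregofX{\lreg}{\zlift_{i,t-1}}{\zlift_{i,t-3/2}}$. That is neither a $\PV$ nor an $\NV$ term, and nothing in the paper compares the two orientations. So the step fails as written, and \eqref{eq:mir2}, which rests on \eqref{eq:strat1}, inherits the problem.

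The same observation removes your symmetrization in \eqref{eq:Jeng}. Both the hypothesis and the left-hand side of \eqref{eq:Jstab} are symmetric in $(\score,\score')$. On the second leg, take $\score=\learn\score_{i,t-2}$ and $\score'=\learn\score_{i,t-3/2}$ to obtain exactly $\learn\PV_{i,t-1}$. The ``local symmetry via Hessian comparability'' you invoke is proved nowhere and is not needed; your suspicion that the authors orient it to avoid symmetrization is correct.

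The explicit constants also do not close as you state them.
\begin{itemize}
\item For \eqref{eq:reng}, $2\cdot2000\cdot81\cdot1.03\approx3.3\cdot10^5$ already exceeds $3\cdot10^5$. Instead use $\norm{\remd_{i,t}}_1\le\norm{\rho^+}_1+\norm{\rho^-}_1$ with the separate sizes $9(\nPlayers+1)\learn$ and $8(\nPlayers+1)\learn$. Then $(9a+8b)^2\le145(a^2+b^2)$ gives $2050\cdot145<3\cdot10^5$.
\item For \eqref{eq:Jeng}, taking $M=1/10$ gives $4000e^{1/40}>4100$. Use the actual bound $\learn\norm{\pred_{i,t}}_\infty\le 9/(60(\nPlayers+1)^2)<1/16$ instead.
\item In \eqref{eq:mir2}, the factor $(\nPlayers+1)^2$ in the pointwise remainder bound \eqref{eq:rpt} is harmless. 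The paper keeps it and absorbs it through $600000(\nPlayers+1)^2\learn^2<11$, giving $4e^{1/480}(10+11)<90$. Your cruder $(9(\nPlayers+1)\learn)^2\le1$ throws away exactly that smallness and leaves a constant of order $10^4$.
\item Your aside that the pointwise remainder bound ``also follows'' from strong convexity and Lipschitz continuity of $\choice$ is unsupported. Those give only $\norm{\remd_{i,t}}_1\lesssim\learn(\norm{\pred_{i,t}}_\infty+\norm{\perr_{i,t-1}}_\infty)$, not a bound in terms of $\PV_{i,t}+\NV_{i,t-1}$.
\end{itemize}
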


\begin{proof}
The bounds \eqref{eq:Jbd}--\eqref{eq:Jmass} follow from \eqref{eq:Qunif}--\eqref{eq:Qmass}. Fix $t\ge1$. By \eqref{eq:yid},
\begin{equation}
 \strat_{i,t}=\choice\!\left(\learn(\score_{i,t-1}+\pred_{i,t})\right),
 \qquad
 \strat_{i,t-1}=\choice\!\left(\learn(\score_{i,t-1}-\perr_{i,t-1})\right),
\end{equation}
and $\diff \score_{i,t-1/2}=\pred_{i,t}+\perr_{i,t-1}$. Hence
\begin{equation}
 \remd_{i,t}
 =\bigl(\strat_{i,t}-\choice(\learn \score_{i,t-1})-\jac_{i,t}\pred_{i,t}\bigr)
 -\bigl(\strat_{i,t-1}-\choice(\learn \score_{i,t-1})+\jac_{i,t}\perr_{i,t-1}\bigr).
\end{equation}
By \eqref{eq:eunif}--\eqref{eq:punif} and \eqref{eq:eta},
\begin{equation}
 \learn\norm*{\pred_{i,t}}_\infty<\frac1{16},
 \qquad
 \learn\norm*{\perr_{i,t-1}}_\infty<\frac1{16}.
\end{equation}
Apply \eqref{eq:taylor} to the two brackets. Since $e^a\le(1-a)^{-1}$ for $0\le a<1$, $2000e^{1/64}<2050$. Using
\begin{equation}
 \bregofX{\lreg}{\zlift_{i,t-1/2}}{\zlift_{i,t-1}}=\learn \PV_{i,t},
 \qquad
 \bregofX{\lreg}{\zlift_{i,t-3/2}}{\zlift_{i,t-1}}=\learn \NV_{i,t-1},
\end{equation}
we obtain
\begin{equation}
 \norm*{\remd_{i,t}}_1^2
 \le2050(\nPlayers+1)^2\learn^3
 \left(9\sqrt{\PV_{i,t}}+8\sqrt{\NV_{i,t-1}}\right)^2.
\end{equation}
Since $(9a+8b)^2\le145(a^2+b^2)$,
\begin{equation}\label{eq:rpt}
 \norm*{\remd_{i,t}}_1^2
 <300000(\nPlayers+1)^2\learn^3(\PV_{i,t}+\NV_{i,t-1}).
\end{equation}
Summing in $t$ proves \eqref{eq:reng}. For \eqref{eq:Jeng}, insert $\learn \score_{i,t-3/2}$ between $\learn \score_{i,t-1}$ and $\learn \score_{i,t-2}$. The two score differences have $\ell_\infty$ norm smaller than $1/16$. Thus
\begin{align}
 \opnorm{\jac_{i,t}-\jac_{i,t-1}}{\infty}{1}^2
 &\le2\learn^2
 \opnorm{\Jfun(\learn \score_{i,t-1})-
             \Jfun(\learn \score_{i,t-3/2})}{\infty}{1}^2\\
 &\quad+2\learn^2
 \opnorm{\Jfun(\learn \score_{i,t-3/2})-
             \Jfun(\learn \score_{i,t-2})}{\infty}{1}^2\\
 &\le4000e^{1/64}\learn^3(\PV_{i,t-1}+\NV_{i,t-1})\\
 &<4100\learn^3(\PV_{i,t-1}+\NV_{i,t-1}),
\end{align}
where \eqref{eq:Jstab} was applied with the orientation matching the definitions of $\PV_{i,t-1}$ and $\NV_{i,t-1}$. Summing in $t$ and shifting the indices gives \eqref{eq:Jeng}. For \eqref{eq:strat1}, insert $\choice(\learn \score_{i,t-1})$ between $\strat_{i,t}$ and $\strat_{i,t-1}$. By \eqref{eq:l1}, with $\learn \score_{i,t-1}$ as the base score in both applications,
\begin{equation}
 \norm*{\strat_{i,t}-\choice(\learn \score_{i,t-1})}_1^2
 \le\frac{2e^{1/64}\learn}{\mass_{i,t-1}}\PV_{i,t},
 \qquad
 \norm*{\strat_{i,t-1}-\choice(\learn \score_{i,t-1})}_1^2
 \le\frac{2e^{1/64}\learn}{\mass_{i,t-1}}\NV_{i,t-1}.
\end{equation}
Therefore
\begin{equation}
 \norm*{\diff \strat_{i,t}}_1^2
 \le\frac{4e^{1/64}\learn}{\mass_{i,t-1}}
 (\PV_{i,t}+\NV_{i,t-1})
 <\frac{5\learn}{\mass_{i,t-1}}(\PV_{i,t}+\NV_{i,t-1}).
\end{equation}
It remains to prove \eqref{eq:mir2}. If $|t-t'|\le \nPlayers+1$, then
\begin{equation}
 \learn\norm*{\score_{i,t'-1}-\score_{i,t-1}}_\infty
 \le2\learn(\nPlayers+1)\le\frac1{120}.
\end{equation}
Since $\jac_{i,t'}=\learn\Jfun(\learn \score_{i,t'-1})$ also for $t'\le0$, \eqref{eq:masscmp} and \eqref{eq:Qmass} give
\begin{equation}
 \opnorm{\jac_{i,t'}}{\infty}{1}^2
 \le4e^{1/480}\learn^2\mass_{i,t-1}.
\end{equation}
Also, by \eqref{eq:mir1},
\begin{equation}
 \norm*{\jac_{i,t}\diff \score_{i,t-1/2}}_1^2
 \le2\norm*{\diff \strat_{i,t}}_1^2+2\norm*{\remd_{i,t}}_1^2.
\end{equation}
Using \eqref{eq:strat1} and \eqref{eq:rpt},
\begin{align}
 &\opnorm{\jac_{i,t'}}{\infty}{1}^2
 \norm*{\jac_{i,t}\diff \score_{i,t-1/2}}_1^2\\
 &\qquad\le4e^{1/480}\learn^3
 \left(10+600000(\nPlayers+1)^2\learn^2\right)
 (\PV_{i,t}+\NV_{i,t-1})\\
 &\qquad<90\learn^3(\PV_{i,t}+\NV_{i,t-1}),
\end{align}
where $600000(\nPlayers+1)^2\learn^2<11$ and $e^{1/480}<480/479$. This proves \eqref{eq:mir2}.
\end{proof}

For what follows, set $\remd_{i,t}=0$ for $t\le0$.  By \eqref{eq:pre}, for $t \le 0$,
\begin{equation}
 \diff \strat_{i,t}=\diff \score_{i,t-1/2}=0,
\end{equation}
so \eqref{eq:mir1} holds for every $t\in\Z$.

\section{Proof of the main theorem}\label{app:energy}

We combine \cref{app:lift,app:pred}: first the RVU inequality, then the prediction error bound, and finally \cref{thm:main}.

\subsection{The RVU inequality}\label{app:oftrl}

Recall from \cref{sec:proof} the optimistic score and lifted iterates
\begin{equation}
 \score_{i,t-1/2}=\score_{i,t-1}+\pred_{i,t},\qquad
 \zlift_{i,t-1/2}=\lmir(\learn \score_{i,t-1/2}),\qquad
 \zlift_{i,t}=\lmir(\learn \score_{i,t}),
\end{equation}
the Bregman variation terms
\begin{equation}
 \PV_{i,t}=\bregofX{\lreg/\learn}{\zlift_{i,t-1/2}}{\zlift_{i,t-1}},\qquad
 \NV_{i,t}=\bregofX{\lreg/\learn}{\zlift_{i,t-1/2}}{\zlift_{i,t}},
\end{equation}
and the cumulative quantities
\begin{equation}
 \PV(\horizon)=\sum_{i=1}^\nPlayers\sum_{t=1}^\horizon \PV_{i,t},\qquad
 \NV(\horizon)=\sum_{i=1}^\nPlayers\sum_{t=1}^\horizon \NV_{i,t},\qquad
 \EE(\horizon)=\sum_{i=1}^\nPlayers\sum_{t=1}^\horizon\norm*{\perr_{i,t}}_\infty^2.
\end{equation}
Finally, recall the range of the regularizer,
\begin{equation}
 \rrange=\max_{z\in\Kset}\lreg(z)-\min_{z\in\Kset}\lreg(z).
\end{equation}
We now prove the \acs{RVU} inequality.

\begin{proposition}[\acs{RVU} inequality]
For every $\horizon$,
\begin{equation}\label{eq:social}
 \sum_{i=1}^\nPlayers[\reg_i(\horizon)]_+
 \le \frac{\nPlayers\rrange}{\learn}+\frac\learn2 \EE(\horizon)-\PV(\horizon),
\end{equation}
and
\begin{equation}\label{eq:Nerr}
 \NV(\horizon)\le\frac\learn2 \EE(\horizon).
\end{equation}
\end{proposition}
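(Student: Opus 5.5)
The plan is to run the standard optimistic FTRL analysis for a single player $i$ on the lifted set $\Kset$ with regularizer $\lreg/\learn$, and then sum over players. Write $F(z)=\lreg(z)/\learn$, so that $\zlift_{i,t}=\argmax_{z\in\Kset}\{\braket*{\score_{i,t}}{z}-F(z)\}$ and $\zlift_{i,t-1/2}$ is the analogous maximizer for the optimistic score $\score_{i,t-1/2}$. By \Cref{lem:maps} all maximizers lie in $\Kint$, so the first-order conditions are equalities. Namely, $\nabla F(\zlift_{i,t})=\score_{i,t}$ and $\nabla F(\zlift_{i,t-1/2})=\score_{i,t-1/2}$, where $\score_{i,t-1/2}=\score_{i,t-1}+\pred_{i,t}$ and $\score_{i,t}=\score_{i,t-1/2}+\perr_{i,t}$ by \eqref{eq:yid}. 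Since $\zlift_{i,t-1/2}$ is a positive multiple of $\strat_{i,t}$, \eqref{eq:gprop} gives $\braket*{\zlift_{i,t-1/2}}{\cpay_{i,t}}=0$. Combining this with \eqref{eq:auxreg-overview}, for any comparator $z^\ast\in\Kset$ we have
\begin{equation}
 [\reg_i(\horizon)]_+=\max_{z\in\Kset}\sum_{t=1}^\horizon\braket*{z-\zlift_{i,t-1/2}}{\cpay_{i,t}}.
\end{equation}

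The core step is a three-point identity for a fixed comparator $z$, using only Bregman divergences of $F$ and the gradient identities above. Split $\cpay_{i,t}=\pred_{i,t}+\perr_{i,t}$. Then $\braket*{z-\zlift_{i,t-1/2}}{\pred_{i,t}}=\braket*{z-\zlift_{i,t-1/2}}{\nabla F(\zlift_{i,t-1/2})-\nabla F(\zlift_{i,t-1})}$. Likewise $\braket*{z-\zlift_{i,t}}{\perr_{i,t}}=\braket*{z-\zlift_{i,t}}{\nabla F(\zlift_{i,t})-\nabla F(\zlift_{i,t-1/2})}$. Both are expanded by the three-point identity $\braket*{z-b}{\nabla F(b)-\nabla F(a)}=\bregofX{F}{z}{a}-\bregofX{F}{z}{b}-\bregofX{F}{b}{a}$. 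This leaves the cross term $\braket*{\zlift_{i,t}-\zlift_{i,t-1/2}}{\perr_{i,t}}$, and summing telescopes $\bregofX{F}{z}{\zlift_{i,t}}$. The result is
\begin{equation}
 \sum_t\braket*{z-\zlift_{i,t-1/2}}{\cpay_{i,t}}
 \le \bregofX{F}{z}{\zlift_{i,0}}+\sum_t\Bigl[\braket*{\zlift_{i,t}-\zlift_{i,t-1/2}}{\perr_{i,t}}-\PV_{i,t}-\bregofX{F}{\zlift_{i,t}}{\zlift_{i,t-1/2}}\Bigr].
\end{equation}
Since $\score_{i,0}=0$, the point $\zlift_{i,0}$ minimizes $\lreg$ over $\Kset$. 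By first-order optimality, $\bregofX{F}{z}{\zlift_{i,0}}\le(\lreg(z)-\min\lreg)/\learn\le\rrange/\learn$.

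Next I control the cross term. Monotonicity of $\nabla F$ gives $\bregofX{F}{\zlift_{i,t}}{\zlift_{i,t-1/2}}+\NV_{i,t}=\braket*{\zlift_{i,t}-\zlift_{i,t-1/2}}{\perr_{i,t}}$. By \eqref{eq:lhess}, $F$ is $1/\learn$-strongly convex in $\ell_1$, so this symmetrized divergence is at least $\norm*{\zlift_{i,t}-\zlift_{i,t-1/2}}_1^2/\learn$. Hence $\norm*{\zlift_{i,t}-\zlift_{i,t-1/2}}_1\le\learn\norm*{\perr_{i,t}}_\infty$, and the cross term is at most $\learn\norm*{\perr_{i,t}}_\infty^2$.

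To obtain the sharp constant $\learn/2$ and the negative $\NV$ term, I would handle the cross term more carefully. I would use the alternative (Fenchel-dual) bound: the increment $\braket*{\zlift_{i,t}-\zlift_{i,t-1/2}}{\perr_{i,t}}-\bregofX{F}{\zlift_{i,t}}{\zlift_{i,t-1/2}}$ equals $\NV_{i,t}$. Then $\NV_{i,t}=\bregofX{F^\ast}{\score_{i,t}}{\score_{i,t-1/2}}\le\frac{\learn}{2}\norm*{\perr_{i,t}}_\infty^2$, because strong convexity of $F$ makes the conjugate $F^\ast$ $\learn$-smooth with respect to $\norm{\cdot}_\infty$. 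Substituting this gives
\begin{equation}
 [\reg_i(\horizon)]_+\le\frac{\rrange}{\learn}+\frac{\learn}{2}\sum_t\norm*{\perr_{i,t}}_\infty^2-\sum_t\PV_{i,t}.
\end{equation}
Summing over $i$ yields \eqref{eq:social}, and summing the per-step bound on $\NV_{i,t}$ yields \eqref{eq:Nerr}.

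The main obstacle I expect is bookkeeping: arranging the three-point identities so that exactly the terms $\PV_{i,t}$ and $\NV_{i,t}$, with the orientations fixed in the definitions, appear. Conjugate duality on the boundary-free interior (guaranteed by \Cref{lem:maps}) is what makes the smoothness bound on $\NV$ legitimate.
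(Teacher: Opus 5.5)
Your proof is correct. It reaches exactly the paper's per-player inequality \eqref{eq:indiv}, but it does the bookkeeping on the primal side rather than the dual side.

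You split $\cpay_{i,t}=\pred_{i,t}+\perr_{i,t}$ and apply the three-point identity twice, using $\nabla(\lreg/\learn)(\zlift_{i,s})=\score_{i,s}$. You then telescope $\bregofX{\lreg/\learn}{z}{\zlift_{i,t}}$ for a fixed comparator $z$. Finally you identify the leftover cross term minus $\bregofX{\lreg/\learn}{\zlift_{i,t}}{\zlift_{i,t-1/2}}$ as exactly $\NV_{i,t}$.

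The paper instead introduces $\Fpot=(\lreg/\learn+\iota_{\Kset})^*$ and shows $\nabla\Fpot(\score)=\lmir(\learn\score)$. It writes both $\PV_{i,t}$ and $\NV_{i,t}$ as Bregman divergences of $\Fpot$. It then uses the single identity $\NV_{i,t}-\PV_{i,t}=\Fpot(\score_{i,t})-\Fpot(\score_{i,t-1})-\braket*{\zlift_{i,t-1/2}}{\cpay_{i,t}}$, which telescopes in $\Fpot$, and the comparator enters only once, through Fenchel--Young.

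What each route buys:
\begin{itemize}
\item The dual route is shorter and matches the FTRL structure, since iterates depend only on cumulative scores. It also works in one picture throughout, because the final step $\NV_{i,t}\le\frac{\learn}{2}\norm*{\perr_{i,t}}_\infty^2$ needs the dual representation $\NV_{i,t}=\bregofX{\Fpot}{\score_{i,t}}{\score_{i,t-1/2}}$ anyway.
\item Your route is the familiar optimistic-mirror-descent argument and makes the origin of $\PV_{i,t}$ and $\NV_{i,t}$ transparent. It even leaves a spare $-\bregofX{\lreg/\learn}{z}{\zlift_{i,\horizon}}$. The cost is that it evaluates Bregman divergences at boundary comparators. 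This is harmless, because $\lreg$ is continuous on $\Kset$ and gradients are only taken at points of $\Kint$.
\end{itemize}

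For the smoothness step, you cite the general duality between strong convexity and smoothness. The paper proves the same fact by hand, via the Lipschitz bound \eqref{eq:Zlip} and integration of \eqref{eq:phigrad}. Your preliminary bound of the cross term by $\learn\norm*{\perr_{i,t}}_\infty^2$ is superseded by the exact identity and can be dropped.
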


\begin{proof}
Fix $i$. Since
\begin{equation}
 \strat_{i,t}=\frac{\zlift_{i,t-1/2}}{\braket*{\one}{\zlift_{i,t-1/2}}},
\end{equation}
\eqref{eq:gprop} gives
\begin{equation}
 \braket*{\zlift_{i,t-1/2}}{\cpay_{i,t}}=0.
\end{equation}
Therefore, because $\Kset=\conv(\{0,e_1,\ldots,e_\nPures\})$,
\begin{align}
 \max_{u\in\Kset}\sum_{t=1}^\horizon\braket*{u-\zlift_{i,t-1/2}}{\cpay_{i,t}}
 &=\max_{u\in\Kset}\braket*{u}{\score_{i,\horizon}}\\
 &=\max\{0,\max_\pure \score_{i\pure,\horizon}\}
 = [\reg_i(\horizon)]_+.                                       \label{eq:auxreg}
\end{align}
Let
\begin{equation}
 \Fpot=\left(\frac{\lreg}{\learn}+\iota_{\Kset}\right)^*,
\end{equation}
where $\iota_{\Kset}$ is $0$ on $\Kset$ and $+\infty$ outside $\Kset$, and $^*$ denotes convex conjugation. By \cref{lem:maps}, the maximizer is unique and smooth, so, for a generic score $\score$,
\begin{equation}
 \Fpot(\score)=\braket*{\score}{\lmir(\learn \score)}-\frac1\learn\lreg(\lmir(\learn \score)).
\end{equation}
Differentiating in a direction $q$ and using $\nabla\lreg(\lmir(\learn \score))=\learn \score$ from \eqref{eq:foc} gives
\begin{equation}
 D\Fpot(\score)[q]
 =\braket*{q}{\lmir(\learn \score)}+\learn\braket*{\score}{D\lmir(\learn \score)[q]}
  -\braket*{\nabla\lreg(\lmir(\learn \score))}{D\lmir(\learn \score)[q]}
 =\braket*{q}{\lmir(\learn \score)}.
\end{equation}
Hence
\begin{equation}\label{eq:phigrad}
 \nabla\Fpot(\score)=\lmir(\learn \score).
\end{equation}
If $p,q\in\R^\nPures$, then $\learn p=\nabla\lreg(\lmir(\learn p))$ and $\learn q=\nabla\lreg(\lmir(\learn q))$ by \eqref{eq:foc}. Fenchel equality therefore gives
\begin{equation}
 \bregofX{\Fpot}{p}{q}=\bregofX{\lreg/\learn}{\lmir(\learn q)}{\lmir(\learn p)}.
\end{equation}
Indeed, substituting $\Fpot(r)=\braket*{r}{\lmir(\learn r)}-\lreg(\lmir(\learn r))/\learn$ in the left side yields the right side.
Consequently,
\begin{equation}
 \PV_{i,t}=\bregofX{\Fpot}{\score_{i,t-1}}{\score_{i,t-1/2}},
 \qquad
 \NV_{i,t}=\bregofX{\Fpot}{\score_{i,t}}{\score_{i,t-1/2}}.
\end{equation}
Subtracting these two identities and using $\score_{i,t}-\score_{i,t-1}=\cpay_{i,t}$ gives
\begin{equation}
 \NV_{i,t}-\PV_{i,t}
 =\Fpot(\score_{i,t})-\Fpot(\score_{i,t-1})-\braket*{\zlift_{i,t-1/2}}{\cpay_{i,t}}.
\end{equation}
Hence, for every $u\in\Kset$,
\begin{align}
 \sum_{t=1}^\horizon\braket*{\cpay_{i,t}}{u-\zlift_{i,t-1/2}}
 &=\braket*{\score_{i,\horizon}}{u}-\Fpot(\score_{i,\horizon})+\Fpot(0)
   +\sum_{t=1}^\horizon(\NV_{i,t}-\PV_{i,t})\\
 &\le \frac{\lreg(u)-\min_{z\in\Kset}\lreg(z)}{\learn}
   +\sum_{t=1}^\horizon\NV_{i,t}-\sum_{t=1}^\horizon\PV_{i,t},
\end{align}
where we used the Fenchel--Young inequality, that is, $\braket*{\score_{i,\horizon}}{u}-\Fpot(\score_{i,\horizon})\le\lreg(u)/\learn$ and $\Fpot(0)=-\min_{\Kset}\lreg/\learn$. Maximizing in $u$ and using \eqref{eq:auxreg} proves
\begin{equation}\label{eq:indiv}
 [\reg_i(\horizon)]_+
 \le \frac{\rrange}{\learn}
 +\sum_{t=1}^\horizon\NV_{i,t}-\sum_{t=1}^\horizon\PV_{i,t}.
\end{equation}
It remains to bound $\NV_{i,t}$. Let $z',z\in\Kint$. By the $1$-strong convexity in \eqref{eq:lhess},
\begin{equation}
 \bregofX{\lreg/\learn}{z'}{z}
 \ge\frac1{2\learn}\norm*{z'-z}_1^2.
\end{equation}
For arbitrary $p,q\in\R^\nPures$, the first-order conditions imply
\begin{align}
 \braket*{p-q}{\lmir(\learn p)-\lmir(\learn q)}
 &=
 \bregofX{\lreg/\learn}{\lmir(\learn p)}{\lmir(\learn q)}
 +\bregofX{\lreg/\learn}{\lmir(\learn q)}{\lmir(\learn p)}
 \\
 &\ge
 \frac1\learn\norm*{\lmir(\learn p)-\lmir(\learn q)}_1^2.
\end{align}
By H\"older's inequality,
\begin{equation}
 \frac1\learn\norm*{\lmir(\learn p)-\lmir(\learn q)}_1^2
 \le\norm*{p-q}_\infty\norm*{\lmir(\learn p)-\lmir(\learn q)}_1.
\end{equation}
If $\lmir(\learn p)=\lmir(\learn q)$, the desired bound is immediate; otherwise, canceling one factor of $\norm*{\lmir(\learn p)-\lmir(\learn q)}_1$ proves
\begin{equation}\label{eq:Zlip}
 \norm*{\lmir(\learn p)-\lmir(\learn q)}_1
 \le\learn\norm*{p-q}_\infty.
\end{equation}
By \eqref{eq:phigrad} and integration,
\begin{align}
 \bregofX{\Fpot}{p}{q}
 &=\int_0^1
 \braket*{\lmir(\learn(q+\param(p-q)))-\lmir(\learn q)}{p-q}\,\dd\param\\
 &\le\int_0^1\learn\param\norm*{p-q}_\infty^2\,\dd\param
 =\frac\learn2\norm*{p-q}_\infty^2,
\end{align}
where the inequality uses \eqref{eq:Zlip}. Thus
\begin{equation}
 \bregofX{\Fpot}{p}{q}
 \le\frac\learn2\norm*{p-q}_\infty^2.
\end{equation}
Since $\NV_{i,t}=\bregofX{\Fpot}{\score_{i,t}}{\score_{i,t-1/2}}$ and $\score_{i,t}-\score_{i,t-1/2}=\perr_{i,t}$,
\begin{equation}\label{eq:Npt}
 \NV_{i,t}\le\frac\learn2\norm*{\perr_{i,t}}_\infty^2.
\end{equation}
Summing \eqref{eq:indiv} and \eqref{eq:Npt} over players and rounds gives \eqref{eq:social} and \eqref{eq:Nerr}.
\end{proof}

By \eqref{eq:social}--\eqref{eq:Nerr}, it remains to control $\EE(\horizon)$ in terms of $\PV(\horizon)+\NV(\horizon)$.  We combine the bounds of \cref{app:pred} below.

\subsection{Prediction error control}\label{app:elim}

Fix $\horizon$, and let all $\ell_2$ norms below be taken over $t=1,\ldots,\horizon$, with the underlying vector norm clear from context.
By the zero prehistory conventions above, every sequence to which we apply $\dop_p$ vanishes for $t\le0$. Moreover, $(\dop_pw)_t$ depends only on the values $w_s$ with $s\le t$, so values of the sequence after time $\horizon$ do not affect $(\dop_pw)_t$ for $1\le t\le \horizon$. Hence the bounds of \cref{lem:seq} apply directly on the finite horizon $1,\ldots,\horizon$, and we obtain
\begin{align}
 \norm*{\dop_pw}_{\ell_2}
 &\le4(\nPlayers+1)^{p+1}\norm w_{\ell_2},                  \label{eq:Kl2T}\\
 \norm*{\dop_pw}_{\ell_2}
 &\le2(\nPlayers+1)^p\norm v
 \quad\text{if }w_t=\mathbf1_{\{t=t_0\}}v.          \label{eq:KptT}
\end{align}
Summing \cref{lem:mir1} over players yields
\begin{align}
 \sum_{j=1}^\nPlayers\norm*{\remd_j}_{\ell_2}^2
 &\le300000(\nPlayers+1)^2\learn^3(\PV(\horizon)+\NV(\horizon)),                 \label{eq:rglob}\\
 \sum_{j=1}^\nPlayers\norm*{\diff \jac_j}_{\ell_2}^2
 &\le4100\learn^3(\PV(\horizon)+\NV(\horizon)),                          \label{eq:Jglob}\\
 \sum_{j=1}^\nPlayers\norm*{\diff \perr_j}_{\ell_2}^2
 &\le4\EE(\horizon).                                           \label{eq:deglob}
\end{align}
The third bound uses $\norm*{\diff \perr_{j,t}}_\infty^2\le 2\norm*{\perr_{j,t}}_\infty^2+2\norm*{\perr_{j,t-1}}_\infty^2$ and $\perr_{j,0}=0$.
Also, for $0\le\ell,u\le \nPlayers+1$, \eqref{eq:mir2} gives
\begin{equation}\label{eq:repglob}
 \sum_{t=1}^\horizon
 \opnorm{\jac_{j,t-\ell}}{\infty}{1}^2
 \norm*{\jac_{j,t-u}\diff \score_{j,t-u-\frac12}}_1^2
 \le90\learn^3\sum_{k=1}^\horizon(\PV_{j,k}+\NV_{j,k}).
\end{equation}
For $t-u>0$, this is \eqref{eq:mir2} at time $t-u$, since $|u-\ell|\le \nPlayers+1$; for $t-u\le0$ the summand is zero. We will also use the following Cauchy--Schwarz consequence.  If $\norm*{w_\nu}_{\ell_2}\le c\,u_{j_\nu}$, $|I|\le n_0$, and each label occurs at most $n_1$ times, then
\begin{equation}\label{eq:mult}
 \norm*{\sum_{\nu\in I} w_\nu}_{\ell_2}
 \le c\sqrt{n_0n_1}\left(\sum_{j=1}^\nPlayers u_j^2\right)^{1/2}.
\end{equation} Next, in the expansion below, we call $\jac_{j,t-q}$, $\cpay_{k,t-q}$, and
$\strat_{\ell,t-q}$ a \emph{Jacobian term}, \emph{payoff term}, and \emph{strategy term}, respectively, with $q$ denoting the corresponding \emph{time offset}, and each term in the expansion is multilinear in these factors. Accordingly, if its Jacobian, payoff, and strategy terms are replaced by arbitrary maps $B_a:\R^\nPures\to\R^\nPures$ and vectors $v_a,z_a$, we write the resulting expression as $M_t[(B_a);(v_a);(z_a)]$. The terms generated by the expansion satisfy
\begin{equation}\label{eq:pcontr}
 \norm*{M_t[(B_a);(v_a);(z_a)]}_\infty
 \le
 \prod_a\opnorm{B_a}{\infty}{1}
 \prod_a\norm*{v_a}_\infty
 \prod_a\norm*{z_a}_1,
\end{equation}
and the proof below shows that this bound is preserved whenever the expansion continues. The way the expansion continues becomes then simple. Indeed, whenever a strategy or payoff difference, after being expanded, reveals $\diff\strat_j$, \eqref{eq:mir1} and \eqref{eq:ydiff} give, suppressing time offsets,
\begin{equation}\label{eq:cont0}
 \diff\strat_j
 =
 \remd_j+\jac_j\cpay_j-\jac_j\diff\perr_j.
\end{equation}
The first and last terms are controlled directly, while $\jac_j\cpay_j$ can continue only if the player label $j$ is new; and if $j$ has already appeared among the Jacobian terms, the resulting repeated player term is controlled by \eqref{eq:repglob}, so every continuing term must introduce a fresh player label. The next lemma makes this recursive structure precise.

\begin{lemma}[One-step expansion]\label{lem:step}
Fix $1\le p\le \nPlayers$, and let $M$ be a term containing $p$ Jacobian terms with distinct player labels $j_1,\ldots,j_p$, together with $s$ payoff terms and $h$ strategy terms, where $1\le s\le p$ and $0\le h\le \nPlayers p$. Assume that all time offsets lie in $\{0,\ldots,p\}$, that each player occurs at most $p$ times among the strategy terms, and that $M$ satisfies \eqref{eq:pcontr}. Then
\begin{equation}\label{eq:step}
 \diff M=R_M+\sum_{M'\in\cont(M)}M',
\end{equation}
where $R_M$ collects the controlled terms and satisfies
\begin{align}
 \norm*{\dop_{p+1}R_M}_{\ell_2}
 &\le
 2p(\nPlayers+1)^{p+1}(2\learn)^p
 \\
 &\quad+
 815p\sqrt{\nPlayers}\,(\nPlayers+1)^{p+2}
 (2\learn)^{p-1}\learn^{3/2}
 \sqrt{\PV(\horizon)+\NV(\horizon)}
 \\
 &\quad+
 12p\sqrt{\nPlayers}\,(\nPlayers+1)^{p+2}
 (2\learn)^{p+1}\sqrt{\EE(\horizon)}.
 \label{eq:term}
\end{align}
Moreover, every $M'\in\cont(M)$ satisfies the same conditions as $M$ with $p$ replaced by $p+1$, and
\begin{equation}\label{eq:cont}
 |\cont(M)|\le3p(\nPlayers-p).
\end{equation}
In particular, $\cont(M)=\varnothing$ when $p=\nPlayers$.
\end{lemma}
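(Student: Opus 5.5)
Apply the multilinear product rule \eqref{eq:multidiff} to $M$, viewed as a multilinear function of its $p$ Jacobian terms, $s$ payoff terms and $h$ strategy terms. This writes $\diff M$ as a sum of $p+s+h$ terms; in each, exactly one factor is replaced by its first difference, the factors before it are shifted by one time step, and those after it keep their offset. All offsets therefore stay in $\{0,\ldots,p+1\}$. There are three kinds of term.

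\emph{Jacobian-difference terms.} Here a factor $\jac_{j_a}$ becomes $\diff\jac_{j_a}$. By \eqref{eq:pcontr}, such a term is bounded by $\opnorm{\diff\jac_{j_a}}{\infty}{1}(2\learn)^{p-1}2^s$, using \eqref{eq:Jbd} and \eqref{eq:gprop}; the normalization needs checking. Summing over $a$, with each label occurring once, and combining \eqref{eq:Jglob}, \eqref{eq:mult} and \eqref{eq:Kl2T} with $p+1$ in place of $p$, gives a contribution to the $\PV+\NV$ part of \eqref{eq:term}.

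\emph{Payoff-difference terms.} Here a factor $\cpay_k$ becomes $\diff\cpay_k$. By \eqref{eq:gbdry}, this is an initialization term $\mathbf 1_{\{t=1\}}\gext_{k,0}$ plus $\diff\gext_k$. The initialization term is supported at one time. By \eqref{eq:KptT} it contributes at most $2(\nPlayers+1)^{p+1}(2\learn)^p$ (times bounded factors), and summing over $s\le p$ payoff terms gives the first term of \eqref{eq:term}. For $\diff\gext_k$, apply \lemref{lem:paydiff} to obtain at most $2(\nPlayers-1)$ summands $C\,\diff\strat_j$, each with fresh strategy terms carrying distinct labels.

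\emph{Strategy-difference terms.} Here a factor $\strat_\ell$ becomes $\diff\strat_\ell$ directly.

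Every revealed $\diff\strat_j$, whether from a payoff or a strategy difference, is then split by \eqref{eq:cont0} into three pieces.
\begin{enumerate}
\item The remainder $\remd_j$ is controlled through \eqref{eq:rglob} and \eqref{eq:mult}. Each label occurs at most $2s+p\le 3p$ times per branch, which gives the $\PV+\NV$ term with the factor $(2\learn)^{p-1}$.
\item The piece $-\jac_j\diff\perr_j$ gains a Jacobian of norm at most $\learn$. It is controlled through \eqref{eq:deglob} and gives the $\sqrt{\EE}$ term with $(2\learn)^{p+1}$.
\item The piece $\jac_j\cpay_j$ is treated according to whether $j$ is new.
\end{enumerate}

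If $j\in\{j_1,\ldots,j_p\}$ (a repeated label), pair the existing Jacobian $\jac_{j,t-\ell}$ with $\jac_{j,t-u}\diff\score_{j,t-u-1/2}$ and use \eqref{eq:repglob}. The offsets lie in $\{0,\ldots,\nPlayers+1\}$, as that bound requires, and the other $p-1$ Jacobians contribute $(2\learn)^{p-1}$. This also feeds the $\PV+\NV$ term.

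If $j$ is fresh, the term is a continuing term $M'$. It has $p+1$ Jacobians with distinct labels and $s+1\le p+1$ payoff terms. The new strategy terms come from the $C$ coefficients: at most $\nPlayers-1$ per difference, each label at most once. So $h'\le h+\nPlayers\le\nPlayers(p+1)$, and each label's count rises by at most one, so it stays at most $p+1$. The bound \eqref{eq:pcontr} is preserved because \eqref{eq:Ccontr} holds and strategy terms lie in the simplex. Counting continuations: a fixed fresh label arises at most $2s$ times from payoff differences (two per coefficient family) and at most $p$ times from strategy differences. This gives at most $3p$ continuations per label, and there are $\nPlayers-p$ fresh labels, which proves \eqref{eq:cont}. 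When $p=\nPlayers$ no fresh label exists, so $\cont(M)=\varnothing$.

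\textbf{Main obstacle.} The difficulty is bookkeeping of constants. All controlled branches must be assembled via \eqref{eq:mult}, with $n_0=\bigoh(p\nPlayers)$ branches and multiplicity $n_1\le 3p$ per label. The factors must then yield exactly $815p\sqrt{\nPlayers}(\nPlayers+1)^{p+2}$ and $12p\sqrt{\nPlayers}(\nPlayers+1)^{p+2}$, using $\sqrt{300000}\approx548$ and $\sqrt{90}$ together with \eqref{eq:Kl2T}'s factor $4(\nPlayers+1)^{p+2}$. A subtler point is checking that the offset shifts keep all Jacobian pairs within distance $\nPlayers+1$ for \eqref{eq:repglob}.
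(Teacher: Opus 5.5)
Your proposal follows essentially the same route as the paper. The shared steps are: the product rule on $M$; the initialization split via \eqref{eq:gbdry}; \cref{lem:paydiff} together with \eqref{eq:cont0} for each revealed $\diff\strat_j$; repeated labels closed by \eqref{eq:repglob} and fresh labels continuing; and the $2s+p\le 3p$ label count, which gives both \eqref{eq:cont} and the inputs $n_0=3\nPlayers p$, $n_1=3p$ to \eqref{eq:mult}. Two small fixes are needed when you do the constants. First, the factors left after removing $\diff\jac_{j_a}$ are bounded by $\learn^{p-1}2^s\le 2(2\learn)^{p-1}$, not by $(2\learn)^{p-1}2^s$, which would cost an extra $2^{p-1}$. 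Second, for a repeated label, apply \eqref{eq:repglob} to the unsplit $\jac_j\diff\score_j=\jac_j\cpay_j-\jac_j\diff\perr_j$, as the paper does, since \eqref{eq:repglob} gives no bound on $\jac_j\cpay_j$ alone.
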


\begin{proof}
Apply the discrete product rule \eqref{eq:multidiff} to $M$, so that exactly one Jacobian, payoff, or strategy term is differenced, while all the other terms remain present, possibly with their time offsets increased by one. We now need to determine which terms may continue. First, a differenced Jacobian is controlled directly; next if a payoff term is differenced, \eqref{eq:gbdry} separates an initialization term and \cref{lem:paydiff} expresses the remaining part as a sum of strategy variations $\diff\strat_j$; and differencing a strategy term produces such a variation directly.
Since each of the $s$ payoff terms produces at most $2\nPlayers-1$ strategy variations, the total number of revealed variations is at most
\begin{equation}
 (2\nPlayers-1)s+h\le3\nPlayers p.
\end{equation}
More importantly, for any fixed player $j$, \cref{lem:paydiff} can produce the label $j$ at most twice from each payoff term, while by assumption $j$ occurs at most $p$ times among the strategy terms.
Hence each label is revealed at most
\begin{equation}\label{eq:labelmult}
 2s+p\le3p
\end{equation}
times.
For every revealed $\diff\strat_j$, use \eqref{eq:cont0}:
\begin{equation}
 \diff\strat_j
 =
 \remd_j+\jac_j\diff\score_{j,\cdot-1/2}
 =
 \remd_j+\jac_j\cpay_j-\jac_j\diff\perr_j.
\end{equation}
We know the remainder $\remd_j$ is already controlled. If $j\in\{j_1,\ldots,j_p\}$, an existing Jacobian with the same player label is already present, so the term $\jac_j\diff\score_{j,\cdot-1/2}$ is controlled by \eqref{eq:repglob}. If instead $j\notin\{j_1,\ldots,j_p\}$, then $-\jac_j\diff\perr_j$ is controlled and $\jac_j\cpay_j$ is the only term that may continue.
Thus every continuing term introduces one fresh Jacobian label.
There are $\nPlayers-p$ possible fresh labels, and by \eqref{eq:labelmult} each can arise at most $3p$ times. Therefore
\begin{equation}
 |\cont(M)|\le3p(\nPlayers-p),
\end{equation}
which proves \eqref{eq:cont}. In particular, no continuation is possible when $p=\nPlayers$.
We next check that a continuing term satisfies the same structural conditions at the next order. If $\diff\strat_j$ came from a strategy term, the continuing replacement is simply $\jac_j\cpay_j$, and
\begin{equation}
 \norm*{\jac_j\cpay_j}_1
 \le
 \opnorm{\jac_j}{\infty}{1}\norm*{\cpay_j}_\infty.
\end{equation}
If it came from a payoff term, \cref{lem:paydiff} produces an expression of the form $C_{kj}\jac_j\cpay_j$, and \eqref{eq:Ccontr} gives
\begin{equation}
 \norm*{C_{kj}\jac_j\cpay_j}_\infty
 \le
 \opnorm{\jac_j}{\infty}{1}\norm*{\cpay_j}_\infty
 \prod_a\norm*{z_a}_1.
\end{equation}
Hence \eqref{eq:pcontr} is preserved in either case. Differencing a payoff term inserts at most $\nPlayers-1$ strategy terms, with each player appearing at most once, whereas differencing a strategy term inserts none. Consequently, a continuing term has at most $\nPlayers(p+1)$ strategy terms, each player occurs at most $p+1$ times among them, it has $p+1$ distinct Jacobian labels and between $1$ and $p+1$ payoff terms, and every time offset lies in $\{0,\ldots,p+1\}$.
It remains then to bound the controlled part. Write
\begin{equation}
 R_M
 =
 R_M^{\jac}+R_M^0+R_M^{\remd}+R_M^{\perr}+R_M^\score,
\end{equation}
where the five pieces contain, respectively, a Jacobian variation, an initialization term, a remainder $\remd_j$, a term $-\jac_j\diff\perr_j$, and a repeated player term $\jac_j\diff\score_{j,\cdot-1/2}$.
For brevity, set
\begin{equation}
 P=\PV(\horizon),\qquad
 G=\NV(\horizon),\qquad
 E=\EE(\horizon).
\end{equation}
Since $\norm*{\strat_{i,t}}_1=1$, $\opnorm{\jac_{i,t}}{\infty}{1}\le\learn$ by \eqref{eq:Jbd}, and $\norm*{\cpay_{i,t}}_\infty\le2$ by \eqref{eq:gprop}, the product of the bounds for the $p$ Jacobian terms, the $s\le p$ payoff terms, and all strategy terms is at most
\begin{equation}
 \learn^p2^s\le(2\learn)^p.
\end{equation}
We use this bound after removing whichever factor is being controlled.
For $R_M^{\jac}$, one of the $p$ Jacobian factors is replaced by $\diff\jac_j$. The remaining factors contribute at most $2(2\learn)^{p-1}$. Hence \eqref{eq:Kl2T}, Cauchy--Schwarz over the $p$ distinct Jacobian labels, and \eqref{eq:Jglob} give
\begin{align}
 \norm*{\dop_{p+1}R_M^{\jac}}_{\ell_2}
 &\le
 4(\nPlayers+1)^{p+2}
 \,2(2\learn)^{p-1}\sqrt p
 \left(
   \sum_{j=1}^{\nPlayers}
   \norm*{\diff\jac_j}_{\ell_2}^2
 \right)^{1/2}
 \\
 &\le
 8\sqrt{4100}\sqrt p\,
 (\nPlayers+1)^{p+2}(2\learn)^{p-1}
 \learn^{3/2}\sqrt{P+G}
 \\
 &\le
 520\sqrt p\,
 (\nPlayers+1)^{p+2}(2\learn)^{p-1}
 \learn^{3/2}\sqrt{P+G},
\end{align}
where $8\sqrt{4100}<520$.
For $R_M^0$, each initialization term is supported at one time and has norm at most $2$ by \eqref{eq:gextbd}. There are at most $s\le p$ such terms, so \eqref{eq:KptT} gives
\begin{equation}
 \norm*{\dop_{p+1}R_M^0}_{\ell_2}
 \le
 2p(\nPlayers+1)^{p+1}(2\learn)^p.
\end{equation}
For $R_M^{\remd}$, there are at most $3\nPlayers p$ revealed variations, while each label occurs at most $3p$ times by \eqref{eq:labelmult}. After singling out $\remd_j$, the remaining factors contribute at most $(2\learn)^p$. Applying \eqref{eq:mult} with
\begin{equation}
 n_0=3\nPlayers p,\qquad n_1=3p,
\end{equation}
and then \eqref{eq:rglob} and \eqref{eq:Kl2T}, we obtain
\begin{align}
 \norm*{\dop_{p+1}R_M^{\remd}}_{\ell_2}
 &\le
 4(\nPlayers+1)^{p+2}(2\learn)^p
 \sqrt{(3\nPlayers p)(3p)}
 \left(
   \sum_{j=1}^{\nPlayers}
   \norm*{\remd_j}_{\ell_2}^2
 \right)^{1/2}
 \\
 &\le
 12\sqrt{300000}\,
 p\sqrt{\nPlayers}\,
 (\nPlayers+1)^{p+3}(2\learn)^p
 \learn^{3/2}\sqrt{P+G}
 \\
 &\le
 6600p\sqrt{\nPlayers}\,
 (\nPlayers+1)^{p+3}(2\learn)^p
 \learn^{3/2}\sqrt{P+G},
\end{align}
where $12\sqrt{300000}<6600$. For $R_M^{\perr}$, the revealed term is $-\jac_j\diff\perr_j$. After singling out $\diff\perr_j$, there are $p+1$ Jacobian factors and at most $p$ payoff factors, so the remaining product is at most
\begin{equation}
 \learn^{p+1}2^p
 =
 \frac12(2\learn)^{p+1}.
\end{equation}
Using again $n_0=3\nPlayers p$, $n_1=3p$, together with \eqref{eq:mult}, \eqref{eq:deglob}, and \eqref{eq:Kl2T}, gives
\begin{align}
 \norm*{\dop_{p+1}R_M^{\perr}}_{\ell_2}
 &\le
 4(\nPlayers+1)^{p+2}
 \frac12(2\learn)^{p+1}
 \sqrt{(3\nPlayers p)(3p)}
 \left(
   \sum_{j=1}^{\nPlayers}
   \norm*{\diff\perr_j}_{\ell_2}^2
 \right)^{1/2}
 \\
 &\le
 12p\sqrt{\nPlayers}\,
 (\nPlayers+1)^{p+2}(2\learn)^{p+1}\sqrt E.
\end{align}
Finally, $R_M^\score$ contains the repeated player terms $\jac_j\diff\score_{j,\cdot-1/2}$. We pair this factor with the existing Jacobian carrying the same player label and apply \eqref{eq:repglob}, and the remaining factors contribute at most $2(2\learn)^{p-1}$. Thus \eqref{eq:mult}, \eqref{eq:repglob}, and \eqref{eq:Kl2T} give
\begin{align}
 \norm*{\dop_{p+1}R_M^\score}_{\ell_2}
 &\le
 4(\nPlayers+1)^{p+2}
 \,2(2\learn)^{p-1}
 \sqrt{(3\nPlayers p)(3p)}
 \sqrt{90}\,\learn^{3/2}\sqrt{P+G}
 \\
 &\le
 240p\sqrt{\nPlayers}\,
 (\nPlayers+1)^{p+2}(2\learn)^{p-1}
 \learn^{3/2}\sqrt{P+G},
\end{align}
where $24\sqrt{90}<240$.
We now put the three $P+G$ terms on the same scale. By \eqref{eq:eta},
\begin{equation}
 2(\nPlayers+1)\learn\le\frac1{120},
\end{equation}
so the $R_M^{\remd}$ bound becomes
\begin{align}
 &6600p\sqrt{\nPlayers}\,
 (\nPlayers+1)^{p+3}(2\learn)^p
 \learn^{3/2}\sqrt{P+G}
 \\
 &\qquad\le
 55p\sqrt{\nPlayers}\,
 (\nPlayers+1)^{p+2}(2\learn)^{p-1}
 \learn^{3/2}\sqrt{P+G}.
\end{align}
Also, since $1\le p\le\nPlayers$, $\sqrt p\le p\sqrt{\nPlayers}$. Therefore the three coefficients of the common $P+G$ factor are bounded by
\begin{equation}
 520+55+240=815.
\end{equation}
Adding the five pieces proves \eqref{eq:term}.
\end{proof}

\begin{proposition}[Prediction error bound]\label{prop:err}
If \eqref{eq:eta} holds, then, for every $\horizon\ge1$,
\begin{equation}\label{eq:esqrt}
 \sqrt{\EE(\horizon)}
 \le
 5\sqrt{\nPlayers}\,(\nPlayers+1)
 +2000(\nPlayers+1)^5\learn^{3/2}
 \sqrt{\PV(\horizon)+\NV(\horizon)}
 +\frac3{10}\sqrt{\EE(\horizon)}.
\end{equation}
\end{proposition}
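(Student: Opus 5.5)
The plan is to start from the predictor identity \eqref{eq:derr}, which gives $\perr_i=\dop_0\cpay_i$ since $\dop_0=\dsum^{\nPlayers+1}\diff^{\nPlayers+1}$. We then peel off the first difference via \eqref{eq:Krec}: $\perr_i=\dop_1\diff\cpay_i$. By \eqref{eq:gbdry} and \cref{lem:paydiff}, $\diff\cpay_i$ equals an initialization term $\mathbf 1_{\{t=1\}}\gext_{i,0}$ plus at most $2\nPlayers-1$ strategy variations $C\diff\strat_j$. Each label occurs at most twice, and each coefficient satisfies \eqref{eq:Ccontr}. Substituting \eqref{eq:cont0} splits every $\diff\strat_j$ into three pieces:
\begin{itemize}
\item $\remd_j$, controlled via \eqref{eq:rglob};
\item $-\jac_j\diff\perr_j$, controlled via \eqref{eq:Jbd} and \eqref{eq:deglob};
\item a continuing term $C\jac_j\cpay_j$, which has one Jacobian, one payoff factor and at most $\nPlayers-1$ strategy terms with distinct labels and offsets in $\{0,1\}$.
\end{itemize}
These continuing terms satisfy the hypotheses of \cref{lem:step} with $p=1$. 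The order-$0$ controlled pieces are bounded exactly as in the proof of \cref{lem:step}, using \eqref{eq:Kl2T}, \eqref{eq:KptT} and \eqref{eq:mult}. They give an initialization contribution of order $(\nPlayers+1)$ per player, a $\PV+\NV$ contribution of order $\sqrt{\nPlayers}(\nPlayers+1)^{3}\learn^{3/2}$, and an $\EE$ contribution of order $\sqrt{\nPlayers}(\nPlayers+1)^2\learn\sqrt{\EE}$.

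Next I would iterate. Writing $\cont_p$ for the multiset of continuing terms at order $p$, \eqref{eq:Krec} gives $\dop_pM=\dop_{p+1}\diff M$. \Cref{lem:step} then yields
\begin{equation}
 \sum_{M\in\cont_p}\dop_pM=\sum_{M\in\cont_p}\dop_{p+1}R_M+\sum_{M'\in\cont_{p+1}}\dop_{p+1}M'.
\end{equation}
The recursion terminates because $\cont_{\nPlayers+1}=\varnothing$. With $|\cont_1|\le2\nPlayers-1$ per player and \eqref{eq:cont}, we get $|\cont_p|\le(2\nPlayers)\prod_{q<p}3q(\nPlayers-q)$. Summing \eqref{eq:term} over $M\in\cont_p$ and then over $p$ bounds $\norm{\perr_i}_{\ell_2}$. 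Squaring and summing over $i$ gives $\sqrt{\EE(\horizon)}$ with an extra factor $\sqrt{\nPlayers}$ (triangle inequality in $\ell_2$ across players).

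The main obstacle is showing that this sum over $p$ is geometric rather than factorial. The count $\prod_q 3q(\nPlayers-q)$ grows like $(\nPlayers!)^2 3^p$, while \eqref{eq:term} only supplies $(\nPlayers+1)^{p}(2\learn)^p$. The ratio between consecutive levels is then $3p(\nPlayers-p)\cdot2\learn(\nPlayers+1)\le\tfrac32\nPlayers^2(\nPlayers+1)\learn\le\tfrac1{40}$ by \eqref{eq:eta}. This matches \eqref{eq:contraction-overview}, so the factor $(\nPlayers+1)$ per level from $\dop_{p+1}$ versus $\dop_p$ is paid by one $2\learn$ per new Jacobian. Concretely, I would bound $|\cont_p|(\nPlayers+1)^{p}(2\learn)^{p-1}\lesssim\nPlayers(\nPlayers+1)\,40^{-(p-1)}$, so the sums over $p$ are dominated by $p=1$ up to a constant like $40/39$.

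Collecting terms, the three parts of \eqref{eq:term} produce three contributions:
\begin{itemize}
\item the initialization constant $5\sqrt{\nPlayers}(\nPlayers+1)$;
\item the coefficient $2000(\nPlayers+1)^5\learn^{3/2}$ on $\sqrt{\PV+\NV}$, where the extra powers come from $(\nPlayers+1)^{p+2}$, $p\sqrt{\nPlayers}$ and the branch counts at $p=1$;
\item an $\EE$ coefficient of order $\nPlayers^{3/2}(\nPlayers+1)^{3}\learn^2\cdot 12\cdot2\cdot4$.
\end{itemize}
Under \eqref{eq:eta}, the last is below $3/10$. I would verify these numerical constants only at the end, loosely tracking that $\learn(\nPlayers+1)^3\le1/60$ absorbs every polynomial factor in the $\EE$ term.
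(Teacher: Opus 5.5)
Your proposal is correct and follows essentially the paper's own proof. You use the same decomposition $\perr_i=\dop_1 b_i+\dop_1(A_i^{\remd}-A_i^{\perr}+A_i^{\cpay})$, iterate \cref{lem:step} with the $1/40$ per-level contraction, and pick up a final $\sqrt{\nPlayers}$ from summing over players. The only organizational difference is that the paper first bounds $\dop_1 M$ for a single order-one term, obtaining \eqref{eq:init} with $\sum_p p\,40^{-(p-1)}=(40/39)^2<11/10$, and then multiplies by the $2\nPlayers-1$ such terms, whereas you track one aggregated multiset. In the final bookkeeping, note that the dominant $\sqrt{\EE}$ coefficient comes from the order-zero piece, $8\sqrt{2\nPlayers(2\nPlayers-1)}(\nPlayers+1)^2\learn<4/15$, not from the continuing terms, which add less than $1/60$; together these fit under $3/10$.
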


\begin{proof}
We first iterate \cref{lem:step}. Every continuing term contains at least one payoff term and therefore vanishes for $t\le0$. Let $M$ satisfy the hypotheses of \cref{lem:step} with $p=1$, set $\Fset_1=\{M\}$, and recursively define
\begin{equation}
 \Fset_{p+1}
 =
 \biguplus_{M'\in\Fset_p}\cont(M'),
\end{equation}
so that $\Fset_p$ is the multiset of continuing terms containing $p$ distinct Jacobian labels. Since $\dop_p=\dop_{p+1}\diff$ by \eqref{eq:Krec}, iterating \eqref{eq:step} and using $\cont(M')=\varnothing$ at $p=\nPlayers$ gives
\begin{equation}
 \dop_1M
 =
 \sum_{p=1}^{\nPlayers}
 \sum_{M'\in\Fset_p}
 \dop_{p+1}R_{M'}.
\end{equation}
We next bound the size of the continuing families. By \eqref{eq:cont},
\begin{equation}
 |\Fset_{p+1}|
 \le
 3p(\nPlayers-p)|\Fset_p|.
\end{equation}
At the same time, passing from order $p$ to $p+1$ multiplies each of the weights appearing in \eqref{eq:term} by $2(\nPlayers+1)\learn$. Hence
\begin{align}
 6p(\nPlayers-p)(\nPlayers+1)\learn
 &\le
 \frac32\nPlayers^2(\nPlayers+1)\learn
 \\
 &\le
 \frac1{40},
\end{align}
where the last inequality follows from \eqref{eq:eta}. Starting from $|\Fset_1|=1$, induction therefore gives
\begin{align}
 |\Fset_p|(\nPlayers+1)^{p+2}(2\learn)^{p-1}
 &\le
 (\nPlayers+1)^3
 \left(\frac1{40}\right)^{p-1},
 \\
 |\Fset_p|(\nPlayers+1)^{p+1}(2\learn)^p
 &\le
 (\nPlayers+1)^2(2\learn)
 \left(\frac1{40}\right)^{p-1}.
\end{align}
Since
\begin{equation}
 \sum_{p\ge1}
 p\left(\frac1{40}\right)^{p-1}
 =
 \left(\frac{40}{39}\right)^2
 <
 \frac{11}{10},
\end{equation}
summing \eqref{eq:term} over all $M'\in\Fset_p$ and all $p$ yields
\begin{align}
 \norm*{\dop_1M}_{\ell_2}
 &\le
 5(\nPlayers+1)^2\learn
 \\
 &\quad+
 900\sqrt{\nPlayers}\,(\nPlayers+1)^3
 \learn^{3/2}\sqrt{\PV(\horizon)+\NV(\horizon)}
 \\
 &\quad+
 53\sqrt{\nPlayers}\,(\nPlayers+1)^3
 \learn^2\sqrt{\EE(\horizon)}.
 \label{eq:init}
\end{align}
Indeed, the three numerical constants follow from
\begin{equation}
 2\cdot2\cdot\frac{11}{10}<5,\qquad
 815\cdot\frac{11}{10}<900,\qquad
 12\cdot4\cdot\frac{11}{10}<53.
\end{equation}
We now apply this bound to the prediction error. Define
\begin{equation}
 b_{i,t}
 =
 \mathbf1_{\{t=1\}}\gext_{i,0},
 \qquad
 A_i^{\remd}
 =
 \sum_{j\ne i}C_{ij}^{\payv}\remd_j
 +
 \sum_{j=1}^{\nPlayers}C_{ij}^{\pay}\remd_j,
\end{equation}
and
\begin{equation}
 A_i^{\perr}
 =
 \sum_{j\ne i}C_{ij}^{\payv}\jac_j\diff\perr_j
 +
 \sum_{j=1}^{\nPlayers}C_{ij}^{\pay}\jac_j\diff\perr_j,
 \qquad
 A_i^{\cpay}
 =
 \sum_{j\ne i}C_{ij}^{\payv}\jac_j\cpay_j
 +
 \sum_{j=1}^{\nPlayers}C_{ij}^{\pay}\jac_j\cpay_j.
\end{equation}
Then \eqref{eq:Krec}, \eqref{eq:gbdry}, \eqref{eq:paydiff}, \eqref{eq:mir1}, and \eqref{eq:ydiff} give
\begin{equation}\label{eq:errdec}
 \perr_i
 =
 \dop_1b_i
 +
 \dop_1(A_i^{\remd}-A_i^{\perr}+A_i^{\cpay}).
\end{equation}
We bound the four contributions in \eqref{eq:errdec}. First, \eqref{eq:KptT} and \eqref{eq:gextbd} give
\begin{equation}
 \left(
   \sum_i\norm*{\dop_1b_i}_{\ell_2}^2
 \right)^{1/2}
 \le
 4\sqrt{\nPlayers}\,(\nPlayers+1).
\end{equation}
Next, each of $A_i^{\remd}$ and $A_i^{\perr}$ contains $2\nPlayers-1$ terms, and by \cref{lem:paydiff} each fixed player label occurs at most twice. Thus \eqref{eq:mult} may be applied with
\begin{equation}
 n_0=2\nPlayers-1,\qquad n_1=2.
\end{equation}
Using \eqref{eq:Kl2T} and \eqref{eq:rglob}, and then taking the $\ell_2$ norm over $i$, gives
\begin{align}
 \left(
   \sum_i\norm*{\dop_1A_i^{\remd}}_{\ell_2}^2
 \right)^{1/2}
 &\le
 4\sqrt{300000}\,
 \sqrt{2\nPlayers(2\nPlayers-1)}
 (\nPlayers+1)^3
 \learn^{3/2}
 \sqrt{\PV(\horizon)+\NV(\horizon)}
 \\
 &<
 2200\sqrt{2\nPlayers(2\nPlayers-1)}
 (\nPlayers+1)^3
 \learn^{3/2}
 \sqrt{\PV(\horizon)+\NV(\horizon)},
\end{align}
where $4\sqrt{300000}<2200$.
Similarly, using \eqref{eq:Jbd} and \eqref{eq:deglob},
\begin{align}
 \left(
   \sum_i\norm*{\dop_1A_i^{\perr}}_{\ell_2}^2
 \right)^{1/2}
 &\le
 4\cdot2\,
 \sqrt{2\nPlayers(2\nPlayers-1)}
 (\nPlayers+1)^2
 \learn\sqrt{\EE(\horizon)}
 \\
 &=
 8\sqrt{2\nPlayers(2\nPlayers-1)}
 (\nPlayers+1)^2
 \learn\sqrt{\EE(\horizon)}.
\end{align}
Finally, $A_i^{\cpay}$ is a sum of $2\nPlayers-1$ terms satisfying the hypotheses of \cref{lem:step} with $p=1$. Applying \eqref{eq:init} to each such term, summing them, and then taking the $\ell_2$ norm over $i$, gives the three contributions
\begin{align}
 &5\sqrt{\nPlayers}(2\nPlayers-1)
   (\nPlayers+1)^2\learn,
 \\
 &900\nPlayers(2\nPlayers-1)
   (\nPlayers+1)^3\learn^{3/2}
   \sqrt{\PV(\horizon)+\NV(\horizon)},
 \\
 &53\nPlayers(2\nPlayers-1)
   (\nPlayers+1)^3\learn^2
   \sqrt{\EE(\horizon)}.
\end{align}
It remains only to combine these bounds. For the initialization terms, \eqref{eq:eta} and $2\nPlayers-1\le(\nPlayers+1)^2$ imply
\begin{align}
 &4\sqrt{\nPlayers}\,(\nPlayers+1)
 +5\sqrt{\nPlayers}(2\nPlayers-1)
   (\nPlayers+1)^2\learn
 \\
 &\qquad\le
 \left(4+\frac1{12}\right)
 \sqrt{\nPlayers}\,(\nPlayers+1)
 <
 5\sqrt{\nPlayers}\,(\nPlayers+1).
\end{align}
For the coefficient of $\sqrt{\PV(\horizon)+\NV(\horizon)}$, we use
\begin{equation}
 \sqrt{2\nPlayers(2\nPlayers-1)}
 \le2\nPlayers,
 \qquad
 2\nPlayers-1\le2\nPlayers.
\end{equation}
Hence
\begin{align}
 &2200\sqrt{2\nPlayers(2\nPlayers-1)}
 +900\nPlayers(2\nPlayers-1)
 \\
 &\qquad\le
 4400\nPlayers+1800\nPlayers^2
 \\
 &\qquad\le
 2000(\nPlayers+1)^2,
\end{align}
because
\begin{equation}
 2000(\nPlayers+1)^2
 -(1800\nPlayers^2+4400\nPlayers)
 =
 200(\nPlayers^2-2\nPlayers+10)>0.
\end{equation}
After restoring the common factor $(\nPlayers+1)^3\learn^{3/2}$, this gives
\begin{equation}
 2000(\nPlayers+1)^5\learn^{3/2}
 \sqrt{\PV(\horizon)+\NV(\horizon)}.
\end{equation}
Finally, for the coefficient of $\sqrt{\EE(\horizon)}$, \eqref{eq:eta} gives
\begin{align}
 8\sqrt{2\nPlayers(2\nPlayers-1)}
 (\nPlayers+1)^2\learn
 &\le
 \frac{4\nPlayers}{15(\nPlayers+1)}
 <
 \frac4{15},
 \\
 53\nPlayers(2\nPlayers-1)
 (\nPlayers+1)^3\learn^2
 &\le
 \frac{106\nPlayers^2}
 {60^2(\nPlayers+1)^3}
 \le
 \frac{53}{3600}
 <
 \frac1{60},
\end{align}
where we used
\begin{equation}
 \frac{\nPlayers^2}{(\nPlayers+1)^3}
 \le
 \frac1{\nPlayers+1}
 \le
 \frac12.
\end{equation}
Since
\begin{equation}
 \frac4{15}+\frac1{60}
 =
 \frac{17}{60}
 <
 \frac3{10},
\end{equation}
the total $\sqrt{\EE(\horizon)}$ contribution is at most $\frac3{10}\sqrt{\EE(\horizon)}$. Combining the three bounds proves \eqref{eq:esqrt}.
\end{proof}

\subsection{Proof of the main theorem}\label{app:proof}

We use the following bound on $\rrange$.

\begin{lemma}[Regularizer range]\label{lem:osc}
\begin{equation}\label{eq:osc}
 \rrange=\max_{\Kset}\lreg-\min_{\Kset}\lreg
 \le\frac{13}{6}\Lreg.
\end{equation}
\end{lemma}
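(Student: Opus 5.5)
We bound $\max_{\Kset}\lreg$ and $\min_{\Kset}\lreg$ separately, writing every nonzero $z\in\Kset$ as $z=\mass\strat$ with $\mass\in(0,1]$ and $\strat\in\istrats$, and recalling $\lreg(0)=0$.

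For the maximum, the entropy bounds \eqref{eq:hrange} give $\hreg(\strat)\le0$, so $(\mass+\sqrt\mass)\hreg(\strat)\le0$. The mass term $-3\Lreg\sqrt{\mass(1-\mass)}$ is also nonpositive. Hence $\lreg\le0$ on $\Kset$, and $\max_{\Kset}\lreg=\lreg(0)=0$.

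For the minimum, \eqref{eq:hrange} gives $\hreg(\strat)\ge-\log\nPures$, and $\sqrt{\mass(1-\mass)}\le\frac12$. Therefore
\begin{equation}
 \lreg(\mass\strat)\ge-(\mass+\sqrt\mass)\log\nPures-3\Lreg\sqrt{\mass(1-\mass)}\ge-2\log\nPures-\frac32\Lreg.
\end{equation}
This already yields $\rrange\le2\log\nPures+\frac32\Lreg$, so it remains to compare $2\log\nPures$ with $\frac23\Lreg$, i.e.\ to check $3\log\nPures\le\Lreg=4+\log\nPures+\frac14\log^2\nPures$. Setting $u=\log\nPures$, this is $\frac14u^2-2u+4=\frac14(u-4)^2\ge0$, which always holds. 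Combining, $\rrange\le\frac32\Lreg+\frac23\Lreg=\frac{13}{6}\Lreg$.

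I expect no real obstacle. The only point to check is that the crude decoupled bound (maximizing $\mass+\sqrt\mass\le2$ and $\sqrt{\mass(1-\mass)}\le\frac12$ separately, even though they peak at different $\mass$) is sharp enough to give the constant $\frac{13}{6}$. The perfect-square identity shows it is, so no joint optimization over $\mass$ is needed.
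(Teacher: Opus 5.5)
Your proof is correct and follows the paper's argument essentially line for line. Nonpositivity of both terms gives $\max_{\Kset}\lreg=0$, and the decoupled bounds $\mass+\sqrt\mass\le2$, $\hreg\ge-\log\nPures$, $\sqrt{\mass(1-\mass)}\le\frac12$, combined with the perfect-square identity $\Lreg-3\log\nPures=\frac14(\log\nPures-4)^2\ge0$, yield the constant $\frac{13}{6}$ exactly as in the paper.
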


\begin{proof}
Both terms in \eqref{eq:lreg} are nonpositive, so $\max_{\Kset}\lreg=0$. Since
\begin{equation}
 \Lreg-3\log \nPures=\frac14(\log \nPures-4)^2\ge0,
\end{equation}
we have $\log \nPures/\Lreg\le1/3$. Hence
\begin{equation}
 \frac{-\lreg(\mass \strat)}\Lreg
 \le
 \frac{\log \nPures}{\Lreg}(\mass+\sqrt\mass)
 +3\sqrt{\mass(1-\mass)}
 \le\frac23+\frac32=\frac{13}{6}.
\end{equation}
This proves \eqref{eq:osc}.
\end{proof}

\begin{proof}[Proof of \cref{thm:main}]
From \eqref{eq:social}, \eqref{eq:Nerr}, and $\sum_{i=1}^\nPlayers[\reg_i(\horizon)]_+\ge0$,
\begin{equation}
 \PV(\horizon)+\NV(\horizon)\le\frac{\nPlayers\rrange}{\learn}+\learn \EE(\horizon).
\end{equation}
Substituting this into \eqref{eq:esqrt} and using $\sqrt{a+b}\le\sqrt a+\sqrt b$ gives
\begin{equation}
 \sqrt{\EE(\horizon)}
 \le5\sqrt \nPlayers\,(\nPlayers+1)
 +2000(\nPlayers+1)^5\learn\sqrt{\nPlayers\rrange}
 +2000(\nPlayers+1)^5\learn^2\sqrt{\EE(\horizon)}
 +\frac3{10}\sqrt{\EE(\horizon)}.
\end{equation}
By \eqref{eq:osc}, $\Lreg\ge4$, and $(\nPlayers+1)^3\learn\le1/60$,
\begin{align}
 5\sqrt \nPlayers\,(\nPlayers+1)
 &\le\frac58(\nPlayers+1)^3\sqrt \Lreg,\\
 2000(\nPlayers+1)^5\learn\sqrt{\nPlayers\rrange}
 &\le\frac{100}{3}(\nPlayers+1)^2\sqrt{\frac{13\nPlayers\Lreg}6}
 <25(\nPlayers+1)^3\sqrt \Lreg,\\
 2000(\nPlayers+1)^5\learn^2
 &\le\frac5{18}.
\end{align}
For the second inequality, $\nPlayers+1\ge2\sqrt \nPlayers$, so $\sqrt \nPlayers/(\nPlayers+1)\le1/2$, and we used $\sqrt{13/6}<3/2$. Therefore
\begin{equation}
 \sqrt{\EE(\horizon)}
 <26(\nPlayers+1)^3\sqrt \Lreg+\frac{26}{45}\sqrt{\EE(\horizon)}.
\end{equation}
Since $26\cdot45/19<62$,
\begin{equation}\label{eq:E}
 \EE(\horizon)<62^2(\nPlayers+1)^6\Lreg.
\end{equation}
Finally, \eqref{eq:indiv}, \eqref{eq:Npt}, and $\PV_{i,t}\ge0$ give
\begin{equation}
 [\reg_i(\horizon)]_+\le\frac{\rrange}{\learn}+\frac\learn2\EE(\horizon).
\end{equation}
Using \eqref{eq:osc}, \eqref{eq:E}, and $(\nPlayers+1)^3\learn\le1/60$,
\begin{equation}
 [\reg_i(\horizon)]_+
 <\left(\frac{13}{6}+\frac59\right)\frac{\Lreg}{\learn}
 =\frac{49}{18}\frac{\Lreg}{\learn}
 <\frac{3\Lreg}{\learn}.
\end{equation}
This proves \eqref{eq:bd}. Taking $\learn=1/(60(\nPlayers+1)^3)$ gives \eqref{eq:bdmax}, and \eqref{eq:rate} follows.
\end{proof}

\section{Adversarial robustness}\label{app:switch}

For the switching rule stated after \cref{thm:main}, set
\begin{equation}
 B=\frac{3}{\learn}\left(4+\log \nPures+\frac14\log^2\nPures\right).
\end{equation}
Fix any full information adversarial rule whose $k$-th strategy after a fresh start is $q_k$ and whose remaining regret satisfies, for every payoff sequence $w_1,\ldots,w_s\in[-1,1]^\nPures$,
\begin{equation}\label{eq:sufreg}
 \max_{\pure\in\pures}
 \sum_{k=1}^s\braket*{\pure-q_k}{w_k}
 \le \adv(s),
 \qquad
 \adv(s)=o(s).
\end{equation}
Standard examples of such rules include Hedge / exponential weights with well-chosen step sizes.
Put $\reg_i(0)=0$.  Define
\begin{equation}
 \tau_i=\inf\{t\ge1:\reg_i(t)>B\},
 \qquad \inf\varnothing=\infty.
\end{equation}
Run \cref{alg:hood} through round $\tau_i$ and, if $\tau_i<\infty$, restart the adversarial rule from round $\tau_i+1$. The switching rule follows the bounded regret dynamics until the first bound violation and then makes a single permanent switch to the adversarially robust rule.

\begin{proposition}[Switching guarantee]
If every player uses the above rule, then $\tau_i=\infty$ for every player.
Against arbitrary deviations of the other players,
\begin{equation}\label{eq:switch}
 \limsup_{\horizon\to\infty}\frac{\reg_i(\horizon)}\horizon\le0.
\end{equation}
\end{proposition}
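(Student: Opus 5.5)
The statement has two parts, and I will treat them separately.

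\textbf{Part 1: $\tau_i=\infty$ under self-play.} I will argue by contradiction. Suppose some player switches, and let $\tau=\min_i\tau_i<\infty$ be the first switching time over all players.

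On rounds $1,\dots,\tau$, every player is still running \cref{alg:hood}. Each player's strategy at round $t$ depends only on their own past payoffs, since $\tau_i$ is defined by a strict inequality checked after round $\tau_i$. Hence the profile sequence $\strat_1,\dots,\strat_\tau$ coincides with the one produced if all players ran \cref{alg:hood} forever.

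\cref{thm:main}, applied at horizon $\horizon=\tau$, then gives $\reg_i(\tau)\le 3\Lreg/\learn=B$ for every $i$. Note that \cref{thm:main} is a statement about finite horizons and only uses rounds up to $\horizon$, so it applies here. This contradicts $\reg_{i}(\tau)>B$ for the player attaining the minimum. Therefore $\tau_i=\infty$ for all $i$.

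The one point to check carefully is causality: the switch decision at round $\tau_i$ only affects play from round $\tau_i+1$ onward, so the prefix up to $\tau$ is genuinely generated by \cref{alg:hood}.

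\textbf{Part 2: $\limsup_{\horizon}\reg_i(\horizon)/\horizon\le0$ against arbitrary opponents.} Fix $i$ and split into two cases.

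\emph{Case $\tau_i=\infty$.} By the definition of $\tau_i$, $\reg_i(\horizon)\le B$ for every $\horizon$, so $\reg_i(\horizon)/\horizon\le B/\horizon\to0$.

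\emph{Case $\tau_i<\infty$.} For $\horizon>\tau_i$ and any action $\pure$, split the cumulative regret against $\pure$ into the prefix and the suffix:
\begin{equation}
\sum_{t=1}^{\horizon}\cpay_{i\pure,t}
 =\sum_{t=1}^{\tau_i}\cpay_{i\pure,t}
 +\sum_{k=1}^{\horizon-\tau_i}\braket*{\pure-q_k}{\payv_{i,\tau_i+k}}.
\end{equation}
The second sum uses $\cpay_{i\pure,t}=\braket*{\pure-\strat_{i,t}}{\payv_{i,t}}$ with $\strat_{i,t}=q_{t-\tau_i}$ after the switch.

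The first sum is at most $\reg_i(\tau_i)\le\reg_i(\tau_i-1)+2\le B+2$. This uses that the per-round regret increment is at most $2$ by \eqref{eq:gprop}, and that $\reg_i(\tau_i-1)\le B$ by minimality of $\tau_i$, with $\reg_i(0)=0$.

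The second sum is at most $\adv(\horizon-\tau_i)$ by \eqref{eq:sufreg}, since $\payv_{i,t}\in[-1,1]^\nPures$ for every sequence of opponent play.

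Maximizing over $\pure$ gives
\begin{equation}
\reg_i(\horizon)\le B+2+\adv(\horizon-\tau_i).
\end{equation}
Because $\tau_i$ is a fixed finite number and $\adv(s)=o(s)$, the right-hand side is $o(\horizon)$, which gives \eqref{eq:switch}.

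I expect no real obstacle in either part. The only care needed is the causality argument in Part 1 and handling the bounded prefix in Part 2.
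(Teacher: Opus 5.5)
Your proposal is correct and follows the paper's proof essentially step for step. For self-play, you take the first switching time $\tau=\min_j\tau_j$, note that play through round $\tau$ is a trajectory of \cref{alg:hood}, and contradict $\reg_j(\tau)>B$ via \cref{thm:main}. Against arbitrary opponents, you bound the overshoot at the switch by $B+2$ using the per-round increment of at most $2$, then add the adversarial rule's $o(s)$ guarantee on the suffix.

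One phrasing issue in Part 1: the reason the prefix coincides is not that ``$\tau_i$ is defined by a strict inequality''. It is simply that a switch decided at round $\tau_i$ only affects rounds $\tau_i+1$ onward, as you correctly note afterwards.
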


\begin{proof}
By definition of $\tau_i$, for $0\le \horizon<\tau_i$, $\reg_i(\horizon)\le B$.
Moreover, one round can increase external regret by at most $2$, because $|\cpay_{i\pure,t}|\le2$ for every action $\pure$.  Hence, if $\tau_i<\infty$,
\begin{equation}
 \reg_i(\tau_i)\le B+2.
\end{equation}
After round $\tau_i$, the restarted adversarial rule is run on the payoff vectors $\payv_{i,\tau_i+1},\ldots,\payv_{i,\horizon}$.  Since
\begin{equation}
 \braket*{\pure-\strat_{i,t}}{\payv_{i,t}}
 =\pay_i(\pure,\strat_{-i,t})-\pay_i(\strat_t),
\end{equation}
its regret guarantee \eqref{eq:sufreg} gives, for $\horizon>\tau_i$,
\begin{equation}
 \reg_i(\horizon)
 \le \reg_i(\tau_i)+\adv(\horizon-\tau_i)
 \le B+2+\adv(\horizon-\tau_i),
\end{equation}
so the switching rule satisfies the three bounds above for every realized play of the other players. Suppose now that every player uses the switching rule and that a switch occurs at some point for some player.  Let $\tau=\min_j\tau_j$, then up to and including the round $\tau$, no player has switched and so the realized play coincides with a trajectory on which all players use \cref{alg:hood}.  This means that the bound \eqref{eq:bd} implies $\reg_j(\tau)\le B$ for every $j$, contradicting the fact that $\reg_j(\tau)>B$ for any player with $\tau_j=\tau$, hence $\tau_i=\infty$ for every player. Finally, against arbitrary opponents, if $\tau_i=\infty$ then $\reg_i(\horizon)\le B$ for all $\horizon$, and if $\tau_i<\infty$, then for $\horizon>\tau_i$,
\begin{equation}
 \frac{\reg_i(\horizon)}\horizon
 \le\frac{B+2}{\horizon}
 +\frac{\horizon-\tau_i}{\horizon}\frac{\adv(\horizon-\tau_i)}{\horizon-\tau_i}.
\end{equation}
Both terms tend to zero because $\adv(s)=o(s)$, proving \eqref{eq:switch}.
\end{proof}

\begingroup \emergencystretch=1.5em \bibliographystyle{abbrvnat} \bibliography{hood-references} \endgroup

\end{document}